\def\Tr{\text{Tr}}
\def\Nplus{\mathbb{N}_+}
\def\card{\textnormal{card}}
\DeclareMathOperator*{\argmin}{arg\,min}
\def\Re{\mathbb{R}}
\newcommand{\ts}{\textsuperscript}
\def\rank{\text{rank}}
\def\prox{\text{\bf{prox}}}
\def \st{\ \ \textnormal{s.t.} \ \ }
\def \ST{\ \ \textnormal{s.t.} \ \ }
\def\citep{\cite}
\def\citet{\cite}
\newtheorem{theorem}{Theorem}
\newtheorem{corollary}{Corollary}
\newtheorem{definition}{Definition}
\newtheorem{proposition}{Proposition}
\newenvironment{proof}[1][Proof]{\textbf{#1.} }{\ \rule{0.5em}{0.5em}}
\begin{document}

\title{Structured Low-Rank Matrix Factorization: Optimality, Algorithm, and Applications to Image Processing}
\title{Theory and Applications of Structured Matrix Factorization}
\title{Global Optimality in Structured Matrix Factorization}
\title{Structured Low-Rank Matrix Factorization: Global Optimality, Algorithms, and Applications}

\author{Benjamin D. Haeffele,~\IEEEmembership{}
        Ren\'{e} Vidal,~\IEEEmembership{Fellow,~IEEE}
\IEEEcompsocitemizethanks{\IEEEcompsocthanksitem The authors are with the Center for Imaging Science, Department of Bio\-medical Engineering, Johns Hopkins University, Baltimore,
MD, 21218.
}
\thanks{Manuscript received June 30, 2017.}}

\markboth{Journal of \LaTeX\ Class Files,~Vol.~14, No.~8, August~2015}%
{Shell \MakeLowercase{\textit{et al.}}: Bare Advanced Demo of IEEEtran.cls for IEEE Computer Society Journals}

\IEEEtitleabstractindextext{
\begin{abstract}
Recently, convex formulations of low-rank matrix factorization problems have received considerable attention in machine learning. However, such formulations often require solving for a matrix of the size of the data matrix, making it challenging to apply them to large scale datasets. Moreover, in many applications the data can display structures beyond simply being low-rank, e.g., images and videos present complex spatio-temporal structures that are largely ignored by standard low-rank methods. In this paper we study a matrix factorization technique that is suitable for large datasets and captures additional structure in the factors by using a particular form of regularization that includes well-known regularizers such as total variation and the nuclear norm as particular cases. Although the resulting optimization problem is non-convex, we show that if the size of the factors is large enough, under certain conditions, any local minimizer for the factors yields a global minimizer. A few practical algorithms are also provided to solve the matrix factorization problem, and bounds on the distance from a given approximate solution of the optimization problem to the global optimum are derived. Examples in neural calcium imaging video segmentation and hyperspectral compressed recovery show the advantages of our approach on high-dimensional datasets.
\end{abstract}

\begin{IEEEkeywords}
Low-rank matrix factorization, non-convex optimization, calcium imaging, hyperspectral compressed recovery.
\end{IEEEkeywords}}

\maketitle

\IEEEdisplaynontitleabstractindextext
\IEEEpeerreviewmaketitle

\sloppy

\ifCLASSOPTIONcompsoc
\IEEEraisesectionheading{\section{Introduction}\label{sec:introduction}}
\else
\section{Introduction}
\label{sec:introduction}
\fi
\vspace{-0.2mm}

In many large datasets, relevant information often lies in a subspace of much lower dimension than the ambient space, and thus the goal of many learning algorithms can be broadly interpreted as trying to find or exploit this underlying ``structure" that is present in the data.  One structure that is particularly useful both due to its wide-ranging applicability and efficient computation is the linear subspace model.  Generally speaking, if one is given $N$ data points from a $D$ dimensional ambient space, $Y = [Y_1, \ Y_2, \ \ldots, Y_N] \in \Re^{D \times N}$, a linear subspace model simply implies that there exists matrices $(U,V)$ such that $Y \approx UV^T$.  When one of the factors is known \textit{a priori}, the problem of finding the other factor simplifies considerably, but if both factors are allowed to be arbitrary one can always find an infinite number of $(U,V)$ matrices that yield the same product.  As a result, to accomplish anything meaningful, one must impose some restrictions on the factors. This idea leads to a variety of common matrix factorization techniques.  A few well known examples are the following:
\begin{itemize}
\item \textbf{Principal Component Analysis (PCA):} The number of columns, $r$, in $(U,V)$ is typically constrained to be small, e.g., $r \ll \min \{D,N\}$, and $U$ is constrained to have orthonormal columns, i.e., $U^TU = I$.
\item \textbf{Nonnegative Matrix Factorization (NMF):} The number of columns in $(U,V)$ is constrained to be small, and $(U,V)$ are required to be non-negative \cite{Lee:NIPS2001,Lee:Nature1999}.
\item \textbf{Sparse Dictionary Learning (SDL):} The number of columns in $(U,V)$ is allowed to be larger than $D$ or $N$, the columns of $U$ are required to have unit $l_2$ norm, and $V$ is required to be sparse as measured by, e.g., the $l_1$ norm or the $l_0$ pseudo-norm \cite{Elad:TSP06,Mairal:JMLR2010}.\footnotemark[1]
\end{itemize}
\footnotetext[1]{As a result, in SDL, one does not assume that there exists a single low-dimensional subspace to model the data, but rather that the data lie in a union of a large number of low-dimensional subspaces.}
\setcounter{footnote}{1}

Mathematically, the general problem of recovering structured linear subspaces from a dataset can be captured by a \textit{\textbf{structured matrix factorization problem}} of the form
\begin{equation}
\label{fact_reg_loss}
\min_{U,V} \ell(Y,UV^T) + \lambda \Theta(U,V),
\end{equation}
where $\ell$ is some \textit{\textbf{loss function}} that measures how well $Y$ is approximated by $UV^T$ and $\Theta$ is a \textit{\textbf{regularizer}} that encourages or enforces specific properties in $(U,V)$.  By taking an appropriate combination of $\ell$ and $\Theta$ one can formulate both unsupervised learning techniques, such as PCA, NMF, and SDL, or supervised learning techniques like discriminative dictionary learning \cite{Jiang:CVPR2011,Mairal:NIPS2009} and learning max-margin factorized classifiers \cite{Srebro:NIPS2004}.  However, while there are wide-ranging applications for structured matrix factorization methods that have achieved good empirical success, the associated optimization problem \eqref{fact_reg_loss} is non-convex regardless of the choice of $\ell$ and $\Theta$ functions due to the presence of the matrix product $UV^T$.  As a result, aside from a few special cases (such as PCA), finding solutions to \eqref{fact_reg_loss} poses a significant challenge, which often requires one to instead consider approximate solutions that depend on a particular choice of initialization and optimization method.

Given the challenge of non-convex optimization, one possible approach to matrix factorization is to relax the non-convex problem into a problem which is convex on the product of the factorized matrices, $X=UV^T$, and then recover the factors of $X$ after solving the convex relaxation.  As a concrete example, in low-rank matrix factorization, one might be interested in solving a problem of the form
\begin{equation}
\label{eq:low_rank_X}
\min_X \ell(Y,X) \st \rank(X) \leq r,
\end{equation}
which is equivalently defined as a factorization problem
\begin{equation}
\label{eq:low_rank_fact}
\min_{U,V} \ell(Y,UV^T),
\end{equation}
where the rank constraint is enforced by limiting the number of columns in $(U,V)$  to be less than or equal to $r$.  However, aside from a few special choices of $\ell$, solving \eqref{eq:low_rank_X} or \eqref{eq:low_rank_fact} is in general an NP-hard problem.  Instead, one can relax \eqref{eq:low_rank_X} into a convex problem by using a convex regularizer that promotes low-rank solutions, such as the nuclear norm $\|X\|_*$ (sum of the singular values of $X$), and then solve
\begin{equation}
\label{eq:priorwork_nuclear_norm}
\min_X \ell(Y,X) + \lambda \|X\|_*,
\end{equation}
which can often be done efficiently if $\ell(Y,X)$ is convex with respect to $X$ \cite{Cai:SJO08,Recht:SIAM10}.  Given a solution to \eqref{eq:priorwork_nuclear_norm}, $\hat{X}$, it is then simple to find a low-rank factorization $UV^T = \hat{X}$ via a singular value decomposition.  Unfortunately,  while the nuclear norm provides a nice convex relaxation for low-rank matrix factorization problems, nuclear norm relaxation does not capture the full generality of problems such as \eqref{fact_reg_loss} as it does not necessarily ensure that $\hat{X}$ can be factorized as $\hat{X}=UV^T$ for some $(U,V)$ pair which has the desired structure encouraged by $\Theta(U,V)$ (e.g., in non-negative matrix factorization we require $U$ and $V$ to be non-negative), nor does it provide a means to find the desired factors.

Based on the above discussion, optimization problems in the factorized space, such as \eqref{fact_reg_loss}, versus problems in the product space, with \eqref{eq:priorwork_nuclear_norm} as a particular example, both present various advantages and disadvantages.  Factorized problems attempt to solve for the desired factors $(U,V)$ directly, provide significantly increased modeling flexibility by permitting one to model structure on the factors (sparsity, non-negativity, etc.), and allow one to potentially work with a significantly reduced number of variables if the number of columns in $(U,V)$ is $\ll \min \{D, N \}$; however, they suffer from the significant challenges associated with non-convex optimization.  On the other hand, problems in the product space can be formulated to be convex, which affords many practical algorithms and analysis techniques, but one is required to optimize over a potentially large number of variables and solve a second factorization problem in order to recover the factors $(U,V)$ from the solution $X$.  These various pros and cons are briefly summarized in Table \ref{table:pro_con}.

\begin{table}[ht]
\caption[Typical properties of problems in the factorized vs product space.]{Typical properties of problems in the factorized vs product space. (Items in bold are desirable.)}
\vspace{-1mm}
\label{table:pro_con}
\centering
\begin{tabular}{c | c | c}
\hline
   & Product Space $(X)$ &  Factorized Space $(U,V)$ \\
\hline
Convex  & \textbf{Yes} & No \\
Problem Size & Large & \textbf{Small} \\
Structured Factors & No & \textbf{Yes} \\
\hline
\end{tabular}
\end{table}

To bridge this gap between the two classes of problems, here we explore the link between non-convex matrix factorization problems, which have the general form
\begin{equation}
	\label{eq:gen_fact}
	\textnormal{\textbf{Factorized Problems:}} \ \ \min_{U,V} \ell(Y,UV^T) + \lambda \Theta(U,V),
\end{equation}
and a closely related family of convex problems in the product space, given by
\begin{equation}
	\label{eq:gen_convex}
	\textnormal{\textbf{Convex Problems:}} \ \ \min_{X} \ell(Y,X) + \lambda \Omega_{\Theta}(X),
\end{equation}
where the function $\Omega_{\Theta}$ will be defined based on the choice of the regularization function $\Theta$ and will have the desirable property of being a convex function of $X$.  Unfortunately, while the optimization problem in \eqref{eq:gen_convex} is convex w.r.t. $X$, it will typically be non-tractable to solve.  Moreover, even if a solution to \eqref{eq:gen_convex} could be found, solving a convex problem in the product space does not necessarily achieve our goal, as we still must solve another matrix factorization problem to recover the $(U,V)$ factors with the desired properties encouraged by the $\Theta$ function (sparsity, non-negativity, etc.).  Nevertheless, the two problems given by \eqref{eq:gen_fact} and \eqref{eq:gen_convex} will be tightly coupled.  Specifically, the convex problem in \eqref{eq:gen_convex} will be shown to be a global lower-bound to the non-convex factorized problem in \eqref{eq:gen_fact}, and solutions $(U,V)$ to the factorized problem will yield solutions $X=UV^T$ to the convex problem.  As a result, we will tailor our results to the non-convex factorization problem \eqref{eq:gen_fact} using the convex problem \eqref{eq:gen_convex} as an analysis tool. While the optimization problem in the factorized space is not convex, by analyzing this tight interconnection between the two problems, we will show that if the number of columns in $(U,V)$ is large enough and can be adapted to the data instead of being fixed \textit{a priori}, local minima of the non-convex factorized problem will be global minima of both the convex and non-convex problems. This result will lead to a practical optimization strategy that is parallelizable and often requires a much smaller set of variables. Experiments in image processing applications will illustrate the effectiveness of the proposed approach.

\section{Mathematical Background \& Prior Work}  
\label{sec:PTN}

As discussed before, relaxing low-rank matrix factorization problems via nuclear norm formulations fails to capture the full generality of factorized problems as it does not yield ``structured" factors, $(U,V)$, with the desired properties encouraged by $\Theta(U,V)$ (sparseness, non-negativity, etc.).  To address this issue, several studies have explored a more general convex relaxation via the matrix norm given by
\begin{equation}
\label{eq:tensor_norm}
\begin{split}
\|X\|_{u,v} &\equiv \inf_{r \in \Nplus} \ \inf_{U,V : UV^T=X} \sum_{i=1}^r \|U_i\|_u \|V_i\|_v \\
	&\equiv \inf_{r \in \Nplus} \ \inf_{U,V : UV^T=X} \sum_{i=1}^r \tfrac{1}{2} (\|U_i\|_u^2+\|V_i\|_v^2) ,
\end{split}
\end{equation}
where $(U_i,V_i)$ denote the $i$\ts{th} columns of $U$ and $V$, respectively, $\|\cdot\|_u$ and $\|\cdot\|_v$ are arbitrary vector norms, and the number of columns, $r$, in the $U$ and $V$ matrices is allowed to be variable \cite{Ryan:2002,bach08,bach13,Haeffele:ICML14,Chandrasekaran:FoundCompMath2012}.  The norm in \eqref{eq:tensor_norm} has appeared under multiple names in the literature, including the projective tensor norm, decomposition norm, and atomic norm. It is worth noting that for particular choices of the $\|\cdot\|_u$ and $\|\cdot\|_v$ vector norms, $\|X\|_{u,v}$ reverts to several well known matrix norms and thus provides a generalization of many commonly used regularizers.  Notably, when the vector norms are both $l_2$ norms, the form in \eqref{eq:tensor_norm} becomes the well known variational definition of the nuclear norm \cite{Recht:SIAM10}:
\begin{align}
	\|X\|_* = \|X\|_{2,2} &\equiv \inf_{r \in \Nplus} \ \inf_{U,V : UV^T=X} \sum_{i=1}^r \|U_i\|_2 \|V_i\|_2 \\
	&\equiv \inf_{r \in \Nplus} \ \inf_{U,V : UV^T=X} \sum_{i=1}^r \tfrac{1}{2} (\|U_i\|_2^2+\|V_i\|_2^2).\nonumber
\end{align}
Moreover, by replacing the column norms in \eqref{eq:tensor_norm} with gauge functions one can incorporate additional regularization on $(U,V)$, such as non-negativity, while still being a convex function of $X$ \citep{bach13}. Recall that given a closed, convex set $C$ containing the origin, a gauge function, $\sigma_C(x)$, is defined as,
\begin{equation}
	\sigma_C(x) \equiv \inf_\mu \mu \st \mu x \in C.
\end{equation}
Further, recall that all norms are gauge functions (as can be observed by choosing $C$ to be the unit ball of a norm), but gauge functions are a slight generalization of norms, since they satisfy all the properties of a norm except that they are not required to be invariant to non-negative scaling (i.e., it is not required that $\sigma_C(x)$ be equal to $\sigma_C(-x)$).  Finally, note that given a gauge function, $\sigma_C$, its polar, $\sigma_C^\circ$, is defined as
\begin{equation}
		\sigma_C^\circ(z) \equiv \sup_x \left<z,x\right> \st \sigma_C(x)\leq 1,
\end{equation}
which itself is also a gauge function. In the case of a norm, its polar function is often referred to as the dual norm.

\subsection{Matrix Factorization as Semidefinite Optimization}
\label{sec:SDP}
Due to the increased modeling opportunities it provides, several studies have explored structured matrix factorization formulations based on the $\|\cdot\|_{u,v}$ norm in a way that allows one to work with a highly reduced set of variables while still providing some guarantees of global optimality.  In particular, it is possible to explore optimization problems over factorized matrices $(U,V)$ of the form
\begin{equation}
\label{eq:tensor_norm_obj}
\min_{U,V} \ell(Y,UV^T)+\lambda \|UV^T\|_{u,v}.
\end{equation}

While this problem is convex with respect to the product $X=UV^T$, it is still non-convex with respect to $(U,V)$ jointly due to the matrix product. However, if we define a matrix $\Gamma$ to be the concatenation of $U$ and $V$
\begin{equation}
\Gamma \equiv \begin{bmatrix} U \\ V \end{bmatrix} \implies \Gamma \Gamma^T= \begin{bmatrix} UU^T & UV^T \\ VU^T & VV^T \end{bmatrix},
\end{equation}
we see that $UV^T$ is a submatrix of the positive semidefinite matrix $M = \Gamma \Gamma^T$.  After defining the function $H : S_n^+ \rightarrow \Re$
\begin{equation}
\label{semidef_obj}
H(\Gamma \Gamma^T) = \ell(Y,UV^T) + \lambda \|UV^T\|_{u,v},
\end{equation}
it is clear that the proposed formulation \eqref{eq:tensor_norm_obj} can be recast as an optimization problem over a positive semidefinite matrix.\!

At first the above discussion seems to be a circular argument, since while $H(M)$ is a convex function of $M$, this says nothing about finding $\Gamma$ (or $U$ and $V$). 
However, results for semidefinite programs in standard form \cite{burer05} show that one can minimize $H(M)$ by solving for $\Gamma$ directly without introducing any additional local minima, provided that the rank of $\Gamma$ is larger than the rank of the true solution, $\hat M$.  Further, if the rank of $\hat M$ is not known \textit{a priori} and $H(M)$ is twice differentiable, then any local minima w.r.t. $\Gamma$ such that $\Gamma$ is rank-deficient give a global minimum of $H(\Gamma \Gamma^T)$ \cite{bach08}.

Unfortunately, many objective functions (such as those with the projective tensor norm) are not twice differentiable in general, so this result can not be applied directly. 
Nonetheless, if $H(M)$ is the sum of a twice differentiable and a non-differentiable convex function, then our prior work  \cite{Haeffele:ICML14} has shown that it is still possible to guarantee that rank-deficient local minima w.r.t. $\Gamma$ give global minima of $H(\Gamma \Gamma^T)$.  In particular, 
the result from \cite{bach08} can be extended to non-differentiable functions as follows.

\begin{proposition} \cite{Haeffele:ICML14}
Let $H \equiv F + G$, where $F : S_n^+ \rightarrow \Re$ is a twice differentiable convex function with compact level sets and $G : S_n^+ \rightarrow \Re$ is a proper, lower semi-continuous convex function that is potentially non-differentiable.  If $\tilde \Gamma$ is a rank deficient local minimum of $h(\Gamma) = H(\Gamma \Gamma^T)$, then $\hat M = \tilde \Gamma \tilde \Gamma^T$ is a global minimum of $H(M)$.
\end{proposition}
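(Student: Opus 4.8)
The plan is to establish global optimality of $\hat M = \tilde\Gamma\tilde\Gamma^T$ directly through the first-order variational inequality for convex minimization over the cone $S_n^+$. Since $H = F + G$ is convex and $S_n^+$ is a convex cone, $\hat M$ is a global minimizer of $H$ over $S_n^+$ if and only if the directional derivative satisfies $H'(\hat M; M - \hat M) \geq 0$ for every $M \in S_n^+$; by convexity this in turn gives $H(M) \geq H(\hat M) + H'(\hat M; M - \hat M) \geq H(\hat M)$. So the entire task reduces to verifying this one inequality for all feasible directions $D = M - \hat M$. First I would decompose $\Re^n = R \oplus R^\perp$ with $R = \Image(\tilde\Gamma)$ and write everything in the corresponding block form; since $\hat M$ vanishes on $R^\perp$ and $M \succeq 0$, every feasible direction has $D_{\perp\perp} = M_{\perp\perp} \succeq 0$, and conversely these are exactly the directions I must handle.

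The core of the argument is to \emph{lift} each such feasible direction $D$ to an admissible perturbation of $\tilde\Gamma$ and then invoke its local minimality. I would split $D = S + P$, where $P = \mathrm{diag}(0, D_{\perp\perp}) \succeq 0$ is supported on $R^\perp$ and $S = D - P$ has a vanishing $R^\perp \times R^\perp$ block. The matrices $S$ with vanishing $\perp\perp$ block are precisely the image of the linearized map $\Delta \mapsto \tilde\Gamma \Delta^T + \Delta \tilde\Gamma^T$, so $S$ is reachable at first order; this is the standard smooth (tangent-space) part. The part $P$ cannot be reached at first order, but \emph{this is exactly where rank deficiency enters}: because $\tilde\Gamma$ is rank deficient it has a nontrivial null space, and (with the number of columns taken large enough to accommodate the rank of $D_{\perp\perp}$) I can pick $\Delta_2 = \sum_i \phi_i a_i^T$ with $\phi_i \in R^\perp$ and $\tilde\Gamma a_i = 0$ so that $\Delta_2 \Delta_2^T = P$ while $\tilde\Gamma \Delta_2^T + \Delta_2 \tilde\Gamma^T = 0$. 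Choosing the mixed-order curve $\Gamma(t) = \tilde\Gamma + t\Delta_1 + \sqrt{t}\,\Delta_2$, with $\Delta_1$ realizing $S$, a direct expansion gives $\Gamma(t)\Gamma(t)^T = \hat M + t(S + P) + O(t^{3/2}) = \hat M + tD + O(t^{3/2})$, where the crucial cancellation of the $\sqrt{t}$ term comes from $\tilde\Gamma a_i = 0$. Since $\Gamma(t) \to \tilde\Gamma$, local minimality yields $H(\hat M + tD + O(t^{3/2})) = h(\Gamma(t)) \geq h(\tilde\Gamma) = H(\hat M)$ for all small $t > 0$.

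It then remains to divide by $t$ and let $t \to 0^+$ to conclude $H'(\hat M; D) \geq 0$, which closes the argument. The step I expect to be the main obstacle is justifying this passage to the limit for the \emph{nondifferentiable} part $G$: one must show $H(\hat M + tD + O(t^{3/2})) = H(\hat M) + tH'(\hat M; D) + o(t)$, i.e.\ that the $O(t^{3/2})$ remainder produced by the lifting contributes only $o(t)$. For $F$ this is immediate from twice differentiability (a local Lipschitz bound makes the remainder's effect $O(t^{3/2})$), but $\hat M$ is rank deficient and hence lies on the boundary of $S_n^+$, where a general lower semi-continuous convex $G$ need not be locally Lipschitz and its subgradients may blow up. Controlling $G$ along the curve --- using properness and lower semicontinuity of $G$, the compact level sets of $F$, and the fact that $\hat M + tD$ itself stays in $S_n^+$ as the convex combination $(1-t)\hat M + tM$ --- is the delicate point; this is precisely the gap between the classical smooth result of \cite{bach08} and the extension to nonsmooth $H$. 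Finally, I would note that because the verification holds for every feasible direction simultaneously, convex duality automatically furnishes the certificate $W \in \partial H(\hat M)$ with $W \succeq 0$ and $\langle W, \hat M\rangle = 0$, although constructing it explicitly is not needed for the conclusion.
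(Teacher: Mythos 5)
Your two-scale lifting is the right instinct, but before comparing: this paper never actually proves this Proposition --- it is imported from \cite{Haeffele:ICML14} --- and the closest proof the paper does contain (Theorem \ref{thm:mat_fact_0_min}) gets away with only rank-one perturbations and scalings because $\partial \Omega_{\theta}$ is characterized by rank-one inequalities plus a single trace equality (Proposition \ref{prop:omega_prop}, parts 7--8). No such rank-one characterization exists for a general convex $G$ on $S_n^+$, which is precisely why your argument is forced to realize perturbations of arbitrary rank, and that is where the first genuine gap sits. Your $\Delta_2$ with $\Delta_2\Delta_2^T = P$ and $\tilde\Gamma\Delta_2^T = 0$ must have all of its rows in $\ker\tilde\Gamma \subseteq \Re^r$, so you need $\dim\ker\tilde\Gamma \geq \rank(D_{\perp\perp})$, and $\rank(D_{\perp\perp})$ can be as large as $n - \rank(\tilde\Gamma)$. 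Rank deficiency only guarantees $\dim\ker\tilde\Gamma \geq 1$; your parenthetical ``with the number of columns taken large enough'' is therefore a silent extra hypothesis, equivalent to $r \geq n$. It is not a removable technicality: with $r < n$ the statement as quoted is false. Take $n=2$, $r=1$, $F(M) = \tfrac{1}{8}\|M\|_F^2$ and $G(M) = -\lambda_{\min}(M)$, which is convex, continuous and finite on $S_2^+$. Every $\gamma\gamma^T$ with $\gamma \in \Re^{2\times 1}$ has rank at most one, so $\lambda_{\min}(\gamma\gamma^T)=0$ and $h(\gamma) = \tfrac{1}{8}\|\gamma\|_2^4$; thus $\tilde\Gamma = 0$ is a rank-deficient global minimizer of $h$, yet $H(I) = \tfrac{1}{4} - 1 < 0 = H(\tilde\Gamma\tilde\Gamma^T)$. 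Any correct proof must invoke $r \geq n$ explicitly.

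The second gap is the one you flagged yourself, and it is the crux rather than a technicality. From $H(\hat M + tD + E(t)) \geq H(\hat M)$ with $\|E(t)\|_F = O(t^{3/2})$ you need $G(\hat M + tD + E(t)) - G(\hat M + tD) \leq o(t)$ to pass to $H'(\hat M; D) \geq 0$. But $\hat M$ is rank deficient, hence on the boundary of $S_n^+$, where a finite convex $G$ can have subgradients blowing up like the inverse square root of the distance to the boundary (e.g.\ $G = -\sqrt{\lambda_{\min}}$); at points where the small eigenvalues are of order $t$, the available estimate for the contribution of $E(t)$ is $O(t^{3/2})\cdot O(t^{-1/2}) = O(t)$ --- exactly the order you must control, not $o(t)$ --- so the inequality does not survive the limit as written. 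Both gaps close at once if you abandon the approximate curve for exact factorizations: when $r \geq n$, the point $M_t = (1-t)\hat M + tM$ is PSD of rank at most $n \leq r$ for every $M \in S_n^+$, so $M_t = \Gamma_t \Gamma_t^T$ for some $\Gamma_t \in \Re^{n \times r}$, and since any two factorizations with $r$ columns differ by right multiplication by an element of $O(r)$, a compactness argument along these orthogonal fibers lets you choose $\Gamma_t \to \tilde\Gamma$ as $t \downarrow 0$. Local minimality then gives $H(M_t) = h(\Gamma_t) \geq h(\tilde\Gamma) = H(\hat M)$ for all small $t>0$, and convexity gives $(1-t)H(\hat M) + tH(M) \geq H(M_t) \geq H(\hat M)$, hence $H(M) \geq H(\hat M)$ --- no directional derivatives, no error term, and in fact no use of the smoothness of $F$, the compact level sets, or even rank deficiency once $r \geq n$.
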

These results allow one to solve \eqref{eq:tensor_norm_obj} using a potentially highly reduced set of variables if the rank of the true solution is much smaller than the dimensionality of $M$.

However, while the above results from semidefinite programming 
are sufficient if we only wish to find general factors such that $UV^T=X$, for the purposes of solving structured matrix factorizations, we are interested in finding factors $(U,V)$ that achieve the infimum in the definition of \eqref{eq:tensor_norm}, which is not provided by a solution to \eqref{eq:tensor_norm_obj}, as the $(U,V)$ that one obtains is not necessarily the $(U,V)$ that minimize \eqref{eq:tensor_norm}. 
As a result, the results from semidefinite optimization are not directly applicable to problems such as \eqref{eq:tensor_norm_obj} as they deal with different optimization problems. In the remainder of this paper, we will show that results regarding global optimality can still be derived for the non-convex optimization problem given in \eqref{eq:tensor_norm_obj} as well as for more general matrix factorization formulations.

\section{\!\!\!Structured Matrix Factorization Problem}

To develop our analysis we will introduce a matrix regularization function, $\Omega_{\theta}$, that generalizes the $\|\cdot\|_{u,v}$ norm and is similarly defined in the product space but allows one to enforce structure in the factorized space. We will establish basic properties of the proposed regularization function, such as its convexity, and discuss several practical examples.

\subsection{Structured Matrix Factorization Regularizers}
\label{sec:SMF-regularizer}
The proposed matrix regularization function, $\Omega_{\theta}$, will be constructed from a regularization function $\theta$ on rank-1 matrices, which can be defined as follows:
\begin{definition}
	\label{def:rank1_meas}
	A function $\theta : \Re^D \times \Re^N \rightarrow \Re_+ \cup \infty$ is said to be a \textit{\textbf{rank-1 regularizer}} if
	\begin{enumerate}
		\item $\theta(u,v)$ is positively homogeneous with degree 2, i.e., $\theta(\alpha u,\alpha v) = \alpha^2 \theta(u,v) \ \forall \alpha \geq 0, \forall (u,v)$.
		\item $\theta(u,v)$ is positive semi-definite, i.e., $\theta(0,0)=0$ and $\theta(u,v) \geq 0 \ \forall(u,v)$.
		\item For any sequence $(u_n,v_n)$ such that $\|u_n v_n^T\| \rightarrow \infty$ we have that $\theta(u_n,v_n) \rightarrow \infty$.
	\end{enumerate}
\end{definition}
The first two properties of the definition are straight-forward and their necessity will become apparent when we derive properties of $\Omega_\theta$.  The final property is necessary to ensure that $\theta$ is well defined as a regularizer for rank-1 matrices. For example, taking $\theta(u,v) = \|u\|^2 + \delta_+(v)$, where $\delta_+$ denotes the non-negative indicator function, satisfies the first two properties of the definition, but not the third since $\theta(u,v)$ can always be decreased by taking $u\rightarrow \alpha u$, $v \rightarrow \alpha^{-1} v$ for some $\alpha \in (0,1)$ without changing the value of $(\alpha u)(\alpha^{-1} v)^T = uv^T$.  Note also that the third property implies that $\theta(u,v) > 0$ for all $uv^T \neq 0$.

These three properties define a general set of requirements that are satisfied by a very wide range of rank-1 regularizers (see \S\ref{sec:structured-factorization-examples} for specific examples of regularizers that can be used for well known problems). While we will prove our theoretical results using this general definition of a rank-1 regularizer, later, when discussing specific algorithms that can be used to solve structured matrix factorization problems in practice, we will require that $\theta(u,v)$ satisfies a few additional requirements.

Using the notion of a rank-1 regularizer, we now define a regularization function on matrices of arbitrary rank:
\begin{definition}
	\label{def:mat_fact_reg}
	Given a rank-1 regularizer $\theta : \Re^D \times \Re^N \rightarrow \Re_+ \cup \infty$, the \textit{\textbf{matrix factorization regularizer}} $\Omega_\theta : \Re^{D \times N} \rightarrow \Re_+ \cup \infty$ is defined as
	\begin{equation}
	\label{eq:omega_mat_def}
	\!\!\!
		\Omega_\theta(X) \equiv \inf_{r \in \Nplus} \inf_{\substack{U \in \Re^{D \times r} \\ V \in \Re^{N \times r}}} \sum_{i=1}^r \theta(U_i,V_i) \st X = UV^T.
	\end{equation}
\end{definition}
The function defined in \eqref{eq:omega_mat_def} is very closely related to other regularizers that have appeared in the literature.  In particular, taking $\theta(u,v) = \|u\|_u \|v\|_v$ or $\theta(u,v) = \tfrac{1}{2}(\|u\|_u^2 + \|v\|_v^2)$ for arbitrary vector norms $\|\cdot\|_u$ and $\|\cdot\|_v$ gives the $\|\cdot\|_{u,v}$ norm in \eqref{eq:tensor_norm}.  Note, however, there is no requirement for $\theta(u,v)$ to be convex w.r.t. $(u,v)$ or to be composed of norms.  

\subsection{Properties of the Matrix Factorization Regularizer}

As long as $\theta$ satisfies the requirements from Definition \ref{def:rank1_meas}, one can show that $\Omega_{\theta}$ satisfies the following proposition:
\begin{proposition}
\label{prop:omega_prop}
Given a rank-1 regularizer $\theta$, the matrix facto\-rization regularizer $\Omega_{\theta}$ satisfies the following properties:
\begin{enumerate}
	\item $\Omega_{\theta}(0) = 0$ and $\Omega_{\theta}(X) > 0 \ \forall X \neq 0$.
	\item $\Omega_{\theta}(\alpha X) = \alpha \Omega_{\theta}(X) \ \forall \alpha \geq 0, \ \forall X$.
	\item $\Omega_{\theta}(X+Z) \leq \Omega_{\theta} (X) + \Omega_{\theta} (Z) \ \forall (X,Z)$.
	\item $\Omega_{\theta}(X)$ is convex w.r.t. $X$.
	\item The infimum in \eqref{eq:omega_mat_def} is achieved with $r \leq DN$.
	\item \label{prop:omega_is_a_norm}
	If $\theta(-u,v)=\theta(u,v)$ or $\theta(u,-v)=\theta(u,v)$, then $\Omega_{\theta}$ is a norm on $X$.
	\item The subgradient $\partial \Omega_{\theta}(X)$ of $\Omega_{\theta}(X)$ is given by:
		\label{prop:omega_subgrad}
\begin{equation*}
\!\!\!\!
		\big\{ W \!:\! \langle W,X \rangle \!=\! \Omega_{\theta}(X), u^T W v \leq \theta(u,v) \ \forall(u,v) \big\}.\!\!\!
\end{equation*}

\item Given a factorization $X=UV^T$, if there exists a matrix $W$ such that $\sum_{i=1}^r U_i^T W V_i = \sum_{i=1}^r \theta(U_i,V_i)$ and $u^T W v \leq \theta(u,v)$ $\forall(u,v)$, then $UV^T$ is an optimal factorization of $X$, i.e., it achieves the infimum in \eqref{eq:omega_mat_def}, and $W \in \partial \Omega_\theta(X)$. 

\end{enumerate}
\end{proposition}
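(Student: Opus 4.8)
The plan is to recognize $\Omega_\theta$ as the gauge (atomic) function of a convex set of rank-one matrices, after which most of the claimed properties follow from standard convex analysis. The first step is a rescaling argument: given any factorization $X = UV^T = \sum_i U_i V_i^T$, for each column with $U_i V_i^T \neq 0$ set $c_i = \theta(U_i,V_i) > 0$ (positive by the third property of Definition \ref{def:rank1_meas}) and normalize $(u_i, v_i) = (c_i^{-1/2} U_i,\, c_i^{-1/2} V_i)$; positive homogeneity of degree $2$ then gives $\theta(u_i, v_i) = 1$ and $c_i u_i v_i^T = U_i V_i^T$. Running this correspondence in both directions shows
\begin{equation*}
\Omega_\theta(X) = \inf\Big\{ \textstyle\sum_i c_i : c_i \geq 0,\ X = \sum_i c_i\, u_i v_i^T,\ \theta(u_i,v_i) \leq 1 \Big\},
\end{equation*}
i.e., $\Omega_\theta$ is the gauge of $C \equiv \mathrm{conv}\{uv^T : \theta(u,v) \leq 1\}$. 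Since $C$ is convex and contains the origin, positive homogeneity of degree $1$ (property 2), subadditivity (property 3), convexity (property 4, immediate from 2 and 3), and $\Omega_\theta(0) = 0$ all follow at once; properties 2 and 3 can alternatively be checked by hand by rescaling factorizations by $\sqrt{\alpha}$ and by horizontally concatenating factorizations of $X$ and $Z$, respectively.

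To finish property 1 I would show $\Omega_\theta(X) > 0$ for $X \neq 0$, which reduces to showing that the atomic set $\mathcal{A} = \{uv^T : \theta(u,v) \leq 1\}$ is bounded, so that its gauge is positive-definite: if some sequence had $\theta(u_n,v_n) \leq 1$ but $\|u_n v_n^T\| \to \infty$, the coercivity condition (property 3 of Definition \ref{def:rank1_meas}) would force $\theta(u_n,v_n) \to \infty$, a contradiction. For property 6, if $\theta$ is invariant under negating $u$ (or $v$), then $\mathcal{A}$, and hence $C$, is symmetric about the origin, so $\Omega_\theta$ is even; combined with positive-definiteness, homogeneity, and subadditivity this makes it a genuine norm.

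Property 5 is where the real work lies. The bound $r \leq DN$ is a dimension count: viewing matrices as vectors in $\Re^{DN}$, the representation $X = \sum_i c_i (u_i v_i^T)$ with $c_i \geq 0$ is a conic combination, so the conic form of Carath\'eodory's theorem lets one reduce to at most $DN$ atoms without increasing $\sum_i c_i$. The delicate point is \emph{attainment} of the infimum. I would take a minimizing sequence of factorizations with $r \leq DN$ and argue it has a subsequence converging to a feasible minimizer; coercivity is precisely what prevents the contributing rank-one terms from escaping to infinity, while boundedness (and closedness) of $C$ supplies the needed compactness. I expect this compactness/lower-semicontinuity step --- ruling out pathological minimizing sequences in which individual terms blow up while their sum stays fixed at $X$ --- to be the main obstacle, and it is the place where all three defining properties of a rank-1 regularizer are used together.

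Finally, properties 7 and 8 I would obtain from gauge--polar duality. The polar gauge of the atomic norm is the support function of $\mathcal{A}$, namely $\Omega_\theta^\circ(W) = \sup\{u^T W v : \theta(u,v) \leq 1\}$, and since both $u^T W v$ and $\theta(u,v)$ are positively homogeneous of degree $2$ in $(u,v)$, the condition $\Omega_\theta^\circ(W) \leq 1$ is equivalent to $u^T W v \leq \theta(u,v)$ for all $(u,v)$. The standard subgradient formula for a gauge, $W \in \partial \Omega_\theta(X) \iff \langle W, X\rangle = \Omega_\theta(X)$ and $\Omega_\theta^\circ(W) \leq 1$, then yields property 7 verbatim. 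Property 8 follows as a corollary: for any $W$ with $u^T W v \leq \theta(u,v)$ one has $\langle W, X\rangle \leq \Omega_\theta^\circ(W)\,\Omega_\theta(X) \leq \Omega_\theta(X)$ by the generalized Cauchy--Schwarz inequality, whereas $\sum_i U_i^T W V_i = \sum_i \theta(U_i,V_i) \geq \Omega_\theta(X)$ by definition of the infimum; since $\langle W, X\rangle = \sum_i U_i^T W V_i$, all three quantities coincide, so the factorization is optimal and $W \in \partial \Omega_\theta(X)$ by property 7.
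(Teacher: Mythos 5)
Your proposal is correct and follows the same essential strategy as the paper's appendix proof: interpret $\Omega_\theta$ as the gauge of the convex hull of the rank-one atomic set $\Gamma = \{uv^T : \theta(u,v)\leq 1\}$, obtain properties 1--4 and 6 from gauge calculus, use Carath\'eodory for property 5, and use polar/Fenchel duality for properties 7 and 8. The differences are in execution, and mostly favor your version. First, you establish the identity between $\Omega_\theta$ and the gauge $\sigma_\Gamma$ directly, via the rescaling correspondence between factorizations and conic combinations of atoms; the paper instead proves $\sigma_\Gamma = \Omega_\theta$ indirectly, by computing the Fenchel conjugates of both functions and invoking the one-to-one correspondence between closed convex functions and their conjugates. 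Your route is more transparent, and it has a concrete payoff: your proof of positive definiteness (property 1) needs only \emph{boundedness} of the atomic set, whereas the paper's proof of property 1 leans on the attainment claim of property 5. Second, your proof of property 8 is direct---sandwiching $\langle W,X\rangle$ between $\Omega_\theta(X)$ and $\sum_i \theta(U_i,V_i)$---where the paper runs two separate contradiction arguments; the content is the same but yours is cleaner (you could even bypass the generalized Cauchy--Schwarz step, since $\langle W,X\rangle = \sum_i U_i^T W V_i \leq \sum_i\theta(U_i,V_i)$ for \emph{every} factorization already gives $\langle W,X\rangle \leq \Omega_\theta(X)$). Third, regarding the step you flagged as the main obstacle---attainment of the infimum, which requires closedness of $\Gamma$ (equivalently, a compactness/lower-semicontinuity argument)---you should know that the paper does not resolve it either: it asserts that the coercivity property of Definition \ref{def:rank1_meas} makes ``$\Gamma$ and its convex hull compact,'' but coercivity only yields boundedness; closedness would require some lower-semicontinuity of $\theta$, which Definition \ref{def:rank1_meas} does not impose. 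So on that one point your sketch is at the same level of rigor as the published argument, and more candid about where the genuine difficulty lies; everywhere else your plan is complete and sound.
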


\begin{proof}
A full proof of the above Proposition can be found in Appendix \ref{sec:omega_prop_proof} and uses similar arguments to those found in  \cite{Ryan:2002,bach08,bach13,Chandrasekaran:FoundCompMath2012,Yu:2014} for related problems.
\end{proof}

Note that the first 3 properties show that $\Omega_{\theta}$ is a gauge function on $X$ (and further it will be a norm if property \ref{prop:omega_is_a_norm} is satisfied). While this also implies that $\Omega_{\theta}(X)$ must be a convex function of $X$, note that it can still be very challenging to evaluate or optimize functions involving $\Omega_{\theta}$ due to the fact that it requires solving a non-convex optimization problem by definition.  However, by exploiting the convexity of $\Omega_{\theta}$, we are able to use it to study the optimality conditions of many associated non-convex matrix factorization problems, some of which are discussed next.

\subsection{Structured Matrix Factorization Problem Examples}
\label{sec:structured-factorization-examples}
The matrix factorization regularizer provides a natural bridge between convex formulations in the product space \eqref{eq:gen_convex} and non-convex functions in the factorized space \eqref{eq:gen_fact} due to the fact that $\Omega_{\theta}(X)$ is a convex function of $X$, while from the definition \eqref{eq:omega_mat_def} one can induce a wide range of properties in $(U,V)$ by an appropriate choice of $\theta(u,v)$ function.  In what follows, we give a number of examples which lead to variants of several structured matrix factorization problems that have been studied previously in the literature.

\textbf{Low-Rank}: The first example of note is to relax low-rank constraints into nuclear norm regularized problems.  Taking $\theta(u,v) = \tfrac{1}{2}(\|u\|_2^2 + \|v\|_2^2)$ gives the well known variational form of the nuclear norm, $\Omega_{\theta}(X) = \|X\|_*$, and thus provides a means to solve problems in the factorized space where the size of the factorization gets controlled by regularization.  In particular, we have the conversion
\begin{equation}
\begin{split}
	&\min_X \ell(Y,X) + \lambda \|X\|_* \iff \\
	&\min_{r,U,V} \ell(Y,UV^T) + \tfrac{\lambda}{2} \sum_{i=1}^r(\|U_i\|_2^2 + \|V_i\|_2^2) \iff \\
	&\min_{r,U,V} \ell(Y,UV^T) + \lambda \sum_{i=1}^r \|U_i\|_2\|V_i\|_2,
\end{split}
\end{equation}
where the $\iff$ notation implies that solutions to all 3 objective functions will have identical values at the global minimum and any global minimum w.r.t. $(U,V)$ will be a global minimum for $X=UV^T$. While the above equivalence is well known for the nuclear norm \citep{Cabral:ICCV2013,Recht:SIAM10}, the factorization is ``unstructured" in the sense that the Euclidean norms do not bias the columns of $U$ and $V$ to have any particular properties. Therefore, to find factors with additional structure, such as non-negativity, sparseness, etc., more general $\theta(u,v)$ functions need to be considered.

\textbf{Non-Negative Matrix Factorization}: 
If we extend the previous example to now add non-negative constraints on $(u,v)$, we get $\theta(u,v) = \tfrac{1}{2} (\|u\|_2^2+\|v\|_2^2) + \delta_{\Re_+}(u) + \delta_{\Re_+}(v)$, where $\delta_{\Re_+}(u)$ denotes the indicator function that $u\geq 0$. This choice of $\theta$ acts similarly to the variational form of the nuclear norm in the sense that it limits the number of non-zero columns in $(U,V)$, but it also imposes the constraints that $U$ and $V$ must be non-negative.  As a result, one gets a convex relaxation of traditional non-negative matrix factorization
\begin{align}
	& \min_{U,V}  \ell(Y,UV^T) \st U \geq 0, \ V \geq 0 \implies \\
	&\min_{r,U,V} \ell(Y,UV^T) + \tfrac{\lambda}{2}\sum_{i=1}^r(\|U_i\|_2^2+\|V_i\|_2^2) \st U \geq 0, \ V \geq 0. \nonumber
\end{align}
The $\implies$ notation is meant to imply that the two problems are not strictly equivalent as in the nuclear norm example.  The key difference between the two problems is that in the first one the number of columns in $(U,V)$ is fixed \textit{a priori}, while in the second one the number of columns in $(U,V)$ is allowed to vary and is adapted to the data via the low-rank regularization induced by the Frobenius norms on $(U,V)$.

\textbf{Row or Columns Norms}: Taking $\theta(u,v) = \|u\|_1 \|v\|_v$ results in $\Omega_{\theta}(X) = \sum_{i=1}^D \|(X^T)_i\|_v$, i.e., the sum of the $\|\cdot\|_v$ norms of the rows of $X$, while taking $\theta(u,v) =  \|u\|_u \|v\|_1$ results in $\Omega_{\theta}(X) = \sum_{i=1}^N \|X_i\|_u$, i.e., the sum of the $\|\cdot\|_u$ norms of the columns of $X$ \cite{bach08,Ryan:2002}.  As a result, the regularizer $\Omega_{\theta}(X)$ generalizes the $\|X\|_{u,1}$ and $\|X\|_{1,v}$ mixed norms. The reformulations into a factorized form give:
\begin{align*}
&\min_X \ell(Y,X) \!+\! \lambda \|X\|_{1,v} \! \Leftrightarrow \! \min_{r,U,V} \! \ell(Y,UV^T) \!+\! \lambda \sum_{i=1}^r \! \|U_i\|_1 \|V_i\|_v \\
&\min_X \ell(Y,X) \!+\! \lambda \|X\|_{u,1}  \! \Leftrightarrow \! \min_{r,U,V} \! \ell(Y,UV^T) \!+\! \lambda \sum_{i=1}^r \! \|U_i\|_u \|V_i\|_1.
\end{align*}
However, the factorization problems in this case are relatively uninteresting as taking either $U$ or $V$ to be the identity (depending on whether the $l_1$ norm is on the columns of $U$ or $V$, respectively) and the other matrix to be $X$ (or $X^T$) results in one of the possible optimal factorizations.  

\textbf{Sparse Dictionary Learning}: Similar to the non-negative matrix factorization case, convex relaxations of sparse dictionary learning can also be obtained by combining $l_2$ norms with sparsity-inducing regularization.  For example, taking $\theta(u,v) = \tfrac{1}{2}(\|u\|_2^2 + \|v\|_2^2+\gamma\|v\|_1^2)$ results in a relaxation
\begin{equation}
	\begin{split}
	&\min_{U,V} \ell(Y,UV^T)+\lambda \|V\|_1 \st \|U_i\|_2 = 1 \ \ \forall i \implies \\
	&\min_{r,U,V} \ell(Y,UV^T)+\tfrac{\lambda}{2}\sum_{i=1}^r(\|U_i\|_2^2 + \|V_i\|_2^2+ \gamma \|V_i\|_1^2),
	\end{split}
\end{equation}
which was considered as a potential formulation for sparse dictionary learning in \cite{bach08}, where now the number of atoms in the dictionary is fit to the dataset via the low-rank regularization induced by the Frobenius norms.  A similar approach would be to take $\theta(u,v) = \|u\|_2 (\|v\|_2+\gamma \|v\|_1)$.

\textbf{Sparse PCA}: If both the rows and columns of $U$ and $V$ are regularized to be sparse, then one can obtain convex relaxations of sparse PCA \cite{Zou:JCGS2006}.  One example of this is to take $\theta(u,v) = \tfrac{1}{2}(\|u\|_2^2 + \gamma_u \|u\|_1^2+ \|v\|_2^2+\gamma_v\|v\|_1^2)$.  Alternatively, one can also place constraints on the number of elements in the non-zero support of each column in $(u,v)$ via a rank-1 regularizer of the form $\theta(u,v) = \tfrac{1}{2}(\|u\|_2^2+\|v\|_2^2) + \delta_{\|\cdot\|_0 \leq k} (u) + \delta_{\|\cdot\|_0 \leq q} (v)$, where $\delta_{\|\cdot\|_0 \leq k}(u)$ denotes the indicator function that $u$ has $k$ or fewer non-zero elements.  Such a form was analyzed in \cite{Richard:NIPS2014} and gives a relaxation of sparse PCA that regularizes the number of sparse components via the $\ell_2$ norm, while requiring that a given component have the specified level of sparseness.

\textbf{General Structure}: More generally, this theme of using a combination of $l_2$ norms and additional regularization on the factors can be used to model additional forms of structure on the factors.  For example one can take $\theta(u,v) = \|u\|_2 \|v\|_2 + \gamma \hat{\theta}(u,v)$ or $\theta(u,v) = \|u\|_2^2 + \|v\|_2^2 + \gamma \hat{\theta}(u,v)$ with a function $\hat{\theta}$ that promotes the desired structure in $U$ and $V$ provided that $\theta(u,v)$ satisfies the necessary properties in the definition of a rank-1 regularizer.  Additional example problems can be found in \cite{bach13, Haeffele:ICML14}.

\textbf{Symmetric Factorizations}: Assuming that $X$ is a square matrix, it is also possible to learn symmetrical formulations with this framework, as the indicator function $\delta_{u=v}(u,v)$ that requires $u$ and $v$ to be equal is also positively homogeneous.  As a result, one can use regularization such as $\theta(u,v) = \delta_{u=v}(u,v) + \|u\|_2^2$ to learn low-rank symmetrical factorizations of $X$, and add additional regularization to encourage additional structures.  For example $\theta(u,v) = \delta_{u=v}(u,v) + \|u\|_2^2 + \|u\|_1^2 + \delta_{\Re_+}(u)$ learns symmetrical factorizations where the factors are required to be non-negative and encouraged to be sparse.

\section{Theoretical Analysis}

In this section we provide a theoretical analysis of the link between convex formulations \eqref{eq:gen_convex}, which offer guarantees of global optimality, and factorized formulations \eqref{eq:gen_fact}, which offer additional flexibility in modeling the data structure and recovery of features that can be used in subsequent analysis. Using the matrix factorization regularizer introduced in \S\ref{sec:SMF-regularizer}, we consider the convex optimization problem
\begin{equation}
\label{general_obj_nofact}
\min_{X,Q} \big\{ F(X,Q) \equiv \ell(Y,X,Q) + \lambda \Omega_{\theta}(X) \big\}.
\end{equation}
Here the term $Q$ can be used to model additional variables that will not be factorized. For example, in robust PCA (RPCA) \cite{Candes:ACM11} the term $Q$ accounts for sparse outlying entries, and a formulation in which the data is corrupted by both large corruptions and Gaussian noise is given by:
\begin{equation*}
	\min_{X,Q} \{ F(X,Q)_{RPCA} \equiv \tfrac{1}{2} \|Y-X-Q\|_F^2 + \gamma\|Q\|_1 + \lambda \|X\|_*\}.
\end{equation*}
In addition to the convex formulation \eqref{general_obj_nofact}, we will also consider the closely related non-convex factorized formulation
\begin{equation}
	\label{eq:mat_fact_obj}
	\min_{U,V,Q}  \big\{  f(U,V,Q) \equiv  \ell(Y,UV^T,Q) + \lambda \sum_{i=1}^r \theta(U_i,V_i)  \big\} . 
\end{equation}
Note that in \eqref{eq:mat_fact_obj} we consider problems where $r$ is held fixed. If we additionally optimize over the factorization size, $r$, in \eqref{eq:mat_fact_obj}, then the problem becomes equivalent to \eqref{general_obj_nofact}, as we shall see.  We will assume throughout that $\ell(Y,X,Q)$ is jointly convex w.r.t. $(X,Q)$ and once differentiable w.r.t. $X$. Also, we will use the notation $[r]$ to denote the set $\{1,\dots,r\}$.

\subsection{Conditions under which Local Minima Are Global}
\label{sec:main-results}

Given the non-convex optimization problem \eqref{eq:mat_fact_obj}, note from the definition of $\Omega_{\theta}(X)$ that for all $(X,U,V)$ such that $X=UV^T$ we must have $\Omega_{\theta}(X) \leq \sum_{i=1}^r \theta(U_i,V_i)$. This yields a global lower bound between the convex and non-convex objective functions, i.e., for all $(X,U,V)$ such that $X=UV^T$
\begin{align}
	\label{eq:global_bound}
	F(X,Q)
	&=\ell(Y,X,Q) + \lambda \Omega_{\theta}(X) \\
	& \leq \ell(Y,UV^T,Q) + \lambda \sum_{i=1}^r \theta(U_i,V_i)=f(U,V,Q).\nonumber
\end{align}
From this, if $\hat{X}$ denotes an optimal solution to the convex problem $\min_{X,Q} F(X,Q)$, then any factorization $UV^T=\hat{X}$ such that $\sum_{i=1}^r \theta(U_i,V_i) = \Omega_{\theta}(\hat{X})$ is also an optimal solution to the non-convex problem $\min_{U,V,Q} f(U,V,Q)$.  These properties lead to the following result.

\begin{theorem}
	\label{thm:mat_fact_0_min}
	Given a function $\ell(Y,X,Q)$ that is jointly convex in $(X,Q)$ and once differentiable w.r.t. $X$; a rank-1 regularizer $\theta$ that satisfies the conditions in Definition \ref{def:mat_fact_reg}; and a constant $\lambda > 0$, local minima $(\tilde U, \tilde V, \tilde Q)$ of $f(U,V,Q)$ in \eqref{eq:mat_fact_obj}
are globally optimal if $(\tilde U_i,\tilde V_i) = (0,0)$ for some $i \in [r]$. Moreover, $(\hat{X},\hat{Q}) = (\tilde U \tilde V^T, \tilde Q)$ is a global minima of $F(X,Q)$ and $\tilde U \tilde V^T$ is an optimal factorization of $\hat{X}$.
\end{theorem}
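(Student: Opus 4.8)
The plan is to show that the first-order conditions holding at the local minimum $(\tilde U,\tilde V,\tilde Q)$ certify, through Proposition \ref{prop:omega_prop}, that $W \equiv -\tfrac{1}{\lambda}\nabla_X\ell(Y,\tilde U\tilde V^T,\tilde Q)$ is a subgradient of $\Omega_\theta$ at $\hat X = \tilde U\tilde V^T$ and that $\tilde U\tilde V^T$ attains the infimum defining $\Omega_\theta(\hat X)$. Once this is established, the joint convexity of $F$ and the global lower bound \eqref{eq:global_bound} will immediately upgrade local optimality to global optimality.

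First I would exploit the hypothesis that some column satisfies $(\tilde U_i,\tilde V_i)=(0,0)$. Activating this zero column via the perturbation $(\tilde U_i,\tilde V_i)\to(\epsilon u,\epsilon v)$ for arbitrary $(u,v)$ changes the product by $\epsilon^2 uv^T$ and, by positive homogeneity of degree $2$ together with $\theta(0,0)=0$, changes the regularizer by exactly $\lambda\epsilon^2\theta(u,v)$. Expanding $\ell$ to first order in $\epsilon^2$ and using $\langle\nabla_X\ell,uv^T\rangle = -\lambda\, u^T W v$, local minimality forces $\theta(u,v)-u^T W v \ge 0$ for every $(u,v)$; that is, $u^T W v \le \theta(u,v)$ for all $(u,v)$, which is the polar-feasibility condition appearing in Proposition \ref{prop:omega_prop}(7)--(8).

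Next I would establish tightness by rescaling each existing column: along the smooth path $(\tilde U_j,\tilde V_j)\to(\tau\tilde U_j,\tau\tilde V_j)$, the product equals $\hat X+(\tau^2-1)\tilde U_j\tilde V_j^T$ and the regularizer contribution is $\tau^2\theta(\tilde U_j,\tilde V_j)$, so $f$ is differentiable in $\tau$ near $\tau=1$ even though $\theta$ is in general neither convex nor differentiable. Setting the derivative to zero at $\tau=1$ yields $\tilde U_j^T W\tilde V_j = \theta(\tilde U_j,\tilde V_j)$ for each $j$, and summing gives $\sum_j \tilde U_j^T W\tilde V_j = \sum_j \theta(\tilde U_j,\tilde V_j)$. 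Invoking Proposition \ref{prop:omega_prop}(8) with this $W$ then certifies both that $\tilde U\tilde V^T$ is an optimal factorization of $\hat X$ (so $\sum_j\theta(\tilde U_j,\tilde V_j)=\Omega_\theta(\hat X)$) and that $W\in\partial\Omega_\theta(\hat X)$.

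Finally I would close the argument. Local minimality in $Q$ and convexity of $\ell$ in $Q$ give $0\in\partial_Q\ell(Y,\hat X,\tilde Q)$, while $W\in\partial\Omega_\theta(\hat X)$ is precisely $0\in\nabla_X\ell(Y,\hat X,\tilde Q)+\lambda\partial\Omega_\theta(\hat X)$; since $F$ is jointly convex, these first-order conditions are sufficient for $(\hat X,\tilde Q)$ to globally minimize $F$. Combining tightness with the lower bound yields $f(\tilde U,\tilde V,\tilde Q)=F(\hat X,\tilde Q)\le F(UV^T,Q)\le f(U,V,Q)$ for every competitor, which establishes global optimality of the local minimum. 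The only genuine obstacle is the non-convexity and possible non-differentiability of $\theta$; the resolution is to never take arbitrary directional derivatives of $\theta$ and instead use only the two structured perturbations—zero-column activation and column rescaling—along which degree-$2$ homogeneity renders $\theta$ a smooth quadratic in the perturbation parameter, so that ordinary differentiation of $f$ suffices.
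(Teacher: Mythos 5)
Your proposal is correct and follows essentially the same route as the paper's proof: the zero-column activation (your $\epsilon^2$ parametrization is just a reparametrization of the paper's $\epsilon^{1/2}$ scaling) yields the polar inequality $u^T W v \le \theta(u,v)$, column rescaling yields tightness, and Proposition \ref{prop:omega_prop}(8) plus the global lower bound \eqref{eq:global_bound} finish the argument. The only cosmetic difference is that you rescale each column individually to get per-column equalities $\tilde U_j^T W \tilde V_j = \theta(\tilde U_j,\tilde V_j)$, whereas the paper rescales all columns simultaneously by $(1\pm\epsilon)^{1/2}$ and obtains only the summed equality, which is all that is needed.
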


\begin{proof}
Since $F(X,Q)$ provides a global lower bound for
$f(U,V,Q)$, the result follows from the fact that local minima of $f(U,V,Q)$ that satisfy the conditions of the theorem also satisfy the conditions for global optimality of $F(X,Q)$. More specifically, because $F(X,Q)$ is a convex function, we have that $(\hat X,\hat Q)$ is a global minimum of $F(X,Q)$ iff
	\begin{equation}
			-\tfrac{1}{\lambda}\nabla_X \ell(Y,\hat X,\hat Q) \in \partial \Omega_{\theta}(X) ~~ \text{and} ~~
			0 \in \partial_Q \ell(Y,\hat X,\hat Q).
	\end{equation}
	Since $(\tilde U,\tilde V,\tilde Q)$ is a local minimum of $f(U,V,Q)$,
	it is necessary that $0 \in \partial_Q \ell(Y,\tilde U \tilde V^T,\tilde Q)$.
	Moreover, from the characterization of the subgradient of $\Omega_{\theta}(X)$ given in Proposition \ref{prop:omega_prop}, we also have that $-\tfrac{1}{\lambda}\nabla_X \ell(Y,X,Q) \in \partial \Omega_{\theta}(X)$ will be true for $X = \tilde U \tilde V^T$ if the following conditions are satisfied
	\begin{align}
		\label{eq:mat_fact_subgrad_cond1}
		&u^T(-\tfrac{1}{\lambda}\nabla_X \ell(Y,\tilde{U} \tilde{V}^T,\tilde Q))v \leq \theta(u,v) \ \forall(u,v) \\
		\label{eq:mat_fact_subgrad_cond2}
		&\sum_{i=1}^r \tilde{U}_i^T(-\tfrac{1}{\lambda}\nabla_X \ell(Y,\tilde{U} \tilde{V}^T,\tilde Q)) \tilde{V}_i = \sum_{i=1}^r \theta(\tilde{U}_i,\tilde{V}_i).
	\end{align}
				
To show \eqref{eq:mat_fact_subgrad_cond1}, recall that the local minimum $(\tilde U,\tilde V,\tilde Q)$ is such that one column pair of $(\tilde U,\tilde V)$ is 0. Assume without loss of generality that the final column pair of $(\tilde U,\tilde V)$ is 0 and let $U_{\epsilon} = [\tilde U_1,\ldots,\tilde U_{r-1}, \epsilon^{1/2} u]$ and $V_{\epsilon} = [\tilde V_1,\ldots,\tilde V_{r-1}, \epsilon^{1/2} v]$ for some $\epsilon > 0$ and arbitrary $(u,v)$. Then, due to the fact that $(\tilde U,\tilde V,\tilde Q)$ is a local minimum, we have that for all $(u,v)$ there exists $\delta >0$ such that for all $\epsilon \in (0,\delta)$ we have
	\begin{align}
	\label{eqn:UepsVeps}
	&\ell(Y,U_{\epsilon}V_{\epsilon}^T,\tilde Q) + \lambda \sum_{i=1}^{r} \theta(\tilde U_i,\tilde V_i)+\lambda \theta(\epsilon^{1/2}u,\epsilon^{1/2}v) = \\
	\label{eqn:UepsVepsposhom}
	&\ell(Y,\tilde U \tilde V^T + \epsilon uv^T,\tilde Q) + \lambda \sum_{i=1}^r \theta(\tilde U_i,\tilde V_i) + \epsilon \lambda \theta(u,v) \geq \\
	&\ell(Y,\tilde U \tilde V^T,\tilde Q) + \lambda \sum_{i=1}^r \theta(\tilde U_i,\tilde V_i),
	\end{align}
where the equivalence between \eqref{eqn:UepsVeps} and \eqref{eqn:UepsVepsposhom} follows from the positive homogeneity of $\theta$. Rearranging terms, we have
\begin{equation}
	\tfrac{-1}{\lambda\epsilon}[\ell(Y,\tilde U \tilde V^T+\epsilon uv^T,\tilde Q) - \ell(Y,\tilde U \tilde V^T,\tilde Q)] \leq \theta(u,v).
\end{equation}
Since $\ell(Y,X,Q)$ is differentiable w.r.t. $X$, after taking the limit w.r.t. $\epsilon \searrow 0$, we obtain $\left< \tfrac{-1}{\lambda}\nabla_X \ell(Y,\tilde U \tilde V^T,\tilde Q),uv^T \right> \leq \theta(u,v)$ for any $(u,v)$ vector pair, showing \eqref{eq:mat_fact_subgrad_cond1}. 

To show \eqref{eq:mat_fact_subgrad_cond2}, let $U_{1\pm\epsilon} = (1\pm\epsilon)^{1/2}\tilde U$ and $V_{1\pm\epsilon} = (1\pm\epsilon)^{1/2} \tilde V$ for some $\epsilon > 0$. Since $(\tilde U, \tilde V, \tilde Q)$ is a local minimum, there exists  $\delta>0$ such that for all $\epsilon \in (0,\delta)$ we have
\begin{equation}
	\begin{split}
	\ell(Y,& U_{1 \pm \epsilon} V_{1 \pm \epsilon}^T, \tilde Q) + \lambda \sum_{i=1}^r\theta((1 \pm \epsilon)^{1/2} \tilde U_i,(1 \pm \epsilon)^{1/2} \tilde V_i) \\
	&= \ell(Y,(1\pm \epsilon) \tilde U \tilde V^T,\tilde Q) + \lambda (1 \pm \epsilon) \sum_{i=1}^r \theta(\tilde U_i,\tilde V_i) \\
	& \geq \ell(Y,\tilde U \tilde V^T,\tilde Q) + \lambda \sum_{i=1}^r \theta (\tilde U_i,\tilde V_i) .
	\end{split}
	\!\!\!
\end{equation}
Rearranging terms gives
\begin{equation*}
	\tfrac{-1}{\lambda\epsilon}[\ell(Y, (1\pm \epsilon) \tilde U \tilde V^T,\tilde Q) - \ell(Y,\tilde U \tilde V^T,\tilde Q)] \leq \pm \sum_{i=1}^r \theta(\tilde U_i, \tilde V_i),
\end{equation*}
and taking the limit w.r.t. $\epsilon \searrow 0$ gives
\begin{align*}
\sum_{i=1}^r \theta(\tilde U_i,\tilde V_i) \leq \left<\tfrac{-1}{\lambda} \nabla_X \ell(Y,\tilde U \tilde V^T,\tilde Q),\tilde U \tilde V^T \right> \leq \sum_{i=1}^r \theta(\tilde U_i,\tilde V_i),
\end{align*}
showing \eqref{eq:mat_fact_subgrad_cond2}. The last two statements of the result follow from the discussion before the theorem, together with Proposition \ref{prop:omega_prop} part (8).
\end{proof}

Note that the above proof provides sufficient conditions to guarantee the global optimality of local minima with specific properties, but in addition it also proves the following sufficient conditions for global optimality of any point.
\begin{corollary}
\label{cor:global_min_conds}
Given a function $\ell(Y,X,Q)$ that is jointly convex in $(X,Q)$ and once differentiable w.r.t. $X$; a rank-1 regularizer $\theta$ that satisfies the conditions in Definition \ref{def:mat_fact_reg}; and a constant $\lambda > 0$, a point $(\tilde{U},\tilde{V},\tilde{Q})$ is a global minimum of $f(U,V,Q)$ in \eqref{eq:mat_fact_obj}
	if it satisfies the conditions
	\begin{enumerate}
		\item \label{cond:Q} $0 \in \partial_Q \ell(Y,\tilde{U}\tilde{V}^T,\tilde{Q})$
		\item \label{cond:UV1} $\tilde{U}_i^T (\tfrac{-1}{\lambda} \nabla_X \ell(Y,\tilde{U}\tilde{V}^T,\tilde{Q})) \tilde{V}_i = \theta(\tilde{U}_i,\tilde{V}_i) \ \forall i \in [r]$
		\item \label{cond:UV2} $u^T (\tfrac{-1}{\lambda} \nabla_X \ell(Y,\tilde{U}\tilde{V}^T,\tilde{Q})) v \leq \theta(u,v) \ \forall (u,v)$.
	\end{enumerate}
\end{corollary}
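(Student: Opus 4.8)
The plan is to recognize that the three conditions of the corollary are precisely the ingredients assembled in the proof of Theorem \ref{thm:mat_fact_0_min} to certify global optimality, now decoupled from the rank-deficiency hypothesis. Writing $W = -\tfrac{1}{\lambda}\nabla_X \ell(Y,\tilde U \tilde V^T, \tilde Q)$ for brevity, the strategy is to show that $(\tilde U \tilde V^T, \tilde Q)$ is a global minimizer of the convex surrogate $F(X,Q)$ and that the factorization $\tilde U \tilde V^T$ attains the infimum defining $\Omega_\theta$, so that the global lower bound \eqref{eq:global_bound} is met with equality at $(\tilde U, \tilde V, \tilde Q)$.

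First I would invoke the optimality characterization for the convex problem $\min_{X,Q} F(X,Q)$: since $F$ is convex, $(\hat X, \hat Q)$ minimizes $F$ iff $-\tfrac{1}{\lambda}\nabla_X \ell(Y,\hat X,\hat Q) \in \partial \Omega_\theta(\hat X)$ and $0 \in \partial_Q \ell(Y,\hat X,\hat Q)$. Condition \ref{cond:Q} supplies the second membership directly at $(\hat X, \hat Q) = (\tilde U \tilde V^T, \tilde Q)$. For the first, I would appeal to the subgradient description in Proposition \ref{prop:omega_prop}: condition \ref{cond:UV2} is exactly the requirement $u^T W v \le \theta(u,v)$ for all $(u,v)$, while summing condition \ref{cond:UV1} over $i \in [r]$ yields $\sum_{i=1}^r \tilde U_i^T W \tilde V_i = \sum_{i=1}^r \theta(\tilde U_i, \tilde V_i)$. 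These are precisely the hypotheses of Proposition \ref{prop:omega_prop} part (8), which then guarantees both that $W \in \partial \Omega_\theta(\tilde U \tilde V^T)$ and that $\tilde U \tilde V^T$ is an optimal factorization, i.e.\ $\sum_{i=1}^r \theta(\tilde U_i, \tilde V_i) = \Omega_\theta(\tilde U \tilde V^T)$. These are exactly conditions \eqref{eq:mat_fact_subgrad_cond1} and \eqref{eq:mat_fact_subgrad_cond2} from the theorem's proof.

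With both subgradient memberships in hand, $(\tilde U \tilde V^T, \tilde Q)$ is a global minimizer of $F$. The final step is to close the gap between $F$ and $f$: because the factorization is optimal, $f(\tilde U, \tilde V, \tilde Q) = \ell(Y,\tilde U \tilde V^T, \tilde Q) + \lambda \sum_{i=1}^r \theta(\tilde U_i, \tilde V_i) = \ell(Y,\tilde U \tilde V^T, \tilde Q) + \lambda \Omega_\theta(\tilde U \tilde V^T) = F(\tilde U \tilde V^T, \tilde Q)$. Since \eqref{eq:global_bound} shows that $F$ is a global lower bound for $f$ over all factorizations, and the value of $f$ at $(\tilde U, \tilde V, \tilde Q)$ equals the minimal value of $F$, the point $(\tilde U, \tilde V, \tilde Q)$ must be a global minimum of $f$.

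I expect the only real subtlety to be the passage from the per-index equalities in condition \ref{cond:UV1} to the statement that the factorization attains the infimum in the definition of $\Omega_\theta$; this is where Proposition \ref{prop:omega_prop} part (8) does the essential work, converting a certificate $W$ together with the matching condition into optimality of the factorization. Everything else is a direct transcription of the argument already carried out for Theorem \ref{thm:mat_fact_0_min}, with conditions \ref{cond:UV1}--\ref{cond:UV2} standing in for \eqref{eq:mat_fact_subgrad_cond2}--\eqref{eq:mat_fact_subgrad_cond1}; crucially, no rank-deficiency or perturbation argument is needed here, since global optimality of the candidate point is now assumed through the hypotheses rather than derived from a local minimality condition.
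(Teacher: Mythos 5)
Your proposal is correct and follows essentially the same route as the paper: the paper presents this corollary as an immediate byproduct of the proof of Theorem \ref{thm:mat_fact_0_min}, where conditions \ref{cond:Q}--\ref{cond:UV2} are exactly the memberships $0 \in \partial_Q \ell$ and $-\tfrac{1}{\lambda}\nabla_X \ell \in \partial \Omega_\theta(\tilde U \tilde V^T)$ (via Proposition \ref{prop:omega_prop}, parts (7)--(8)) that certify global optimality of the convex problem $F$, after which the lower bound \eqref{eq:global_bound} and the optimality of the factorization close the gap to $f$. Your explicit spelling-out of the final step, $f(\tilde U,\tilde V,\tilde Q) = F(\tilde U \tilde V^T,\tilde Q) \le F(UV^T,Q) \le f(U,V,Q)$, is exactly the argument the paper leaves implicit.
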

Condition \ref{cond:Q} is easy to verify, as one can hold $(U,V)$ constant and solve a convex optimization problem for $Q$.  Likewise, condition \ref{cond:UV1} is simple to test (note that the condition is equivalent to \eqref{eq:mat_fact_subgrad_cond2} due to the bound in \eqref{eq:mat_fact_subgrad_cond1}), and if a $(U_i,V_i)$ pair exists which does not satisfy the equality, then one can decrease the objective function by scaling $(U_i,V_i)$ by a non-negative constant.  Further, for many problems, it is possible to show that points that satisfy first-order optimality will satisfy conditions \ref{cond:Q} and \ref{cond:UV1}, such as in the following result.
\begin{proposition}
\label{prop:1st_order}
Given a function $\ell(Y,X,Q)$ that is jointly convex in $(X,Q)$ and once differentiable w.r.t. $X$; a constant $\lambda > 0$; and two gauge functions $(\sigma_u(u),\sigma_v(v))$, then for $\theta(u,v) = \sigma_u(u) \sigma_v(v)$ or $\theta(u,v) = \tfrac{1}{2} (\sigma_u(u)^2 + \sigma_v(v)^2)$, any first-order optimal point $(\tilde{U},\tilde{V},\tilde{Q})$ of  $f(U,V,Q)$ in \eqref{eq:mat_fact_obj}
	satisfies conditions \ref{cond:Q}-\ref{cond:UV1} of Corollary \ref{cor:global_min_conds}.
\end{proposition}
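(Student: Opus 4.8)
The plan is to exploit the block structure of the first-order stationarity conditions for $f$, treating $Q$, the individual columns $U_i$, and the columns $V_i$ as separate blocks, and then to convert each block condition into the desired equality using the subdifferential calculus of gauge functions. Throughout I would write $N \equiv \nabla_X \ell(Y,\tilde U \tilde V^T,\tilde Q)$ and interpret first-order optimality as block-wise stationarity, i.e. $0$ lies in the subdifferential of $f$ with respect to $Q$, with respect to each $U_i$, and with respect to each $V_i$. Condition \ref{cond:Q} is then immediate: since $f$ depends on $Q$ only through $\ell$, which is convex in $Q$, stationarity in the $Q$ block is exactly $0 \in \partial_Q \ell(Y,\tilde U \tilde V^T,\tilde Q)$.

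For condition \ref{cond:UV1} I would fix all blocks except a single column and use $X = \sum_j U_j V_j^T$ to compute the partial loss gradients $\nabla_{U_i}\ell = N \tilde V_i$ and $\nabla_{V_i}\ell = N^T \tilde U_i$, so that stationarity reads $-\tfrac1\lambda N \tilde V_i \in \partial_{U_i}\theta(\tilde U_i,\tilde V_i)$ and $-\tfrac1\lambda N^T \tilde U_i \in \partial_{V_i}\theta(\tilde U_i,\tilde V_i)$. The heart of the argument is the Euler identity for gauges: for a gauge $\sigma$ and any $g \in \partial\sigma(u)$ one has $\langle g,u\rangle = \sigma(u)$, which follows solely from positive homogeneity of degree one (and hence does not require symmetry, so it applies to gauges that are not norms). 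For $\theta = \sigma_u\sigma_v$, the product rule gives $\partial_{U_i}\theta = \sigma_v(\tilde V_i)\,\partial\sigma_u(\tilde U_i)$; writing $-\tfrac1\lambda N\tilde V_i = \sigma_v(\tilde V_i)\,g$ with $g \in \partial\sigma_u(\tilde U_i)$ and taking the inner product with $\tilde U_i$ yields $\tilde U_i^T(-\tfrac1\lambda N)\tilde V_i = \sigma_v(\tilde V_i)\sigma_u(\tilde U_i) = \theta(\tilde U_i,\tilde V_i)$, which is condition \ref{cond:UV1}. For $\theta = \tfrac12(\sigma_u^2+\sigma_v^2)$, the chain rule $\partial(\tfrac12\sigma^2) = \sigma\,\partial\sigma$ gives $\partial_{U_i}\theta = \sigma_u(\tilde U_i)\,\partial\sigma_u(\tilde U_i)$; the same inner-product step yields $\tilde U_i^T(-\tfrac1\lambda N)\tilde V_i = \sigma_u(\tilde U_i)^2$, and the symmetric computation on the $V_i$ block gives $\tilde U_i^T(-\tfrac1\lambda N)\tilde V_i = \sigma_v(\tilde V_i)^2$. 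Hence $\sigma_u(\tilde U_i)^2 = \sigma_v(\tilde V_i)^2$, and this common value equals $\tfrac12(\sigma_u(\tilde U_i)^2 + \sigma_v(\tilde V_i)^2) = \theta(\tilde U_i,\tilde V_i)$, again giving condition \ref{cond:UV1}.

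The hard part will be the careful treatment of degenerate columns, where $\tilde U_i = 0$ or $\tilde V_i = 0$ and the gauge subdifferentials are set-valued cones so the product/chain rules need separate handling. In these cases I would argue directly: if $\tilde U_i = 0$ then the left side of condition \ref{cond:UV1} vanishes, and for the product form the right side $\sigma_u(0)\sigma_v(\tilde V_i) = 0$ matches, whereas for the sum-of-squares form the $V_i$-block stationarity $0 \in \sigma_v(\tilde V_i)\,\partial\sigma_v(\tilde V_i)$ forces $\sigma_v(\tilde V_i) = 0$, since $0 \in \partial\sigma_v(v)$ only at a minimizer of the gauge, so both sides again vanish. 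The remaining technical points are to confirm the validity of the product and chain rules for subdifferentials of compositions of gauges with the squaring map, which are standard for nonnegative convex functions, and to verify that the block stationarity conditions I invoke are genuinely implied by the stated notion of a first-order optimal point.
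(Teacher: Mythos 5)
Your proposal is correct and follows essentially the same route as the paper's proof: block-wise first-order stationarity in $Q$, $U_i$, and $V_i$, followed by left-multiplication with $\tilde U_i^T$ and $\tilde V_i^T$ and the Euler identity $\langle g,u\rangle = \sigma(u)$ for $g \in \partial\sigma(u)$, which is exactly the content of the paper's displayed gauge-subgradient identities for $\theta_p$ and $\theta_s$. The only difference is that you spell out the derivation of those identities (product/chain rules) and explicitly treat the degenerate columns with $\tilde U_i = 0$ or $\tilde V_i = 0$, a case the paper's proof subsumes under ``basic properties of subgradients of gauge functions'' without comment.
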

\begin{proof} See Appendix \ref{sec:1st_order_proof}. \end{proof}

As many optimization algorithms can guarantee convergence to first-order optimal points, from the above result and discussion it is clear that the primary challenge in verifying if a given point is globally optimal is to test if condition  \ref{cond:UV2} of Corollary \ref{cor:global_min_conds} holds true.  This is known as the polar problem and is discussed in detail next. 

\subsection{Ensuring Local Minimality via the Polar Problem}
\label{subsec:polar_prob}
Note that, because the optimization problem in \eqref{eq:mat_fact_obj} is non-convex, first-order optimality is not sufficient to guarantee a local minimum. Thus, to apply the results from Section~\ref{sec:main-results} in practice one needs to verify that condition \ref{cond:UV2} from Corollary \ref{cor:global_min_conds} is satisfied. This problem is known as the polar problem and generalizes the concept of a dual norm. In particular, the proof of Proposition \ref{prop:omega_prop} shows that the polar function $\Omega_{\theta}^{\circ} (Z) \equiv \sup_X \langle Z, X \rangle \st \Omega_{\theta}(X) \leq 1$ of a given matrix factorization regularizer $\Omega_{\theta}(X)$ can be computed as
\begin{equation}
	\label{eq:mat_polar}
		\Omega_{\theta}^{\circ}(Z) = \sup_{u,v} u^T Z v \ST \theta(u,v) \leq 1.
\end{equation}
Therefore, condition \ref{cond:UV2} of Corollary \ref{cor:global_min_conds} is equivalent to $\Omega_{\theta}^{\circ}(\tfrac{-1}{\lambda} \nabla_X \ell(Y,\tilde{U}\tilde{V}^T,\tilde{Q})) \leq 1$.  Note that the difficulty of solving the polar problem heavily depends on the particular choice of the $\theta$ function.  For example for $\theta(u,v) = \|u\|_1 \|v\|_1$ the polar problem reduces to simply finding the largest entry of $Z$ in absolute value, while for $\theta(u,v) = \|u\|_{\infty}\|v\|_{\infty}$ solving the polar problem is known to be NP-hard \cite{Hendrickx:SIAM2010}.

While for general $\theta(u,v)$ functions it is not necessarily known how to efficiently solve the polar problem, given a point $(\tilde{U},\tilde{V},\tilde{Q})$ that satisfies conditions \ref{cond:Q} and \ref{cond:UV1} of Corollary \ref{cor:global_min_conds}, the value of the polar problem solution at a given point and how closely the polar problem can be approximated provide a bound on how far a particular point is from being globally optimal.  The bound is based on the following result.\!

\begin{proposition}
\label{prop:approx_bound}
	Given a function $\ell(Y,X,Q)$ that is lower-semicontinuous, jointly convex in $(X,Q)$, and once differentiable w.r.t. $X$; a rank-1 regularizer $\theta$ that satisfies the conditions in Definition \ref{def:mat_fact_reg}; and a constant $\lambda > 0$, 
	for 	any point $(\tilde{U},\tilde{V},\tilde{Q})$ that satisfies conditions \ref{cond:Q} and \ref{cond:UV1} of Corollary \ref{cor:global_min_conds}, we have the following bound
		\begin{equation}
			\begin{split}
			f ( \tilde{U}, \tilde{V}, &~  \tilde{Q}) - F(\hat{X},\hat{Q}) \leq \\
			& \lambda \Omega_{\theta}(\hat{X}) [\Omega_{\theta}^{\circ}(\tfrac{-1}{\lambda}\nabla_X \ell(Y,\tilde{U}\tilde{V}^T,\tilde{Q})) - 1] \\
			&-  \tfrac{m_X}{2}\|\tilde{U}\tilde{V}^T-\hat{X}\|_F^2 - \tfrac{m_Q}{2} \| \tilde{Q}-\hat{Q} \|_F^2 , 
			\end{split}
		\end{equation}
		where $(\hat{X},\hat{Q})$ denotes a global minimizer of $F(X,Q)$ in \eqref{general_obj_nofact}, and $m_X \geq 0$ and $m_Q \geq 0$ denote the constants of strong-convexity of $\ell$ w.r.t. $X$ and $Q$, respectively (note that both $m$ constants can be 0 if $\ell$ is not strongly convex).
\end{proposition}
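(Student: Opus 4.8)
The plan is to turn the first-order (strongly convex) lower bound for $\ell$ into a comparison between $f(\tilde U,\tilde V,\tilde Q)$ and $F(\hat X,\hat Q)$, using the two assumed conditions of Corollary \ref{cor:global_min_conds} to identify terms and the gauge--polar inequality $\langle Z,X\rangle \leq \Omega_{\theta}^{\circ}(Z)\,\Omega_{\theta}(X)$ to produce the polar gap. Throughout I write $\tilde X = \tilde U \tilde V^T$ and abbreviate the candidate dual certificate as $Z = \tfrac{-1}{\lambda}\nabla_X \ell(Y,\tilde X,\tilde Q)$, so that condition \ref{cond:UV1}, summed over $i$, reads $\langle Z,\tilde X\rangle = \sum_{i=1}^r \theta(\tilde U_i,\tilde V_i)$, while condition \ref{cond:Q} says $0 \in \partial_Q \ell(Y,\tilde X,\tilde Q)$.

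First I would invoke joint convexity of $\ell$. Since $\ell$ is convex and differentiable in $X$ with strong-convexity modulus $m_X$, and has modulus $m_Q$ in $Q$, for the subgradient choice $g_Q = 0 \in \partial_Q \ell(Y,\tilde X,\tilde Q)$ (available by condition \ref{cond:Q}) the strong-convexity inequality gives
\begin{equation*}
\ell(Y,\hat X,\hat Q) \geq \ell(Y,\tilde X,\tilde Q) - \lambda\langle Z,\hat X - \tilde X\rangle + \tfrac{m_X}{2}\|\hat X - \tilde X\|_F^2 + \tfrac{m_Q}{2}\|\hat Q - \tilde Q\|_F^2,
\end{equation*}
where I have already substituted $\nabla_X \ell(Y,\tilde X,\tilde Q) = -\lambda Z$. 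Adding $\lambda\Omega_{\theta}(\hat X)$ to both sides assembles $F(\hat X,\hat Q)$ on the left. On the right, the term $\lambda\langle Z,\tilde X\rangle$ combines with $\ell(Y,\tilde X,\tilde Q)$ via the summed condition \ref{cond:UV1} to form exactly $\ell(Y,\tilde X,\tilde Q) + \lambda\sum_i \theta(\tilde U_i,\tilde V_i) = f(\tilde U,\tilde V,\tilde Q)$. Rearranging leaves
\begin{equation*}
f(\tilde U,\tilde V,\tilde Q) - F(\hat X,\hat Q) \leq \lambda\langle Z,\hat X\rangle - \lambda\Omega_{\theta}(\hat X) - \tfrac{m_X}{2}\|\hat X - \tilde X\|_F^2 - \tfrac{m_Q}{2}\|\hat Q - \tilde Q\|_F^2.
\end{equation*}

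The final step is to bound the residual inner product. Since $\Omega_{\theta}$ is a gauge (Proposition \ref{prop:omega_prop}), its polar satisfies $\langle Z,\hat X\rangle \leq \Omega_{\theta}^{\circ}(Z)\,\Omega_{\theta}(\hat X)$: this is immediate from the definition of the polar after normalizing $\hat X$ by $\Omega_{\theta}(\hat X)$ when $\Omega_{\theta}(\hat X) > 0$, and both sides vanish in the degenerate case $\hat X = 0$. Substituting this bound collapses the first two right-hand terms into $\lambda\Omega_{\theta}(\hat X)[\Omega_{\theta}^{\circ}(Z) - 1]$, which is precisely the claimed expression. I expect the proof to be mostly bookkeeping; the only genuine content is recognizing that the two corollary conditions are exactly what is needed to convert the generic convexity bound into an $f$-versus-$F$ comparison. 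The points requiring care are the handling of the $Q$-subgradient (since $\ell$ need not be differentiable in $Q$, the strong-convexity inequality must be written with a subgradient, which condition \ref{cond:Q} then sets to zero) and the validity of the gauge--polar inequality in the $\Omega_{\theta}(\hat X) = 0$ edge case; lower-semicontinuity and the growth property of $\theta$ are used only to guarantee that the minimizer $(\hat X,\hat Q)$ invoked in the statement exists.
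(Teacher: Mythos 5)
Your proposal is correct and follows essentially the same argument as the paper's proof: the strong-convexity lower bound on $\ell$ with the zero subgradient in $Q$ (condition \ref{cond:Q}), the identification $\langle Z,\tilde{X}\rangle = \sum_i \theta(\tilde{U}_i,\tilde{V}_i)$ from condition \ref{cond:UV1}, and the gauge--polar inequality $\langle Z,\hat{X}\rangle \leq \Omega_{\theta}(\hat{X})\,\Omega_{\theta}^{\circ}(Z)$, followed by rearrangement. The only differences are cosmetic bookkeeping (you add $\lambda\Omega_{\theta}(\hat{X})$ earlier than the paper does) plus your explicit treatment of the $\hat{X}=0$ edge case, which the paper leaves implicit.
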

\begin{proof} See Appendix \ref{sec:approx_bound_proof}. \end{proof}

There are a few interpretations one can draw from the above proposition.  First, if $\hat{X}=0$ is a solution to $\min_{X,Q} F(X,Q)$, then the only $(U,V)$ pair that will satisfy conditions \ref{cond:Q} and \ref{cond:UV1} of Corollary \ref{cor:global_min_conds} is a global optimum (and if $m_X > 0$ then the solution is the unique solution $UV^T = 0$).  Second, for $\hat{X} \neq 0$ recall that $\Omega_{\theta}(\hat{X}) > 0$ and $\Omega_{\theta}^{\circ}(\tfrac{-1}{\lambda} \nabla_X \ell(Y,\tilde{U}\tilde{V}^T,\tilde{Q})) \geq 1$, since if condition \ref{cond:UV1} of Corollary \ref{cor:global_min_conds} is satisfied then the polar is clearly at least equal to 1 by definition of the polar.  If the polar is truly greater than 1, then the $[\Omega_{\theta}^{\circ}(\tfrac{-1}{\lambda}\nabla_X \ell(Y,\tilde{U}\tilde{V}^T,\tilde{Q})) - 1]$ in the above proposition effectively measures the error between the true value of the polar and our lower-bound estimate of the polar.  Further, the maximum difference between a first-order optimal point and the global minimum is upper bounded by the value of the polar at that point, and if the loss function $\ell$ is strongly convex, the error in the objective function is decreased further.  As a result, if one can guarantee solutions to the polar problem to within a given error level or provide an upper-bound on the polar problem, one can also guarantee solutions that are within a given error level of the global optimum.

A final interpretation of Proposition \ref{prop:approx_bound} is to note that the final condition of Corollary \ref{cor:global_min_conds} (and the need for an all-zero column in Theorem \ref{thm:mat_fact_0_min}) is essentially a check that the size of the representation (i.e., the number of columns $r$ in $U$ and $V$) is sufficiently large to represent the global optimum.  If, instead, we find a local minimum with a smaller representation than the global optimum, $r < \hat{r}$, where $\hat{r}$ denotes the number of columns in the global optimum, then the value of the $[\Omega_{\theta}^{\circ}(\tfrac{-1}{\lambda}\nabla_X \ell(Y,\tilde{U}\tilde{V}^T,\tilde{Q})) - 1]$ bounds how far from the global minimum we are by using a more compact representation (i.e., using only $r$ instead of $\hat{r}$ columns).

As a concrete example, consider the case where $\theta(u,v) = \|u\|_2 \|v\|_2$.  Recall that this choice of $\theta$ gives the nuclear norm, $\Omega_{\theta}(X) = \|X\|_*$, whose polar function is given by
\begin{align}
	\|Z\|_*^{\circ} &= \sup_{u,v} \{ u^T Z v \st \|u\|_2 \|v\|_2 \leq 1\} \\
		&= \sup_{u,v} \{ u^T Z v \st \|u\|_2 \leq 1, \ \|v\|_2 \leq 1\} = \sigma_{\max} (Z),\nonumber
\end{align}
where $\sigma_{\max}(Z)$ denotes the largest singular value of $Z$ (and thus $\| \cdot \|_*^{\circ}$ is the spectral norm).  In this case, given any first-order optimal point $(\tilde{U},\tilde{V},\tilde{Q})$, then Proposition \ref{prop:approx_bound} guarantees that the distance of the current point from the global minimum is bounded by $\lambda \|\hat{X}\|_* [\sigma_{\max}(\tfrac{-1}{\lambda} \nabla_X \ell(Y,\tilde{U}\tilde{V}^T,\tilde{Q}))-1]$.  If $(\tilde{U},\tilde{V},\tilde{Q})$ is a global minimizer, then the largest singular value term will be equal to 1 (and hence the bound is 0), while if the largest singular value term is greater than 1 this indicates that $(\tilde{U},\tilde{V})$ do not have sufficiently many columns to represent the global optimum (or some of the columns are redundant), and the size of the representation should be increased. Further, appending the largest singular vector pair $(u,v)$ to the factorization $U \leftarrow [\tilde{U} \ \tau u]$ and $V \leftarrow [\tilde{V} \ \tau v]$ (as this is the vector pair that achieves the supremum of the polar function) will be guaranteed to reduce the objective function for some step size $\tau > 0$. 

\subsection{Reachability of Global Minima via Local Descent}
\label{subsec:desc_strat}
Building on the above results, which provide sufficient conditions to guarantee global optimality, we now describe a generic meta-algorithm that is guaranteed to reach the global minimum of problem \eqref{eq:mat_fact_obj} via local descent, provided the number of columns in $(U,V)$ is allowed to be variable and adapted to the data via regularization.  In particular, recall that the condition of Theorem \ref{thm:mat_fact_0_min} that one column of $(U,V)$ must be entirely 0 is essentially a check that $r$ is sufficiently large to represent the optimal factorization of a global optimum, $\hat X$, of \eqref{general_obj_nofact}. If the condition is not satisfied, then the size of the factorization can be increased by appending a column of all zeros without changing the value of the objective and then proceeding with local descent.  This suggests a meta-algorithm of alternating between local descent on \eqref{eq:mat_fact_obj}, followed by incrementing $r$ by appending a column of all zeros.  Algorithm \ref{alg:mat_fact_meta} outlines the steps of this approach and adds an additional step to remove redundant columns of $(U,V)$ from the factorization (without changing the value of the objective function), and the following result formally proves that the meta-algorithm will find a global minimizer of \eqref{general_obj_nofact} and \eqref{eq:mat_fact_obj}, with the number of columns in $(U,V)$ never growing larger than the dimensionality of $X$.
\begin{theorem}
\label{thm:non_inc_path}
Given a function $\ell(Y,X,Q)$ that is jointly convex in $(X,Q)$ and once differentiable w.r.t. $X$; a rank-1 regularizer $\theta$ that satisfies the conditions in Definition \ref{def:mat_fact_reg}; and a constant $\lambda > 0$, then Algorithm \ref{alg:mat_fact_meta} will monotonically decrease the value of \eqref{eq:mat_fact_obj} and terminate at a global minimum of \eqref{general_obj_nofact} and \eqref{eq:mat_fact_obj} with $r \leq \max \{DN+1 ,r_{init} \}$.
\end{theorem}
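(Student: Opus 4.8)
The plan is to analyze the three operations that compose one outer iteration of Algorithm~\ref{alg:mat_fact_meta} — local descent on \eqref{eq:mat_fact_obj}, appending an all-zero column pair, and removing redundant columns — and to show three things: that none of these operations increases the objective, that the stopping criterion coincides exactly with the global-optimality certificate, and that redundant-column removal keeps $r$ bounded. First I would establish the monotonic decrease. Appending a column pair $(U_{r+1},V_{r+1})=(0,0)$ leaves both $UV^T$ and, by the property $\theta(0,0)=0$ from Definition~\ref{def:rank1_meas}, the penalty $\sum_i \theta(U_i,V_i)$ unchanged, so $f$ is unchanged; the local-descent step is non-increasing by construction. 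For redundant-column removal I would invoke Proposition~\ref{prop:omega_prop}(5): if the iterate $\tilde X=\tilde U \tilde V^T$ has a non-minimal factorization, then there is a factorization of $\tilde X$ with at most $DN$ columns whose penalty equals $\Omega_\theta(\tilde X)\le \sum_i \theta(\tilde U_i,\tilde V_i)$, and replacing the factorization preserves $\tilde X$ (hence the loss $\ell$) while not increasing the penalty. Composing the three steps yields a non-increasing sequence of objective values.

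Next I would argue correctness at termination. The algorithm stops precisely when, after appending a zero column and running local descent, the iterate is unchanged, i.e.\ it is a local minimum of \eqref{eq:mat_fact_obj} that possesses an all-zero column pair. Theorem~\ref{thm:mat_fact_0_min} then immediately certifies that such a point is globally optimal for both \eqref{eq:mat_fact_obj} and the convex problem \eqref{general_obj_nofact}, and that $\tilde U \tilde V^T$ is an optimal factorization of the convex minimizer. Equivalently, this is the moment at which the polar condition \ref{cond:UV2} of Corollary~\ref{cor:global_min_conds} becomes satisfied while conditions \ref{cond:Q}--\ref{cond:UV1} already hold at the local minimum reached by descent. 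If instead condition \ref{cond:UV2} fails, the polar strictly exceeds $1$, so the appended zero column admits a descent direction (scaling the polar-maximizing pair $(u,v)$ by a small step reduces $f$), local descent produces a strict decrease, and the loop continues.

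The rank bound then follows by combining the two size-changing steps: the removal step caps $r$ at $DN$ via Proposition~\ref{prop:omega_prop}(5), after which a single zero column may be appended to yield $r\le DN+1$; since the initial iterate may already carry $r_{init}$ columns, we obtain $r\le\max\{DN+1,r_{init}\}$ throughout the run.

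The main obstacle is establishing that the descent--append cycle genuinely terminates at, rather than merely approaching, a global minimum. The resolution I would emphasize is that the stopping test is not a numerical tolerance but an exact optimality certificate, producing a clean dichotomy: either the augmented iterate is a local minimum, in which case Theorem~\ref{thm:mat_fact_0_min} gives global optimality and the algorithm halts, or it is not, in which case the polar exceeds $1$ and descent strictly decreases $f$. I would therefore conclude that every fixed point of the cycle is a global minimizer, and that the monotone, bounded-below objective together with this strict-decrease-unless-optimal dichotomy forces the iterates to a global minimum. Proposition~\ref{prop:approx_bound} provides the quantitative backbone, since for any iterate satisfying conditions \ref{cond:Q}--\ref{cond:UV1} it bounds the suboptimality $f(\tilde U,\tilde V,\tilde Q)-F(\hat X,\hat Q)$ by a nonnegative multiple of $[\Omega_{\theta}^{\circ}(\tfrac{-1}{\lambda}\nabla_X \ell(Y,\tilde U \tilde V^T,\tilde Q))-1]$, so that driving the polar to $1$ drives the suboptimality to $0$ and confirms global optimality at the termination point.
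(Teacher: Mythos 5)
Your high-level skeleton (monotone decrease, termination only at points certified global by Theorem~\ref{thm:mat_fact_0_min}, and a dimension-counting rank bound) matches the paper's, but there is a genuine gap in how you handle the column-reduction branch of Algorithm~\ref{alg:mat_fact_meta}, which is exactly where the paper's proof does its real technical work. The algorithm does not ``replace the factorization by a minimal one'' — that operation (invoking Proposition~\ref{prop:omega_prop}(5) to re-factorize $\tilde X$ with at most $DN$ columns at penalty $\Omega_\theta(\tilde X)$) would require solving the non-convex optimal-factorization problem, which is precisely what is intractable, and in any case it is not the step the theorem is about. What the algorithm actually does is: find $\beta\neq 0$ with $\sum_{i=1}^r \beta_i \tilde U_i \tilde V_i^T = 0$, normalize so $\min_i\beta_i=-1$, and rescale $(U_i,V_i)\leftarrow((1+\beta_i)^{1/2}\tilde U_i,(1+\beta_i)^{1/2}\tilde V_i)$. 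This rescaling leaves the loss unchanged (the product is unchanged since $\sum_i\beta_i\tilde U_i\tilde V_i^T=0$), but by positive homogeneity it changes the penalty by $\lambda\sum_i \beta_i\,\theta(\tilde U_i,\tilde V_i)$, which a priori could be \emph{positive}. The missing idea is the paper's two-sided perturbation argument: since $(\tilde U,\tilde V,\tilde Q)$ is a local minimum, comparing the objective at the scalings $(1\pm\epsilon\beta_i)^{1/2}$ for small $\epsilon>0$ forces
\begin{equation*}
\sum_{i=1}^r \beta_i\,\theta(\tilde U_i,\tilde V_i) = 0,
\end{equation*}
so the path $\epsilon\mapsto (U_{1+\epsilon\beta},V_{1+\epsilon\beta},\tilde Q)$ has constant objective all the way to $\epsilon=1$, where the column with $\beta_i=-1$ becomes identically zero and Theorem~\ref{thm:mat_fact_0_min} applies. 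Without this lemma your monotonicity claim for that branch is unproven.

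The same misreading infects your rank bound. The algorithm never removes columns; $r$ stays fixed in the $\beta$-branch and increases by one only in the append branch. The bound $r\le\max\{DN+1,r_{init}\}$ follows because the append branch fires only when no nonzero $\beta$ with $\sum_i\beta_i\tilde U_i\tilde V_i^T=0$ exists, and such a $\beta$ is guaranteed whenever $r>DN$ (the $r$ rank-one matrices live in the $DN$-dimensional space $\Re^{D\times N}$, hence are linearly dependent); so appending can only occur when $r\le DN$. Your termination dichotomy via Theorem~\ref{thm:mat_fact_0_min} and the closing remark about Proposition~\ref{prop:approx_bound} are consistent with the paper, but the proof cannot stand without the $\beta$-scaling argument above.
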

\begin{proof} See Appendix \ref{sec:non_inc_path_proof}. \end{proof}

Additionally, note that the proof of Theorem \ref{thm:non_inc_path} provides a formal proof that the error surface of \eqref{eq:mat_fact_obj} does not contain any local minima which require one to increase the value of the objective function to escape from, provided the number of columns are initialized to be sufficiently large.
\begin{corollary}
Under the conditions in Theorem \ref{thm:non_inc_path}, if $r > DN$ then from any $(U_0,V_0,Q_0)$ such that $f(U_0,V_0,Q_0) < \infty$ there must exist a non-increasing path from $(U_0,V_0,Q_0)$ to a global minimizer of $f(U,V,Q)$.
\end{corollary}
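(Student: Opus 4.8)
The plan is to produce the required non-increasing path explicitly, as the trajectory traced out by the meta-algorithm of Theorem~\ref{thm:non_inc_path} (Algorithm~\ref{alg:mat_fact_meta}) when it is initialized at the given point $(U_0,V_0,Q_0)$. Theorem~\ref{thm:non_inc_path} already guarantees that this trajectory monotonically decreases $f$ in \eqref{eq:mat_fact_obj} and terminates at a global minimizer of both \eqref{general_obj_nofact} and \eqref{eq:mat_fact_obj}, with the number of columns never exceeding $\max\{DN+1,r_{init}\}$. Setting $r_{init}=r$ and using the hypothesis $r>DN$ (so that $r\geq DN+1$), this bound equals $r$. Hence the entire trajectory can be carried out without the column count ever exceeding the $r$ columns we are already given, and the corollary will follow once we verify that the trajectory can in fact be realized inside the fixed $r$-column parameterization.

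First I would decompose the meta-algorithm's trajectory into its elementary moves: local-descent segments, which are non-increasing by construction; column rescalings of the form $(U_i,V_i)\mapsto(\alpha U_i,\alpha^{-1}V_i)$, which leave both $U_iV_i^T$ and, by positive homogeneity of $\theta$, the objective unchanged; and the removal of redundant columns, which leaves $f$ unchanged. The only step that can grow the column count is the appending of an all-zero column pair, and this step is invoked only when the current iterate does not already possess a spare zero column from which to seed a new descent direction. The crux is therefore to show that, under $r>DN$, such a spare zero column pair is always available, so that the append step is never needed and the whole path lives in $\Re^{D\times r}\times\Re^{N\times r}$ together with the domain of $Q$. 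This is exactly where Proposition~\ref{prop:omega_prop}(5) enters: any $X$ admits an optimal factorization using at most $DN$ columns, so at any iterate the redundant-column-elimination step can reduce the number of nonzero, non-redundant columns to at most $DN$ at no change in objective value, leaving at least $r-DN\geq1$ columns free to be zeroed out. Concatenating the non-increasing descent segments with these objective-preserving rearrangements yields a single non-increasing path from $(U_0,V_0,Q_0)$ to a global minimizer, as required.

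The main obstacle I anticipate is precisely this verification that the budget $r>DN$ removes any need to temporarily exceed $r$ columns. Equivalently, I must argue that whenever a local-descent segment halts at a point that is not yet globally optimal, an objective-non-increasing rearrangement inside the $r$-column space (freeing a zero column via Proposition~\ref{prop:omega_prop}(5) and then re-running descent, or appending the polar-maximizing direction into that freed column as in the discussion following Proposition~\ref{prop:approx_bound}) re-enables strict descent; by Theorem~\ref{thm:mat_fact_0_min}, the process can only terminate at an iterate possessing a zero column, and every such local minimum is global. All of this mechanism is supplied by the proof of Theorem~\ref{thm:non_inc_path}, so the corollary reduces to observing that its column bound collapses to $r$ under the hypothesis $r>DN$.
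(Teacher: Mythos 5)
Your overall route is the paper's own: trace the meta-algorithm (Algorithm~\ref{alg:mat_fact_meta}) from $(U_0,V_0,Q_0)$ and observe that the column bound $\max\{DN+1,r_{init}\}$ of Theorem~\ref{thm:non_inc_path} collapses to $r$ when $r_{init}=r>DN$, so no appending is ever needed. However, your justification of the crucial step---that whenever local descent stalls at a local minimum, an all-zero column can be created without increasing the objective and without leaving the fixed $r$-column parameterization---has two genuine flaws. First, Proposition~\ref{prop:omega_prop}, part~5, is the wrong tool: it asserts that $\Omega_\theta(UV^T)$ is \emph{achieved} by some factorization with at most $DN$ columns, but the current iterate's factorization generally satisfies $\sum_i\theta(U_i,V_i)>\Omega_\theta(UV^T)$, and replacing the given columns by an optimal factorization is a jump in parameter space, not a path; nothing in that proposition produces a continuous, objective-non-increasing deformation of the columns you actually have. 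Second, the elementary move you list as objective-preserving, $(U_i,V_i)\mapsto(\alpha U_i,\alpha^{-1}V_i)$, is not: positive homogeneity in Definition~\ref{def:rank1_meas} is \emph{joint}, $\theta(\alpha u,\alpha v)=\alpha^2\theta(u,v)$, and the asymmetric rescaling can change $\theta(u,v)$ while fixing $uv^T$---indeed, property~3 of Definition~\ref{def:rank1_meas} exists precisely because such rescalings can otherwise drive $\theta$ down.

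The mechanism that actually closes the gap, and is what the paper's proof of Theorem~\ref{thm:non_inc_path} uses, is linear dependence: when $r>DN$, the $r$ rank-1 matrices $\tilde U_i\tilde V_i^T$ live in the $DN$-dimensional space $\Re^{D\times N}$, hence there is a nonzero $\beta$ with $\sum_i\beta_i\tilde U_i\tilde V_i^T=0$. At a local minimum, perturbing the columns by $(1\pm\epsilon\beta_i)^{1/2}$ and invoking joint positive homogeneity forces $\sum_i\beta_i\theta(\tilde U_i,\tilde V_i)=0$; consequently, scaling column $i$ by $(1+\epsilon\beta_i)^{1/2}$ as $\epsilon$ runs from $0$ to $1$ (with $\beta$ normalized so that $\min_i\beta_i=-1$) is a \emph{constant-objective} path that terminates with an all-zero column, after which Theorem~\ref{thm:mat_fact_0_min} applies: either the point is a global minimum, or a strict descent direction exists and descent resumes. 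Substituting this argument for your Proposition-\ref{prop:omega_prop}-based one repairs the proof; the rest of your proposal (monotone descent segments, termination at a zero-column local minimum, and the collapse of the column bound to $r$) is sound and matches the paper.
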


Finally, we pause to caution that the above results apply to local descent, not necessarily first-order descent (e.g., gradient descent or the algorithm we describe below), and many first-order descent algorithms can typically only guarantee convergence to a critical point but not necessarily a local minimum.  As a result the main computational challenge 
is to find a local descent direction from a critical point, which can be accomplished by finding a $(u,v)$ pair 
such that $\theta(u,v) \leq 1$ and $u^T (\tfrac{-1}{\lambda} \nabla_X \ell(Y,UV^T,Q)) v > 1$.  However, to find such a pair (if it exists) in the worst case we would need to be able to solve the polar problem to guarantee global optimality, which can be challenging as discussed above, but note that from Proposition \ref{prop:approx_bound}, as it becomes harder to find $(u,v)$ pairs that can be used to decrease the objective function (i.e., the true value of the polar function at a critical point moves closer to 1) we are also guaranteed to be closer to the global minimum.

\begin{algorithm}
   \caption{\bf (Meta-Algorithm)}
   \label{alg:mat_fact_meta}
\begin{algorithmic}
	\INPUT{Initialization for variables, $(U_{init},V_{init},Q_{init})$}
	\WHILE{Not Converged}
		\STATE{Local descent to arrive at a local minimum $(\tilde{U},\tilde{V},\tilde{Q})$.}
		\IF{$(\tilde{U}_i,\tilde{V}_i)=(0,0)$ for some $i\in [r]$.}
			\STATE{At global optimum, return.}
		\ELSE
			\IF{$\exists \beta \in \Re^r / 0$ such that $\sum_{i=1}^r \beta_i \tilde U_i \tilde V_i^T = 0$.}
				\STATE{Scale $\beta$ so that $\min_i \beta_i = -1$.}
				\STATE{Set $(U_i,V_i) \leftarrow ((1+\beta_i)^{1/2} \tilde U_i,(1+\beta_i)^{1/2} \tilde V_i)$.}
				\STATE{$\backslash\backslash$ We now have an all-zero column in $(U,V)$.}
			\ELSE
				\STATE{Append an all zero column to $(U,V)$.}
				\STATE{$(U,V) \leftarrow ([\tilde U \ \ 0],[\tilde V \ \ 0])$.}
			\ENDIF
		\ENDIF
		\STATE{Continue loop.}
	\ENDWHILE
\end{algorithmic}
\end{algorithm}

\section{Minimization Algorithm}

In the previous section we described and analyzed a generic meta-algorithm that can be used 
to solve the family of problems we consider. However, while the meta-algorithm is useful from an analysis perspective, it still needs to be paired with a computationally practical algorithm to be of use.  Here we describe a specific and easily implementable algorithm that applies to a wide range of the problems discussed above that allows one to be guaranteed convergence to a critical point of the non-convex objective function \eqref{eq:mat_fact_obj}.  

Before we begin the discussion of the algorithm, note that in addition to the conditions included in Theorem \ref{thm:mat_fact_0_min}, the particular method we present here assumes that the gradients of the loss function $\ell(Y,UV^T,Q)$ w.r.t. $U$ and w.r.t. $V$ (denoted as $\nabla_U \ell(Y,UV^T,Q)$ and $\nabla_V \ell(Y,UV^T,Q)$, respectively) are Lipschitz continuous (i.e. the gradient w.r.t. $U$ is Lipschitz continuous for any fixed value of $V$ and vice versa). For a variable $Z$, $L_Z^k$ will be used to notate the Lipschitz constant of the gradient w.r.t. $Z$ with the other variables held fixed at iteration $k$. Additionally, we will assume that $\theta(u,v)$ is convex in $u$ if $v$ is held fixed and vice versa (but not necessarily jointly convex in $u$ and $v$).  Under these assumptions on $\ell$ and $\theta$, the bilinear structure of our objective function \eqref{eq:mat_fact_obj} gives convex subproblems if we update $U$ or $V$ independently while holding the other fixed, making an alternating minimization strategy efficient
and easy to implement. Further, we assume that $\ell(Y,UV^T,Q) = \hat{\ell}(Y,UV^T,Q) + H(Q)$ where $\hat{\ell}(Y,UV^T,Q)$ is a convex, once differentiable function of $Q$ with Lipschitz continuous gradient 
and $H(Q)$ is convex but possibly non-differentiable.\footnote{Note that the assumption that there is a component of the objective that is differentiable w.r.t. $Q$, $\hat{\ell}$, is only needed to use the particular update strategy we describe here.  In general one could also optimize objective functions that are non-differentiable w.r.t. $Q$ (but which do need to be convex w.r.t. $Q$) by doing a full minimization w.r.t. $Q$ at each iteration instead of just a proximal gradient update.  See \cite{Xu:SIAM2013} for details.}

The updates to our variables are made using accelerated proximal-linear steps similar to the FISTA algorithm, which entails solving a proximal operator of an extrapolated gradient step to update each variable \cite{Beck2009,Xu:SIAM2013}.  The general structure of the alternating updates we use is given in Algorithm \ref{algorithm_low_rank_fact}, and the key point is that to update either $U$, $V$, or $Q$ the primary computational burden lies in calculating the gradient of the loss function and then calculating a proximal operator.  The structure
of the non-differentiable term in \eqref{eq:mat_fact_obj} allows the proximal operators for $U$ and $V$ to be separated into columns, greatly reducing the complexity of calculating the proximal operator and offering the potential for substantial parallelization.  

{\fontsize{10}{1}
\begin{algorithm}
	\caption{\bf (Structured Matrix Factorization)}
	\label{algorithm_low_rank_fact}
\begin{algorithmic}
   \STATE {\bfseries Input:} $Y$, $U^0$, $V^0$, $Q^0$, $\lambda$, NumIter
	 \STATE{ Initialize $\hat{U}^1=U^0$, $\hat{V}^1=V^0$, $\hat{Q}^1=Q^0$, $t^0=1$}
	 \FOR{$k=1$ {\bfseries to} NumIter}
		 \STATE{ \textbackslash \textbackslash Calculate gradient of loss function w.r.t. $U$}
		 \STATE{ \textbackslash \textbackslash evaluated at the extrapolated point $\hat{U}$}
	   \STATE{ $G_U^k = \nabla_U \ell(Y,\hat{U}^k (V^{k-1})^T,Q^{k-1})$}
		  \STATE{ $P = \hat{U}^k-G_U^k / L_U^k$}
		 \STATE{ \textbackslash \textbackslash Calculate proximal operator of $\theta$}
		 \STATE{ \textbackslash \textbackslash for every column of $U$}
		 \FOR{$i=1$ {\bfseries to} number of columns in $U$}
		   \STATE{$U_i^k = \textbf{prox}_{\lambda \theta(\cdot,V_i^{k-1}) / L_U^k}(P_i)$}
		 \ENDFOR
		 \STATE{ \textbackslash \textbackslash Repeat similar process for $V$}
		 \STATE{$G_V^k = \nabla_V \ell(Y,U^k (\hat{V}^k)^T,Q^{k-1})$}
		 \STATE{$W = \hat{V}^k-G_V^k / L_V^k$}
		 \FOR{$i=1$ {\bfseries to} number of columns in $V$}
		   \STATE{$V_i^k = \textbf{prox}_{\lambda \theta(U_i^k,\cdot) / L_V^k }(W_i)$}
		 \ENDFOR
		\STATE{ \textbackslash \textbackslash Repeat again for $Q$}
		 \STATE{$G_Q^k = \nabla_Q \hat{\ell}(Y,U^k (V^k)^T,\hat{Q}^k)$}
		  \STATE{$R = \hat{Q}^k-G_Q^k / L_Q^k$}
		 \STATE{$Q^k = \textbf{prox}_{H (Y,U^k (V^k)^T,\cdot) / L_Q^k }(R)$}
		 
		\STATE{ \textbackslash \textbackslash Update extrapolation based on prior iterates}
		\STATE{ \textbackslash \textbackslash Check if objective decreased}
		\IF{ $obj(U^k,V^k,Q^k) < obj(U^{k-1},V^{k-1},Q^{k-1})$ }
			\STATE{ \textbackslash \textbackslash The objective decreased, update extrapolation}
			\STATE{$t^k = (1+\sqrt{1+4(t^{k-1})^2})/2$}
			\STATE{$\mu = (t^{k-1}-1)/2$}
			\STATE{$\mu_U = \min\{\mu,\sqrt{L_U^{k-1} / L_U^k}\}$ }
			\STATE{$\mu_V = \min\{\mu,\sqrt{L_V^{k-1} / L_V^k}\}$ }
			\STATE{$\mu_Q = \min\{\mu,\sqrt{L_Q^{k-1} / L_Q^k}\}$ }
			\STATE{$\hat{U}^{k+1} = U^k + \mu_U(U^k-U^{k-1})$}
			\STATE{$\hat{V}^{k+1} = V^k + \mu_V(V^k-V^{k-1})$}
			\STATE{$\hat{Q}^{k+1} = Q^k + \mu_Q(Q^k-Q^{k-1})$}
		\ELSE
			\STATE{ \textbackslash \textbackslash The objective didn't decrease.}
			\STATE{ \textbackslash \textbackslash Run again without extrapolation.}
			\STATE{$t^k = t^{k-1}$}
			\STATE{$\hat{U}^{k+1} = U^{k-1}$}
			\STATE{$\hat{V}^{k+1} = V^{k-1}$}
			\STATE{$\hat{Q}^{k+1} = Q^{k-1}$}
		\ENDIF
   \ENDFOR
\end{algorithmic}
\end{algorithm}
}

\subsection{Proximal Operators of Structured Factors}

Recall from the introductory discussion that one means to induce general structure in the factorized matrices is to regularize the columns of a factorized matrix with an $l_2$ norm, to limit the rank of the solution, plus a general gauge function, to induce specific structure in the factors.  For example, potential forms of the rank-1 regularizers could be of the form $\theta(u,v) = \|u\|_2 \|v\|_2 + \gamma \sigma_u(u) \sigma_v(v)$ or $\theta(u,v) = (\|u\|_2 + \gamma_u \sigma_u(u))(\|v\|_2 + \gamma_v \sigma_v(v))$, where the $\sigma_u$ and $\sigma_v$ gauge functions are chosen to encourage specific properties in $U$ and $V$, respectively. In this case, to apply Algorithm \ref{algorithm_low_rank_fact} we need a way to solve the proximal operator of the $l_2$ norm plus a general gauge function.  While the proximal operator of the $l_2$ norm is simple to calculate, even if the proximal operator of the gauge function is known, in general the proximal operator of the sum of two functions is not necessarily easy to compute or related to the proximal operators of the individual functions. Fortunately, the following result shows that for the sum of the $l_2$ norm plus a general gauge function, the proximal operator can be solved by sequentially calculating the two proximal operators.

\begin{proposition}
\label{prop:l2prox}
Let $\sigma_C$ be any gauge function. The proximal operator of $\psi(x) = \lambda \sigma_C(x) + \lambda_2 \| x \|_2$ is the composition of the proximal operator of the $l_2$ norm and the proximal operator of $\sigma_C$, i.e., $\prox_\psi(y) = \prox_{\lambda_2 \| \cdot \|_2}(\prox_{\lambda \sigma_C}(y))$.
\end{proposition}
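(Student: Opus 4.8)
The plan is to verify the claimed identity directly through first-order optimality conditions. Since a gauge function of a convex set is convex, both $\sigma_C$ and $\|\cdot\|_2$ are convex, so $\psi$ is convex and $\prox_\psi(y)$ is the unique minimizer of $\tfrac12\|x-y\|_2^2 + \psi(x)$, characterized by the subgradient inclusion $0 \in x - y + \lambda\,\partial\sigma_C(x) + \lambda_2\,\partial\|x\|_2$. It therefore suffices to set $z = \prox_{\lambda\sigma_C}(y)$ and $x^* = \prox_{\lambda_2\|\cdot\|_2}(z)$ and to show that $x^*$ satisfies $y - x^* \in \lambda\,\partial\sigma_C(x^*) + \lambda_2\,\partial\|x^*\|_2$, which by convexity is both necessary and sufficient for $x^* = \prox_\psi(y)$.

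First I would record the optimality conditions for the inner and outer proximal steps: $y - z \in \lambda\,\partial\sigma_C(z)$ and $z - x^* \in \lambda_2\,\partial\|x^*\|_2$. Adding these yields $y - x^* \in \lambda\,\partial\sigma_C(z) + \lambda_2\,\partial\|x^*\|_2$, so the only discrepancy with the target inclusion is that the gauge subgradient is evaluated at $z$ rather than at $x^*$. The bridge is the explicit radial form of the $l_2$ proximal operator, $\prox_{\lambda_2\|\cdot\|_2}(z) = \max\{1 - \lambda_2/\|z\|_2,\,0\}\,z$, which shows that $x^* = c\,z$ for a scalar $c \geq 0$. Hence $x^*$ is a nonnegative multiple of $z$, and the whole argument reduces to controlling how $\partial\sigma_C$ behaves under positive radial rescaling.

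The key structural fact I would establish is that the subdifferential of a sublinear function is invariant along positive rays and is always contained in its subdifferential at the origin. Concretely, for $w \in \partial\sigma_C(z)$ with $z \neq 0$, positive homogeneity forces $\langle w, z\rangle = \sigma_C(z)$ and $\langle w, x\rangle \leq \sigma_C(x)$ for all $x$, from which one reads off both $\partial\sigma_C(cz) = \partial\sigma_C(z)$ for every $c > 0$ and $\partial\sigma_C(z) \subseteq \partial\sigma_C(0)$. Given this, the case $c > 0$ (i.e. $\|z\|_2 > \lambda_2$) is immediate, since $\partial\sigma_C(z) = \partial\sigma_C(x^*)$ turns the combined inclusion into exactly the optimality condition for $x^* = \prox_\psi(y)$. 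The degenerate case $x^* = 0$ (when $\|z\|_2 \leq \lambda_2$) instead uses $\partial\|0\|_2 = \{w : \|w\|_2 \leq 1\}$ together with $\partial\sigma_C(z) \subseteq \partial\sigma_C(0)$, giving $y = (y - z) + z \in \lambda\,\partial\sigma_C(0) + \lambda_2\,\partial\|0\|_2$.

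I expect the main obstacle to be the careful handling of this ray-invariance of $\partial\sigma_C$ and the edge cases $z = 0$ and $x^* = 0$. The identity $\partial\sigma_C(cz) = \partial\sigma_C(z)$ depends only on positive homogeneity and \emph{not} on symmetry, so it genuinely covers gauge functions that fail to be norms; but because the outer step can collapse $x^*$ to the origin, one must verify separately that the subgradient at $0$ absorbs the residual term, rather than assuming the generic face $\partial\sigma_C(z)$ persists.
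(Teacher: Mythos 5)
Your proof is correct, and it takes a genuinely different route from the paper's. The paper argues by duality: it splits the variable ($x=z$ with multiplier $\gamma$), minimizes the Lagrangian in each variable, shows that the dual problem reduces to projecting $y$ onto the scaled polar set $\{\gamma : \sigma_C^{\circ}(\gamma) \leq \lambda\}$, and then invokes the Moreau identity to recognize $y-\gamma_{opt}$ as $\prox_{\lambda \sigma_C}(y)$, so that the primal--dual relation $x = \prox_{\lambda_2 \|\cdot\|_2}(y-\gamma_{opt})$ gives the claimed composition. You instead verify the candidate point directly: with $z = \prox_{\lambda\sigma_C}(y)$ and $x^* = \prox_{\lambda_2\|\cdot\|_2}(z)$, you add the two prox optimality inclusions and repair the only mismatch---the gauge subdifferential being evaluated at $z$ rather than at $x^*$---using the fact that the $l_2$ prox is a nonnegative radial scaling, combined with the ray-invariance $\partial\sigma_C(cz) = \partial\sigma_C(z)$ for $c>0$ and the containment $\partial\sigma_C(z) \subseteq \partial\sigma_C(0)$, both consequences of positive homogeneity alone (so, as you note, genuine gauges without symmetry are covered, matching the paper's generality). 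A further point in your favor is that you only need the trivial inclusion $\lambda\,\partial\sigma_C(x^*) + \lambda_2\,\partial\|x^*\|_2 \subseteq \partial\psi(x^*)$, so no constraint qualification or sum rule for subdifferentials is required, and your explicit treatment of the collapse case $x^*=0$ (via $\partial\|0\|_2$ being the unit ball and the containment into $\partial\sigma_C(0)$) closes the one edge case where ray-invariance alone would not suffice. What each approach buys: yours is more elementary and makes transparent \emph{why} the composition works in this order (the $l_2$ shrinkage preserves the ray, hence the active face of the gauge), whereas the paper's duality route exposes the polar-projection structure $\prox_{\lambda\sigma_C}(y) = y - \Pi_{\{\sigma_C^{\circ} \leq \lambda\}}(y)$, which is precisely the structure the paper exploits computationally in its dual coordinate-descent solver for the L1-TV proximal operator.
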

\begin{proof} See Appendix \ref{sec:l2prox_proof}. \end{proof}

Combining these results with Theorem \ref{thm:mat_fact_0_min} and our previously discussed points, we now have a potential strategy to search for structured low-rank matrix factorizations as we can guarantee global optimality if we can find a local minimum with an all-zero column in $U$ and $V$ (or equivalently a local minimum that also satisfies condition 3 of Corollary \ref{cor:global_min_conds}), and the above proposition provides a means to efficiently solve proximal operator problems that one typically encounters in structured factorization formulations.  However, recall the critical caveats to note about the optimization problem; namely that first order descent methods as we have presented here are only guaranteed to converge to a critical point, not necessarily a local minimum, and solving the polar problem to guarantee global optimality can be quite challenging.

\section{Applications in Image Processing}
In the next two sections, we will explore applications of the proposed structured matrix factorization framework to two image processing problems: spatiotemporal segmentation of neural calcium imaging data and hyperspectral compressed recovery. Such problems are well modeled by low-rank linear models with square loss functions under the assumption that the spatial component of the data has low total variation (TV) and is optionally sparse in the row and/or column space. 
Specifically, we will consider the following objective
\begin{align}
\label{app_obj}
\min_{U,V,Q} \frac{1}{2} \| Y-&\mathcal{A}(UV^T) - \mathcal{B}(Q) \|_F^2 + \lambda \sum_i \|U_i\|_u \|V_i\|_v
 \\ \nonumber &\textnormal{(optionally s.t.)} \ U \geq 0, V \geq 0
\end{align}
where $\mathcal{A}(\cdot)$ and $\mathcal{B}(\cdot)$ are linear operators, and the $\|\cdot\|_u$ and $\|\cdot\|_v$ norms have the form given by
\begin{align}
\label{U_norm}
\| \cdot \|_u &= \nu_{u_1} \| \cdot \|_1 + \nu_{u_{TV}} \| \cdot\|_{TV} + \nu_{u_2}\| \cdot \|_2 \\
\label{V_norm}
\| \cdot \|_v &= \nu_{v_1} \| \cdot \|_1 + \nu_{v_{TV}} \| \cdot \|_{TV} + \nu_{v_2} \| \cdot \|_2,
\end{align}
for non-negative scalars $\nu$.  Here, the $\ell_1$ and TV terms allow the incorporation of sparsity or spatial coherence, respectively, while the $\ell_2$ term limits the rank of the solution though the connection with the variational form of the nuclear norm.  Recall that the anisotropic TV of $x$ is defined as \cite{Birkholz:JCAM11}
\begin{equation}
\|x\|_{TV} \equiv \sum_i \sum_{j \in \textit{N}_i} \left|x_i - x_j \right|,
\end{equation}
where $\textit{N}_i$ is the set of pixels in the neighborhood of pixel $i$.

Further, note that this objective function exactly fits within the proposed framework as we can define a rank-1 regularizer $\theta(u,v) = \|u\|_u \|v\|_v$ and optionally add indicator functions on $u$ and/or $v$ to enforce non-negativity constraints.  Additionally, the proximal operators for the proposed regularization function can be easily evaluated as we show in Appendix \ref{sec:L1TV_prox}.

\section{Neural Calcium Imaging Segmentation}
This section demonstrates the applicability of the proposed structured matrix factorization framework to the problem of segmenting calcium imaging data. Calcium imaging is a rapidly growing microscopy technique in neuroscience that records fluorescent images from neurons that have been loaded with either synthetic or genetically encoded fluorescent calcium indicator molecules. When a neuron fires an electrical action potential (or spike), calcium enters the cell and binds to the fluorescent calcium indicator molecules, changing the fluorescence properties of the molecule. By recording movies of the calcium-induced fluorescent dynamics it is possible to infer the spiking activity from large populations of neurons with single neuron resolution \cite{Stosiek:PNAS2003}. 

Specifically, let $y \in \Re^t$ be the fluorescence time series from a single neuron (normalized by the baseline fluorescence) during $t$ imaging frames. We can infer the neuron's spiking activity $x \in \Re^t$ (each entry of $x$ is monotonically related to the number of action potentials of the neuron during that imaging frame) by solving a Lasso like problem:
\begin{equation}
\label{jovo_lasso}
\hat{x} = \argmin_{x \geq 0} \frac{1}{2}\left\|y - Dx \right\|_2^2 + \lambda \left\| x \right\|_1,
\end{equation} 
where $D \in \Re^{t \times t}$ is a matrix that applies a convolution with a known decaying exponential to model the change in fluorescence resulting from a neural action potential \cite{jovo:foopsi}. However, the above model only applies to a single fluorescence time series extracted from a manually segmented data volume, and a significant challenge in neural calcium imaging is that the data can have a significant noise level, making manual segmentation extremely challenging.  Additionally, it is also 
it is possible for two neurons to overlap in the spatial domain if the focal plane of the microscope is thicker than the size of the distinct neural structures in the data, making simultaneous spatiotemporal segmentation necessary. A possible strategy to address these issues would be to extend \eqref{jovo_lasso} to estimate spiking activity for the whole data volume via the objective
\begin{equation}
\label{jovo_lasso_volume}
\hat{X} = \argmin_{X \geq 0} \frac{1}{2}\left\|Y - DX \right\|_F^2 + \lambda \left\| X \right\|_1,
\end{equation}
where now each column of $Y \in \Re^{t \times p}$ contains the fluorescent time series for a single pixel and the 
corresponding column of $\hat{X} \in \Re^{t \times p}$ contains the estimated spiking activity for that pixel.  However, due to the significant noise often present in the actual data, and the fact that each column of $\hat{X}$ is estimated with data from a single pixel in the dataset without the benefit of averaging over a spatial area, solving \eqref{jovo_lasso_volume} directly typically gives poor results. To address this issue, \citet{jovo_lowrank} exploits the knowledge that if two pixels are from the same neural structure they should have identical spiking activities. Therefore, the matrix $X$ can be well approximated as $X \approx UV^T = \sum_{i=1}^r U_i V_i^T$, where $U_i\in\Re^t$ is the spiking activity of one neural structure, $V_i\in\Re^p$ is a shape image of the same neural structure, and $r\ll p$ is the number of neural structures in the data. This suggests adding a low-rank regularization term to \eqref{jovo_lasso_volume}, which leads to \citet{jovo_lowrank}
\begin{equation}
\label{jovo_lasso_lowrank}
\hat{X} = \argmin_{X \geq 0} \frac{1}{2}\left\|Y - DX \right\|_F^2 + \lambda \left\| X \right\|_1 + \lambda_2 \| X \|_*.
\end{equation}
Given $\hat X$, one can estimate the temporal and spatial features $(U,V)$ by applying non-negative matrix factorization to $\hat{X}$.

While \eqref{jovo_lasso_lowrank} provides a nice model of spiking activity within a dataset, recall from the introduction that solving a factorization problem in the product space (i.e., solving for $X$) is somewhat unsatisfactory as it does not provide us with the desired factors. For example, the number of calcium transients from each neuron should be small, hence each column of $U$ should be sparse. Likewise, each neuron should occupy a small fraction of the image, hence each column of $V$ should also be sparse. Notably, it can be shown that problem \eqref{app_obj} is equivalent to a standard Lasso estimation when both 
$U$ and $V$ are regularized by the $l_1$ norm \cite{bach08}, while combined $l_1$, $l_2$ norms of the form \eqref{U_norm} and \eqref{V_norm} with $\nu_{u_{TV}}=0$ promote solutions that are simultaneously sparse and low rank.  Thus, the projective tensor norm \eqref{eq:tensor_norm} can generalize the two prior methods for calcium image processing by providing regularizations that are sparse or simultaneously sparse and low-rank, while also having the advantage of solving for the desired factors directly.  Further, by working in the factorized space we can also model additional known structure in the factors.  In particular, we extend formulations \eqref{jovo_lasso_volume} and \eqref{jovo_lasso_lowrank} by noting that neighboring pixels are likely to be from the same neural structure and thus have identical spiking activity, implying low TV in the spatial domain. We demonstrate the flexible nature of our formulation \eqref{app_obj} by using it to process calcium image data with regularizations that are either sparse, simultaneously sparse and low-rank, or simultaneously sparse, low-rank, and with low TV.  Additionally, by optimizing \eqref{app_obj} to simultaneously estimate temporal spiking activity $U$ and neuron shape $V$, with $\mathcal{A}(UV^T)=DUV^T$, we inherently find spatial and temporal features in the data (which are largely non-negative even though we do not explicitly constrain them to be) directly from our optimization without the need for an additional matrix factorization step.  Finally, note that the $\mathcal{B}(Q)$ term can be used to fit the background intensity of the pixels by taking $\mathcal{B}(Q) = \textbf{1} Q^T$ for a vector $Q \in \Re^p$, and if the data exhibits temporal variations in pixel intensities not due to calcium activity, such as from slow movements of the sample or photo-bleaching, this can also be modeled via an appropriate choice of the $\mathcal{B}$ operator.  For the experiments presented here the data has been normalized by background intensity, so the $\mathcal{B}(Q)$ term is not used.

\subsection{Simulation Data}

We first tested our algorithm on a simulated phantom dataset which was constructed with 19 non-overlapping spatial regions (see Figure \ref{fig:phantom_labels}, left panel) and 5 randomly timed action potentials and corresponding calcium dynamics per region.  The phantom was 200 frames of 120x125 images, and the decaying exponentials in $D$ had a time constant of $1.33\bar{3} \, sec$ with a simulated sampling rate of $10 \, Hz$.  Gaussian white noise was added to the modeled calcium signal to produce an SNR of approximately \mbox{-16dB}.

Using this phantom, we used Algorithm \ref{algorithm_low_rank_fact} to solve the formulation given in \eqref{app_obj} with different $\nu$ parameters for the norms in \eqref{U_norm} and \eqref{V_norm}.  In particular, we used just sparse and low-rank regularization by taking $[\nu_{u_1},\nu_{u_{TV}},\nu_{u_2}] = [\nu_{v_1},\nu_{v_{TV}},\nu_{v_2}] = [1,0,1]$ and $\lambda = 1.5 \sigma$, where $\sigma$ denotes the standard deviation of the Gaussian noise.  Then, to demonstrate the benefit of adding TV regularization in the spatial domain, we used an 8-connected lattice for the TV graph,\footnote{The regularization parameters were roughly tuned by hand to produce the best qualitative results for the two experimental conditions (i.e., sparse + low-rank w/wo TV)} 
$[\nu_{u_1},\nu_{u_{TV}},\nu_{u_2}] = [1,0,1]$, $[\nu_{v_1},\nu_{v_{TV}},\nu_{v_2}] = [1,1,1]$ and $\lambda = 0.4 \sigma$.  For the sparse + low-rank condition $U$ was initialized to be the identity matrix.  For the experiments that include TV regularization, we again conducted experiments with $U$ initialized to be the identity, and to study the effects of different initializations, we additionally also performed experiments with $U$ initialized with 50 columns, where each entry in $U$ was initialized to a random value uniformly distributed in  $[0,1]$ (in all cases $V$ was initialized as 0).

\begin{figure}
\centering
\includegraphics[width=\linewidth]{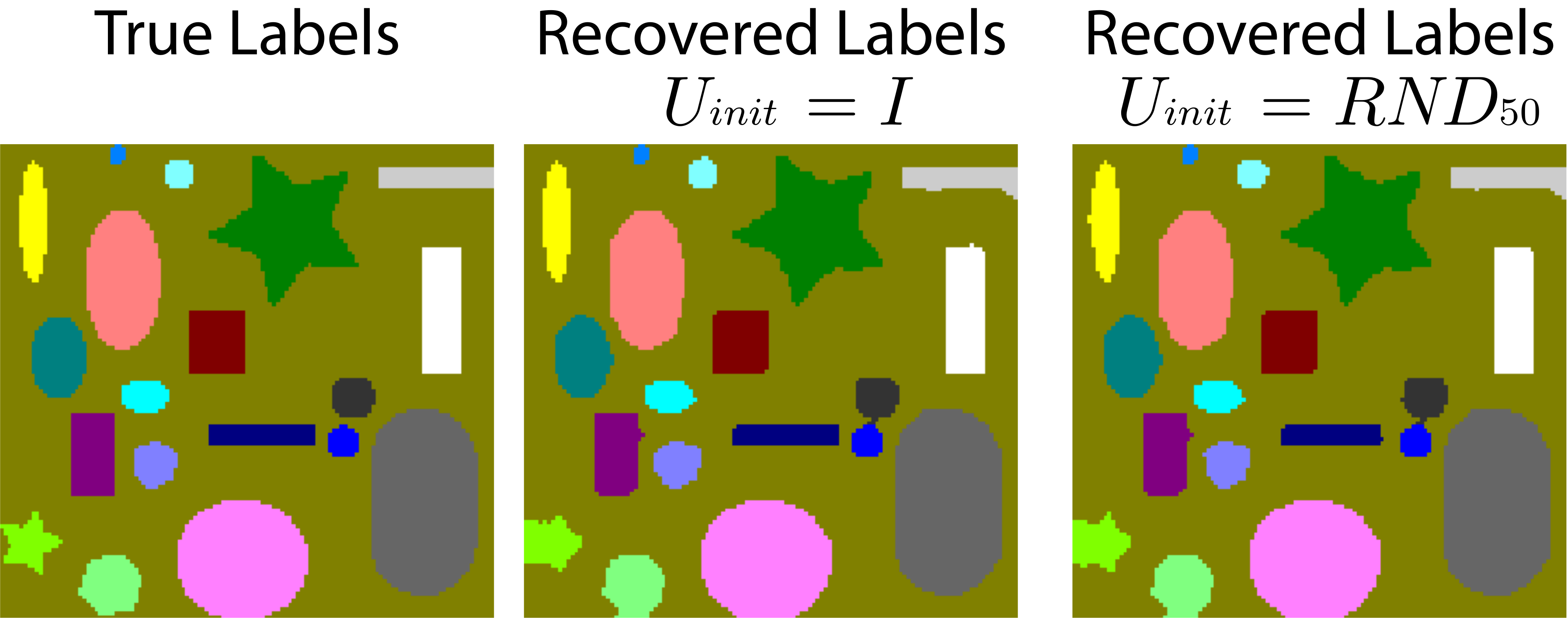}
\vspace{-6mm}
\caption[Recovered spatial segmentation from phantom dataset.] {Recovered spatial segmentations from phantom dataset.  \textit{Left}: True spatial labels. \textit{Middle}: Spatial labels recovered with sparse + low-rank + TV regularization, with $U$ initialized as an identity matrix. \textit{Right}: Same as the middle panel but with $U$ initialized as 50 columns of random values uniformly distributed between $[0,1]$.}
\label{fig:phantom_labels}
\vspace{-2mm}
\end{figure}

\begin{figure}
\centering
\includegraphics[width=\linewidth]{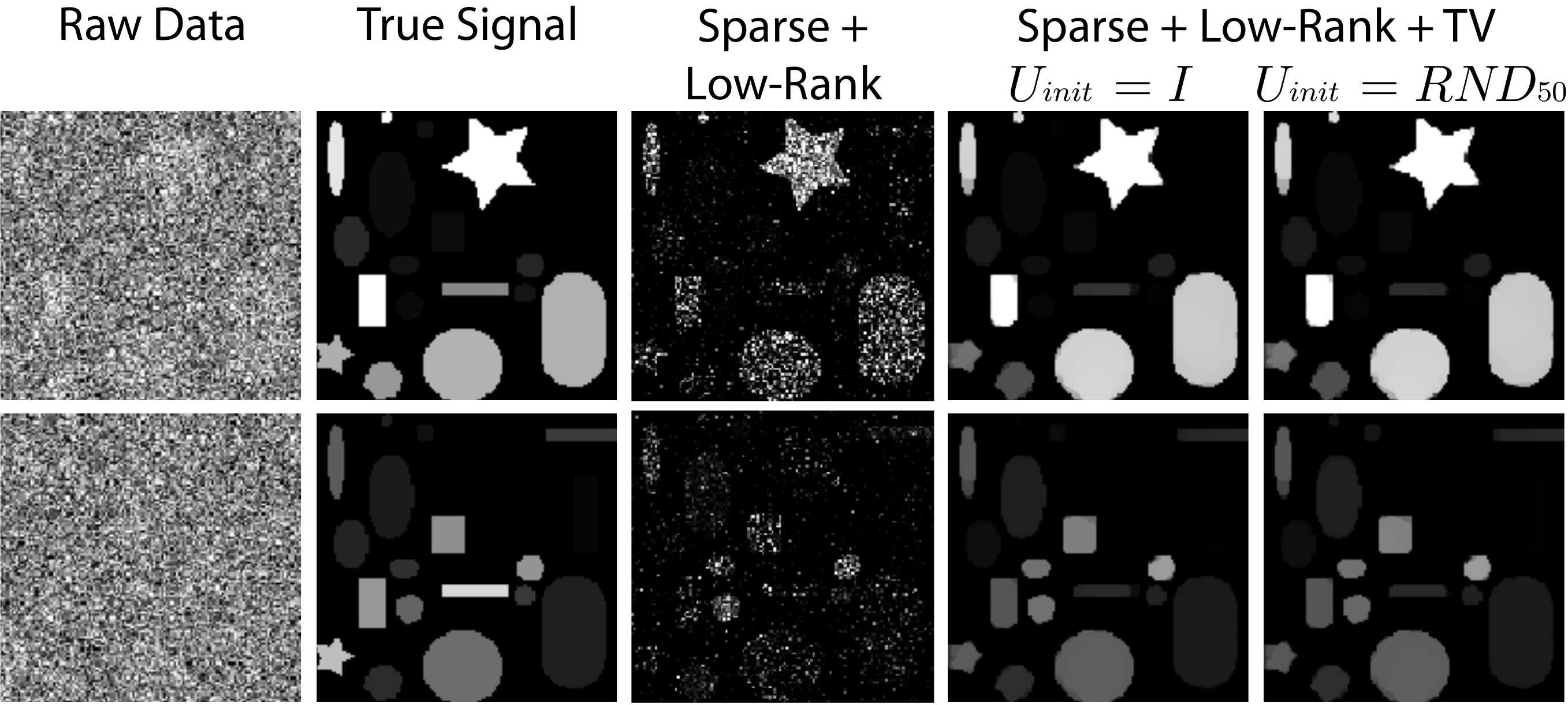}
\vspace{-6mm}
\caption[Example reconstructed calcium signal from phantom dataset.] {Example reconstructed calcium signal from phantom dataset.  The two rows correspond to two different example image frames.  \textit{From left to right}: Raw data. True calcium signal. Reconstruction with sparse + low-rank regularization.  Reconstruction with sparse + low-rank + TV regularization with $U$ initialized as the identity. Reconstruction with sparse + low-rank + TV regularization with $U$ initialized as 50 columns of random values uniformly distributed in $[0,1]$.} 
\label{fig:phantom_frames}
\vspace{-2mm}
\end{figure}

\begin{figure}
\centering
\includegraphics[width=\linewidth]{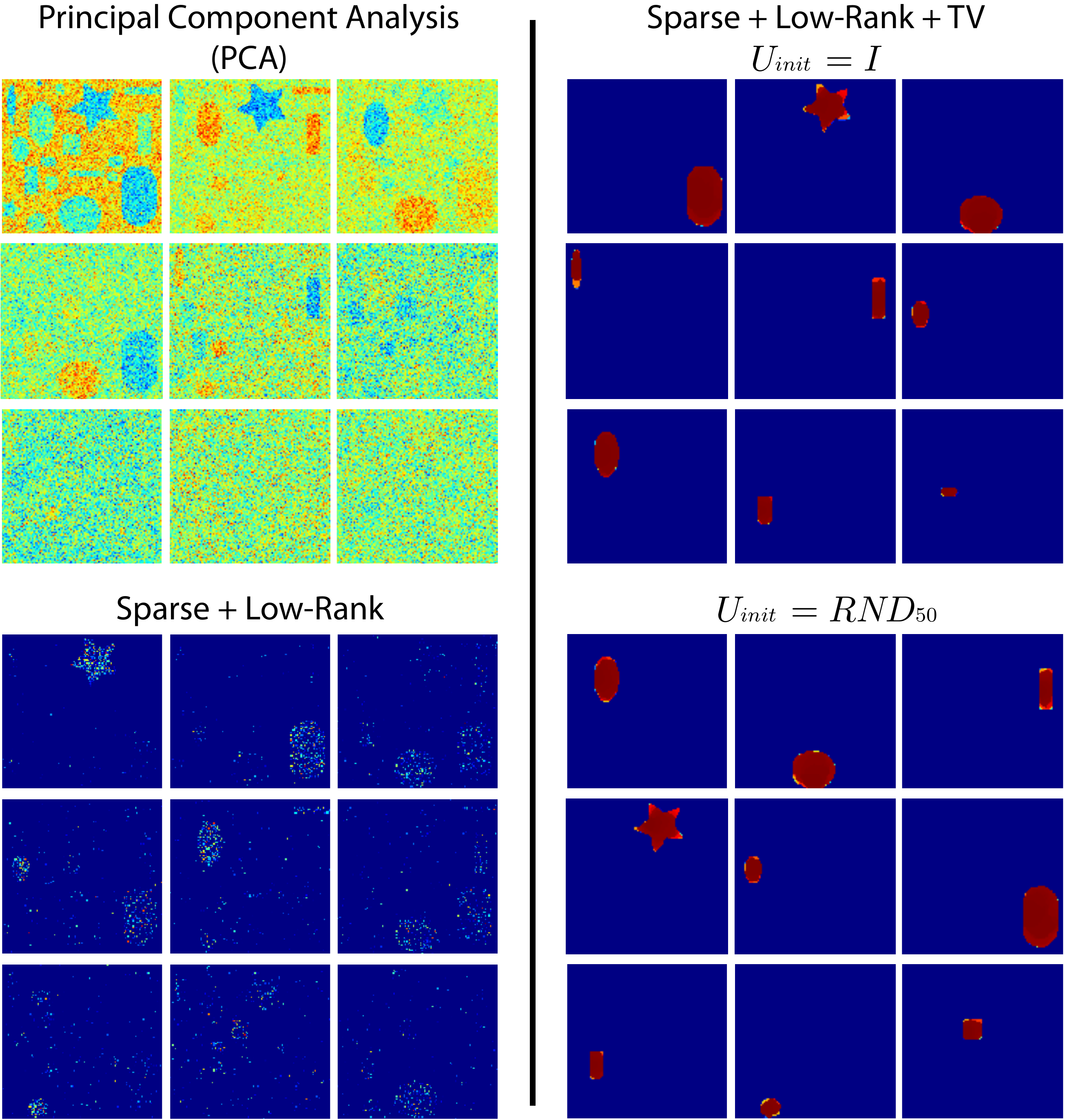}
\vspace{-6mm}
\caption[Example recovered spatial components from phantom dataset.] {Example recovered spatial components from phantom dataset.  \textit{Top Left}: First 9 most significant spatial components recovered via Principal Component Analysis (PCA). \textit{Bottom Left}: First 9 most significant spatial components recovered with sparse and low-rank regularization. \textit{Top Right}: First 9 most significant spatial components recovered using sparse, low-rank, and TV regularization, with $U$ initialized as the identity. \textit{Bottom Right}: Same as the top right panel but with $U$ initialized as 50 columns of random values uniformly distributed in $[0,1]$.}
\label{fig:phantom_regions}
\end{figure}

Figure \ref{fig:phantom_frames} shows two example reconstructions of the calcium signal estimated with our algorithm with different regularization conditions. Figure \ref{fig:phantom_regions} shows example spatial components recovered by our algorithm as well as spatial components recovered by PCA.  For each case, the components shown are the first 9 most significant components (i.e., those with the largest value of $\|U_i\|_u \|V_i\|_v$).\footnote{\label{foot:duplicates} Note that the differences in the specific components shown in Figure \ref{fig:phantom_regions} between the two initializations of $U$ is due to the fact that the structure of the objective function \eqref{app_obj} allows for components to be duplicated without changing the value of the objective function.  For example, if $U = [U_1 \ U_2]$ and $V = [V_1 \ V_2]$, then $\tilde{U} = [U_1 \ 0.2 U_2 \ 0.8 U_2]$ and $\tilde{V} = [V_1 \ V_2 \ V_2]$ will give identical objective function values.}  Note that although we only show the first 9 spatial components here for compactness, the remaining components also closely correspond to the true spatial regions and allow for the true spatial segmentation to be recovered.

The recovered temporal components for the 9 regions shown in Figure \ref{fig:phantom_regions} are plotted in Figure \ref{fig:temporal_plots} along with the corresponding true temporal spike times (red dots) for the sparse + low-rank + total-variation (SRLTV) regularization conditions. 
The spatial segmentation obtained via SLRTV regularization with the two different initializations for $U$ is shown in Figure \ref{fig:phantom_labels}. This segmentation was generated by simply finding connected components of the non-zero support of the spatial components, then any two connected components that overlapped by more than 10\% were merged (note that this step is largely only necessary to combine duplicate components --see footnote \ref{foot:duplicates}-- and the results are very insensitive to the choice of the percentage of overlap as any duplicated components had almost identical non-zero supports).  Despite the very high noise level, adding 
SLRTV regularization recovers the true spatial and temporal components with very high accuracy and faithfully reconstructs the true calcium signal.  Further, this performance is robust to the choice of initialization, as initializing  $U$ with either the identity or random values still faithfully recovers true spatial and temporal components.  Additionally, despite the very different initializations, the relative error between the two final objective values (given as $|obj_1 - obj_2| / \min \{ obj_1,obj_2 \}$, where $obj_1$ and $obj_2$ denote the final objective values for the 2 different initializations) was only \mbox{$3.8833 \times 10^{-5}$}.

\begin{figure}
\centering
\includegraphics[width=\linewidth]{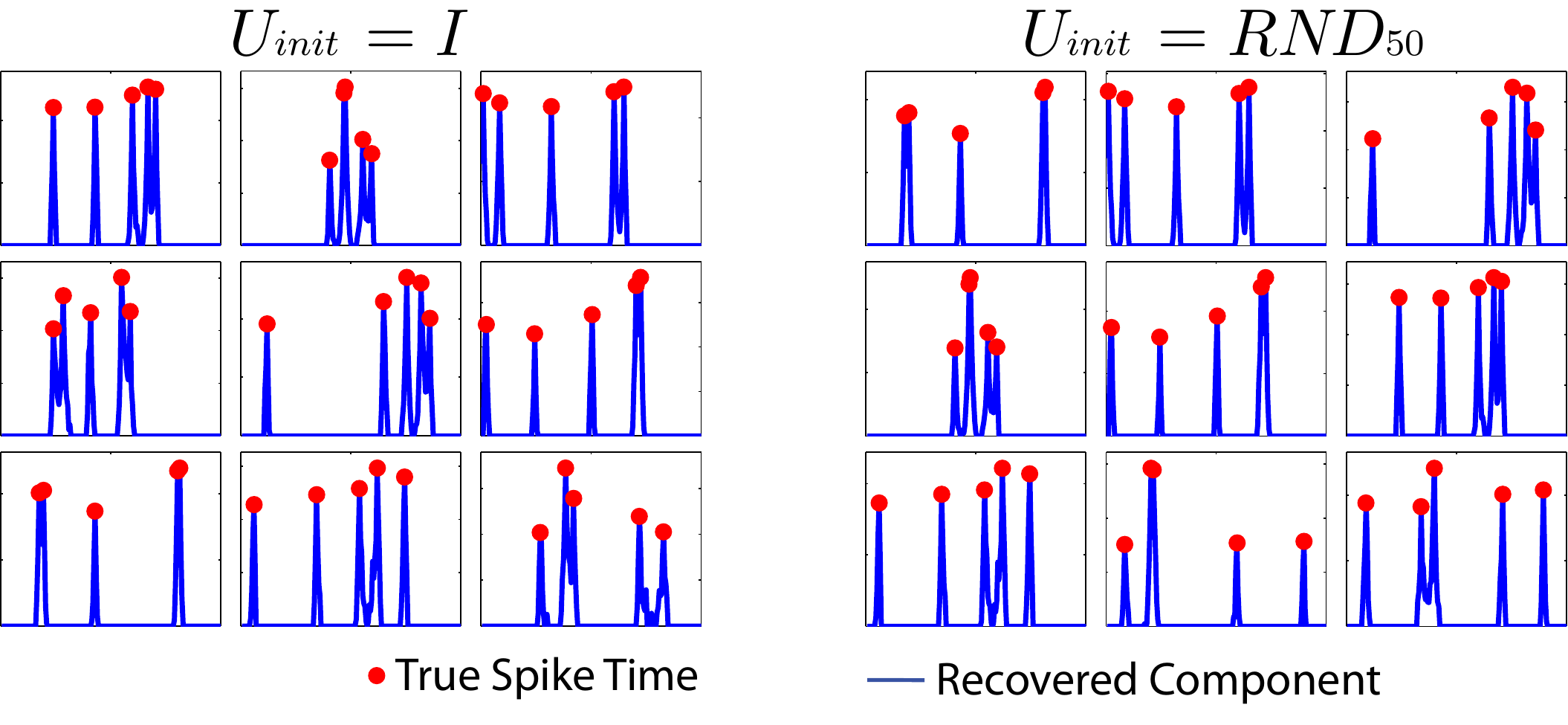}
\vspace{-6mm}
\caption[Reconstructed temporal components from phantom dataset.] {Reconstructed spike trains from phantom dataset with sparse + low-rank + TV for the components shown in Figure \ref{fig:phantom_regions}.  Blue lines are the temporal components estimated  by our algorithm, while the red dots correspond to the true temporal spike times. \textit{Left Panel}: Reconstruction with $U$ initialized as the identity. \textit{Right Panel}: Reconstruction with $U$ initialized as 50 columns of random values uniformly distributed in $[0,1]$.} 
\label{fig:temporal_plots}
\vspace{-2mm}
\end{figure}

\begin{figure*}
\centering
\includegraphics[clip=true,trim=0 239 0 0,scale=0.85]{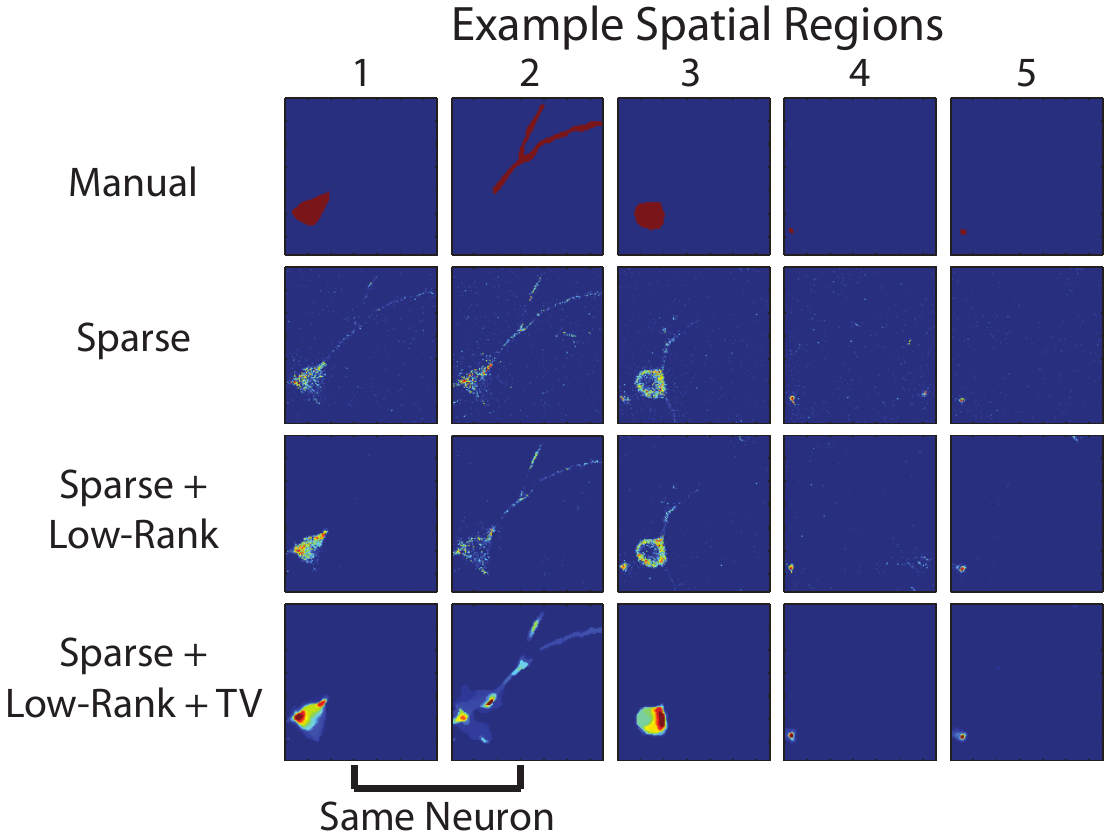}
\hfill
\raisebox{3.7mm}{\includegraphics[clip=true,trim=0 0 47 251,scale=0.85]{NeuralFull2.pdf}}
\caption[Results from an \textit{in vivo} calcium imaging dataset.] {Results from an \textit{in vivo} calcium imaging dataset. \textit{Left}: Spatial features for 5 example regions. 
(Top Row) Manually segmented regions.  (Bottom 3 Rows) Corresponding spatial feature recovered by our method with various regularizations.
Note that regions 1 and 2 are different parts of the same neurons - see discussion in the text. \textit{Right}: Example frames from the
dataset corresponding to time points where the example regions display a significant calcium signal. (Top Row) Actual Data. (Bottom 3
Rows) Estimated signal for the example frame with various regularizations.}
\label{NeuralFull}
\end{figure*}

\begin{figure*}
\includegraphics[clip=true,trim=15 130 0 0,height=15.8mm]{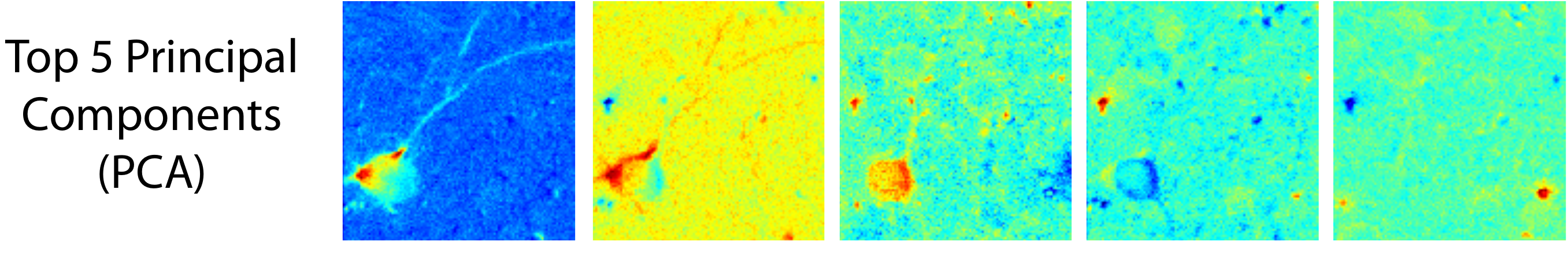}
\hfill
\includegraphics[clip=true,trim=0 0 124 130,height=15.8mm]{InVivoPCA_full.pdf}
\caption[Results of PCA applied to an \textit{in vivo} calcium imaging dataset.] {Results of PCA applied to an \textit{in vivo} calcium imaging dataset. \textit{Left}: The first 5 most significant spatial components from PCA analysis.  \textit{Right}: Example image frames reconstructed from the first 20 most significant Principal Components.  The example frames are the same is in Figure \ref{NeuralFull}.} 
\label{fig:invivo_pca}
\vspace{-2mm}
\end{figure*}

\subsection{\textit{In vivo} Calcium Image Data} 
We also tested our algorithm on actual calcium image data taken \textit{in vivo} from the primary auditory cortex of a mouse that was transfected with the genetic calcium indicator GCaMP5 \cite{Akerboom:JNeuro2012}. The left panel of Figure \ref{NeuralFull} shows 5 manually labeled regions from the dataset (top row) and the corresponding spatial features recovered by our algorithm (bottom 3 rows) under the various regularization conditions. The right panel of Figure \ref{NeuralFull} displays a frame from the dataset taken at a time point when the corresponding region had a significant calcium signal, with the actual data shown in the top row and the corresponding reconstructed calcium signal for that time point under the various regularization conditions shown in the bottom 3 rows.  We note that regions 1 and 2 correspond to the cell body and a dendritic branch of the same neuron. The manual labeling was purposefully split into two regions due to the fact that dendrites can have significantly different calcium dynamics from the cell body and thus it is often appropriate to treat calcium signals from dendrites as separate features from the cell body \cite{Spruston:NatRevNeuro2008}. 

The data shown in Figure \ref{NeuralFull} are particularly challenging to segment as the two large cell bodies (regions 1 and 3) are largely overlapping in space, necessitating a spatiotemporal segmentation. In addition to the overlapping cell bodies there are various small dendritic processes radiating perpendicular to (regions 4 and 5) and across (region 2) the focal plane that lie in close proximity to each other and have significant calcium transients. Additionally, at one point during the dataset the animal moves, generating a large artifact in the data. 
Nevertheless, by optimizing \eqref{app_obj} under the various regularization conditions, we observe that, as expected, the spatial features recovered by sparse regularization alone are highly noisy (Fig. \ref{NeuralFull}, row 2). Adding low-rank regularization improves the recovered spatial features, but the features are still highly pixelated and contain numerous pixels outside of the desired regions (Fig. \ref{NeuralFull}, row 3). Finally, by incorporating TV regularization, our method produces coherent spatial features which are highly similar to the desired manual labellings (Fig. \ref{NeuralFull}, rows 1 and 4), noting again that these features are found directly from the alternating minimization of \eqref{app_obj} without the need to solve a secondary matrix factorization.  For comparison purposes, the top 5 spatial components recovered via PCA along with example image frames reconstructed using the top 20 principal components are shown in Figure \ref{fig:invivo_pca}.  Note that while the PCA spatial components have a rough correspondence to the neural structures in the data, a significant amount of post-processing would be required to recover the segmentation of a specific neural structure from the PCA representation.  Likewise, the example image frames recovered via PCA still contain a very large amount of noise. 

In the application of our structured matrix factorization algorithm to the \textit{in vivo} dataset, $U$ was initialized to be 100 uniformly sampled columns from a $599\times 599$ identity matrix, and $V$ was initialized as $V=0$, demonstrating the potential to reduce the problem size and achieve good results despite a very trivial initialization.  Similar to the phantom experiments, choosing $U$ to be initialized as random variables in $[0,1]$ produced nearly identical results (not shown). The regularization parameters were tuned manually to produce good qualitative performance for each regularization condition, and the specific values of the parameters are given in Table \ref{table:reg_cond}.

\begin{table}
\caption[Regularization parameters for \textit{in vivo} calcium imaging experiments.] {Regularization parameters for \textit{in vivo} calcium imaging experiments. $\sigma$ denotes the standard deviation of all of the voxels in the data matrix, $Y$.}
\label{table:reg_cond}
\vspace{-1mm}
\centering
\small
\begin{tabular}{@{}c@{\,} | @{\,}c@{\,}c@{\,}c@{}}
\hline
 & $\lambda$ & $[\nu_{u_1},\nu_{u_{TV}},\nu_{u_2}]$ & $[\nu_{v_1},\nu_{v_{TV}},\nu_{v_2}]$ \\
\hline
Sparse & $2 \sigma$ & $[1,0,0]$ & $[1,0,0]$ \\
Sparse + Low-Rank & $1.75 \sigma$ & $[1,0,1]$ & $[1,0,1]$ \\
Sparse + Low-Rank + TV & $0.5 \sigma$ & $[1,0,2.5]$ & $[1,0.5,1]$ \\
\hline
\end{tabular}
\vspace{-5mm}
\end{table}

We conclude by noting that while TV regularization can improve performance for a segmentation task, it also can cause a dilative effect when reconstructing the estimated calcium signal (e.g., distorting the size of the thin dendritic processes in the left two columns of the example frames in Figure \ref{NeuralFull}).  As a result, in a denoising task it might instead be desirable to only impose sparse and low-rank regularization.  The fact that we can easily and efficiently adapt our model to account for many different features of the data depending on the desired task highlights the flexible nature and unifying framework of our proposed formulation \eqref{app_obj}.
\section{Hyperspectral Compressed Recovery}

The second application we considered is recovering a hyperspectral image volume from a set of compressed measurements.  Hyperspectral imaging (HSI) is similar to regular digital photography, but instead of recording the intensities of light at just 3 wavelengths (red, green, blue) as in a typical camera, HSI records images for a very large number of wavelengths (typically hundreds or more). 

Due to the nature of hyperspectral imaging the data often display a low-rank structure.  For example, consider hyperspectral images taken during aerial reconnaissance.  If one was given the spectral signatures of various materials in the hyperspectral image volume (trees, roads, buildings, dirt, etc.), as well as the spatial distributions of those materials, then one could construct a matrix $U\in \Re^{t \times r}$, whose $i$\ts{th} column, $U_i$, contains the spectral signature of the $i$\ts{th} material (recorded at $t$ wavelengths) along with a matrix $V \in \Re^{p \times r}$ which contains the spatial distribution of the $i$\ts{th} material in its $i$\ts{th} column $V_i$ (where $p$ denotes the number of pixels in the image).  Then, $r$ corresponds to the number of materials present in the given HSI volume, and since typically $r \ll \min \{ t,p \}$ the overall HSI volume can be closely approximated by the low-rank factorization $Y \approx UV^T$. This fact, combined with the large data sizes typically encountered in HSI applications, has led to a large interest in developing compressed sampling and recovery techniques to compactly collect and reconstruct HSI datasets.  

In addition, an HSI volume also displays significant structure in the spatial domain because neighboring pixels are highly likely to correspond to the same material \cite{zhang13}. This combination of low-rank structure along with strong correlation between neighboring pixels in the spatial domain of an HSI dataset led the authors of \citet{golbabaee12} to propose a combined nuclear norm and TV regularization (NucTV) method to reconstruct HSI volumes from compressed measurements with the form
\begin{align}
\label{hyperspec_form}
\!\!\!
\min_X \| X \|_* \!+\! \lambda \! \sum_{i=1}^t \! \|(X^i)^T\|_{TV} \st \|Y \!-\! \mathcal{A} (X) \|_F^2 \leq \epsilon.\!\!
\end{align}
Here $X \in \Re^{t \times p}$ is the desired HSI reconstruction with $t$ spectral bands and $p$ pixels, $X^i$ denotes the $i$\ts{th} row of $X$ (or the $i$th spectral band), $Y \in \Re^{t \times  m}$ contains the observed samples (compressed at a subsampling ratio of $m/p$), and $\mathcal{A}(\cdot)$ denotes the compressed sampling operator. To solve \eqref{hyperspec_form}, \citet{golbabaee12} implemented a proximal gradient method, which required solving a TV proximal operator for every spectral slice of the data volume in addition to solving the proximal operator of the nuclear norm (singular value thresholding) at every iteration of the algorithm \cite{combettes11}. For the large data volumes typically encountered in HSI, this can require significant computation per iteration.

Here we demonstrate the use of our structured matrix factorization method to perform hyperspectral compressed recovery by optimizing \eqref{app_obj}, where $\mathcal{A}(\cdot)$ is a compressive sampling function that applies a random-phase spatial convolution at each wavelength \cite{romberg09, golbabaee12}, $U$ contains estimated spectral features, and $V$ contains estimated spatial abundance features.\footnote{For HSI experiments, we set $\nu_u=\nu_{v_1}=0$ in \eqref{U_norm} and \eqref{V_norm}.}  Compressed recovery experiments were performed on the dataset from \citet{golbabaee12}\footnote{The data used are a subset of the publicly available AVARIS Moffet Field dataset.  We made an effort to match the specific spatial area and spectral bands of the data for our experiments to that used in \cite{golbabaee12} but note that slightly different data may have been used in our study.} at various subsampling ratios and with different levels of sampling noise. We limited the number of columns of $U$ and $V$ to 15 (the dataset has 256 $\times$ 256 pixels and 180 spectral bands), initialized one randomly selected pixel per column of $V$ to one and all others to zero, and initialized $U$ as $U=0$.

Figure \ref{hyperspect_examples} in Appendix \ref{sec:hyperspec_imgs} shows examples of the recovered images at one wavelength (spectral band $i=50$) for various subsampling ratios and sampling noise levels and Table \ref{hyperspect_table} shows the reconstruction recovery rates $\left\|X_{true} \!-\! UV^T\right\|_F / \left\|X_{true} \right\|_F$, where $X_{true}$ denotes the true hyperspectral image volume.  We note that even though we optimized over a highly reduced set of variables $([256 \times 256 \times 15+180 \times 15] / [256 \times 256 \times 180] \approx 8.4 \% )$  with very trivial initializations, we were able to achieve reconstruction error rates equivalent to or better than those in \citet{golbabaee12}.\footnote{The entries for NucTV in Table \ref{hyperspect_table} were adapted from \citep[Fig. 1]{golbabaee12}} Additionally, by solving the reconstruction in a factorized form, our method offers the potential to perform blind hyperspectral unmixing directly from the compressed samples without ever needing to reconstruct the full dataset, an application extension we leave for future work.
\begin{table}
\caption{Hyperspectral imaging compressed recovery error rates.}
\label{hyperspect_table}
\centering
\small
\begin{tabular}{c | ccc | ccc}
\hline
 & \multicolumn{3}{c |}{Our Method} & \multicolumn{3}{c}{NucTV} \\ 
\hline
Sample & \multicolumn{3}{c |}{Sampling SNR (dB)} & \multicolumn{3}{c}{Sampling SNR (dB)} \\
 Ratio & $\infty$ & 40 & 20 & $\infty$ & 40 & 20 \\ 
\hline
4:1   & 0.0209 & 0.0206 & 0.0565 & 0.01 & 0.02 & 0.06\\
8:1 	& 0.0223 & 0.0226 & 0.0589 & 0.03 & 0.04 & 0.08\\
16:1  & 0.0268 & 0.0271 & 0.0663 & 0.09 & 0.09 & 0.13\\
32:1  & 0.0393 & 0.0453 & 0.0743 & 0.21 & 0.21 & 0.24\\
64:1  & 0.0657 & 0.0669 & 0.1010 &  &  & \\
128:1 & 0.1140 & 0.1186 & 0.1400 &  &  & \\
\hline
\end{tabular}
\vspace{-3mm}
\end{table}

\section{Conclusions}
We have proposed a highly flexible approach to structured matrix factorization, which allows specific structure to be promoted directly on the factors.  While our proposed formulation is not jointly convex in all of the variables, we have shown that under certain criteria a local minimum of the factorization is sufficient to find a global minimum of the product, offering the potential to solve the factorization using a highly reduced set of variables.

\bibliographystyle{IEEEtran}
\bibliography{../biblio/sparse,../biblio/vidal,../biblio/learning,../biblio/biomedical}

\vspace{-6mm}
\begin{IEEEbiography}
[{\includegraphics[width=1in,height=1.25in,clip,keepaspectratio]{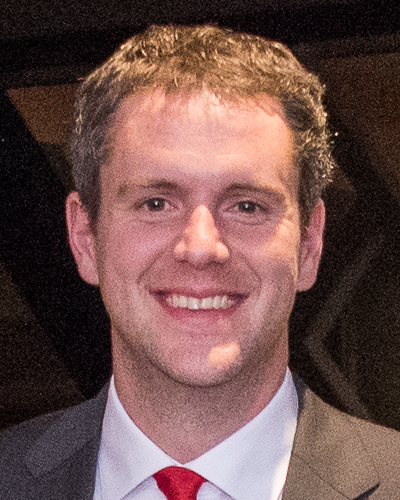}}]{Benjamin Haeffele}
is an Associate Research Scientist in the Center for Imaging Science at Johns Hopkins University.  His research interests include multiple topics in representation learning, matrix/tensor factorization, sparse and low-rank methods, optimization, phase recovery, subspace clustering, and applications of machine learning in medicine, neuroscience, and microscopy.  He received his Ph.D. in Biomedical Engineering at Johns Hopkins University in 2015 and his B.S. in Electrical Engineering from the Georgia Institute of Technology in 2006.  
\end{IEEEbiography}

\vspace{-5mm}
\begin{IEEEbiography}
[{\includegraphics[width=1in,height=1.25in,clip,keepaspectratio]{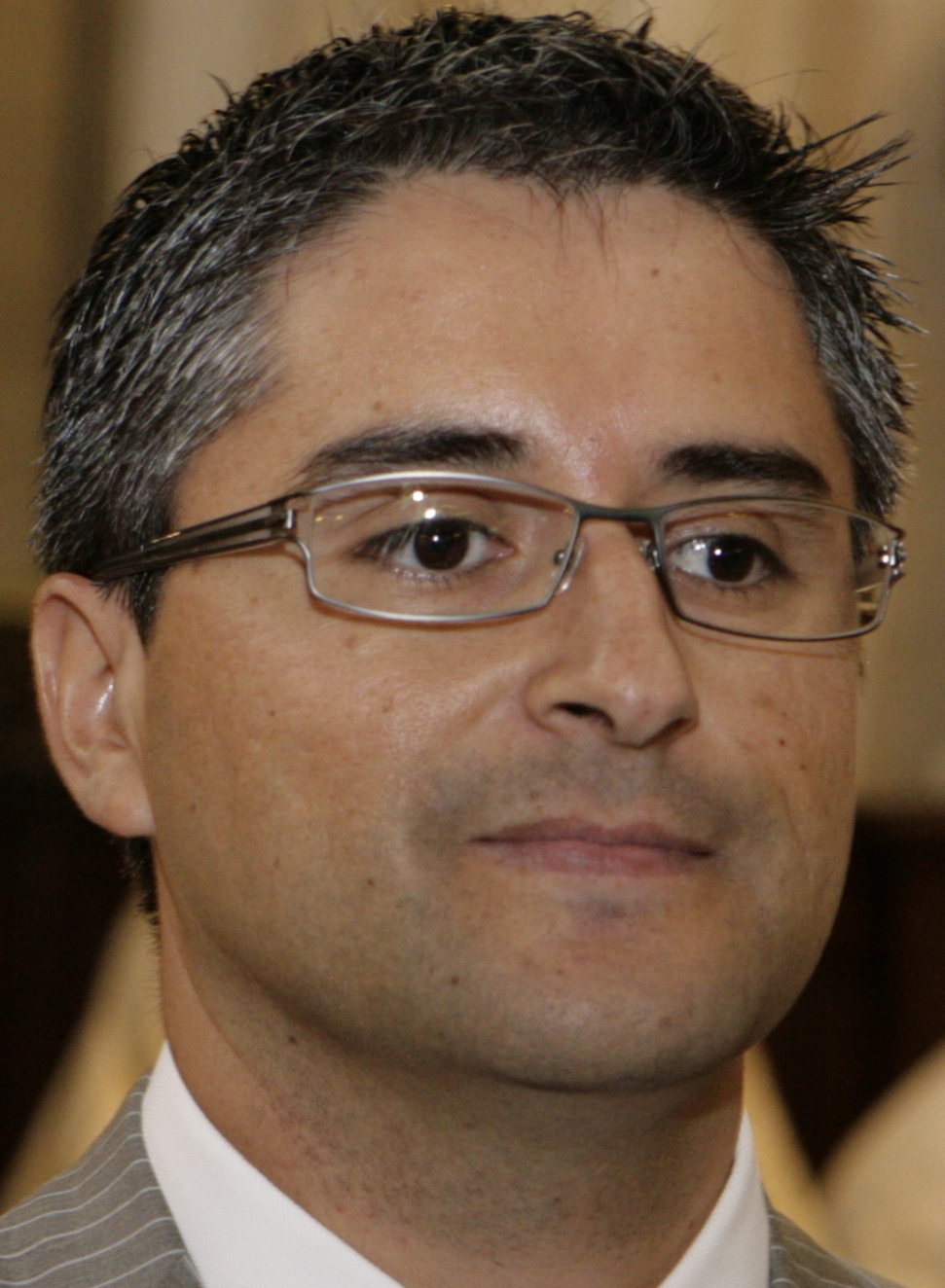}}]{Ren\'e Vidal} (F13) received the B.S. degree in electrical engineering from the Pontificia Universidad Catolica de Chile in 1997, and the M.S. and Ph.D. degrees in electrical engineering and computer sciences from the University of California at Berkeley, Berkeley, in 2000 and 2003, respectively. He has been a faculty member in the Department of Biomedical Engineering, Johns Hopkins University since 2004. He co-authored the book \textit{Generalized Principal Component Analysis} and has co-authored over 200 articles in biomedical image analysis, computer vision, machine learning, hybrid systems, and robotics. He is an Associate Editor of \textit{IEEE Transactions on Pattern Analysis and Machine Intelligence}, the \textit{SIAM Journal on Imaging Sciences}, and \textit{Computer Vision and Image Understanding}. He is a member of the ACM and the SIAM. 
\end{IEEEbiography}

\newpage
\appendices
\section{Upper Bounding the Polar}
In many cases it is possible to derive semidefinite relaxations of the polar problem that upper-bound the polar solution.  Specifically, note that \eqref{eq:mat_polar} is equivalently reformulated as
\begin{equation*}
	\Omega_{\theta}^{\circ}(Z) = \sup_{u,v} \tfrac{1}{2}\left< \begin{bmatrix} 0 & Z \\ Z^T & 0 \end{bmatrix}, \begin{bmatrix} uu^T & uv^T \\ vu^T & vv^T \end{bmatrix} \right> \ST \theta(u,v) \leq 1.
\end{equation*}
If we make the change of variables $M = [u; v] [u; v]^T$, the problem is equivalent to optimizing over rank-1 semidefinite matrices $M$, provided there exists an equivalent function $\theta'(M)$ to enforce the constraint $\theta(u,v) \leq 1$ if $M$ is a rank-1 matrix.  For example, consider the case $\theta(u,v) = \tfrac{1}{2}(\|u\|_2^2 + \|v\|_2^2)$. Since $\theta(u,v) = \tfrac{1}{2}(\Tr(uu^T) + \Tr(vv^T)) = \tfrac{1}{2}\Tr(M)$, we have the following equivalent problems
\begin{align}
	& \Omega_{\theta}^{\circ} (Z) = \max_{u,v} \{ u^T Z v \st \tfrac{1}{2}(\Tr(uu^T) + \Tr(vv^T)) \leq 1 \} \! \\
	&=  \max_{M \succeq 0} \Big\{ \tfrac{1}{2} \langle \begin{bmatrix} 0 ~~~~~ Z \\ Z^T ~~ 0 \end{bmatrix},M \rangle \st \rank(M)=1,  \tfrac{1}{2}\Tr(M) \leq 1\Big\}. \nonumber
\end{align}
By removing the $\textnormal{rank}(M)=1$ constraint we then have a convex optimization problem on the space of positive semidefinite matrices that upper-bounds the polar problem,
\begin{equation}
	\Omega_{\theta}^{\circ}(Z) \leq \max_{M\succeq 0} \tfrac{1}{2} \Big\langle \begin{bmatrix} 0 & Z \\ Z^T & 0 \end{bmatrix},M \Big\rangle \st \tfrac{1}{2}\Tr(M) \leq 1,
\end{equation}
and if the solution to the above problem results in a rank-1 solution matrix $M$, which in this special case of $\theta(u,v)$ can be shown to be true via the S-procedure \cite{Boyd:2004}, then the inequality becomes an equality and the desired $(u,v)$ factors can be recovered from the largest singular vector of $M$.

This same idea can be extended to more general $\theta(u,v)$ regularization functions \cite{bach13} and has been used in techniques such as sparse PCA \cite{dAspremont:SIAM2007}. A few example functions on vectors $x$ and their equivalent function on $xx^T$ are provided in Table \ref{table:SDP_relax}, and these equivalences can be used to derive $\theta'(M)$ functions from a given $\theta(u,v)$ function.

\begin{table}[h]
\caption{Example equivalent forms of polar problem regularizers.}
\label{table:SDP_relax}
\centering
\small
\begin{tabular}{c | c}
\hline
\multicolumn{2}{c}{$h(x)\ \ \ \ \ = \ \ \ \ \ \ H(xx^T)$} \\
\hline
$\|x\|_F^2$ & $\Tr(xx^T)$ \\
$\|x\|_1^2$ & $\|xx^T\|_1$ \\
$\|x\|_{\infty}^2$ & $\|xx^T\|_{\infty}$ \\
$\|Ax\|_1^2$ & $\|Axx^T A^T \|_1$ \\
$\|Ax\|_1\|x\|_F$ & $\sum_{i} \|(xx^T A)_i \|_F$ \\
$\delta_{\Re_+}(x)$ & $\delta_{\Re_+}(xx^T)$ \\
\hline
\end{tabular}
\end{table}

While, unfortunately, in general there is no guarantee that the solution to the semidefinite relaxation will be a rank-1 $M$ matrix for an arbitrary $\theta(u,v)$ regularization function, for some $\theta(u,v)$ one can prove bounds about how close the upper-bound of the polar obtained from the semidefinite relaxation will be to the true value of the polar \cite{bach13}.

\section{Solving the L1-TV Proximal Operator}
\label{sec:L1TV_prox}

Recall that for the applications we study here it is necessary to solve proximal operators for norms of the form given in \eqref{U_norm} and \eqref{V_norm}, optionally subject to non-negativity constraints.  From Theorem \ref{prop:l2prox} we have an efficient means to solve the proximal operator of \eqref{U_norm} and \eqref{V_norm} optionally subject to non-negativity constraints, as we can first solve a proximal operator of the form
\begin{equation}
	\label{eq:L1TV_prox}
	\argmin_x \tfrac{1}{2} \|y-x\|_F^2 + \nu_1 \|x\| + \nu_{TV} \|x\|_{TV} \ \textnormal{(optionally s.t.)} \ x \geq 0
\end{equation}
and then calculate the proximal operator of the $l_2$ norm with the solution to \eqref{eq:L1TV_prox} as the argument.  The only component that is missing to apply Algorithm \ref{algorithm_low_rank_fact} to the proposed class of regularization functions is how to solve the proximal operator of the $l_1$ norm plus the total-variation pseudo-norm (and optionally subject to non-negativity constraints).  To address this issue, note that \eqref{eq:L1TV_prox} is equivalent to solving the problem
\begin{align}
	\label{eq:gen_lasso}
	\argmin_x & \tfrac{1}{2} \|y-x\|_F^2 + \|G x\|_1 \ \textnormal{(optionally s.t.)} \ x \geq 0\\
	G &= \left[ \begin{array}{c} \nu_1 I \\ \nu_{TV} \Delta \end{array} \right],
\end{align}
where $\Delta$ is a matrix that takes the difference between neighboring elements of $x$.  Using standard Lagrangian duality arguments, such as those presented in \cite{Tibshirani:AnnStat2011}, it is easily shown that the dual problem of \eqref{eq:gen_lasso} is equivalent to
\begin{align}
	\label{eq:dual_L1TV}
	\min_{\gamma} \tfrac{1}{2} & \|y-G^T \gamma\|_F^2 \st \|\gamma\|_{\infty} \leq 1 \\
	\nonumber & \textnormal{(optionally s.t.)} \ y-G^T \gamma \geq 0
\end{align}
with the primal-dual relationship $x = y-G^T \gamma$.  To solve \eqref{eq:dual_L1TV}, we note that due to the special structure of $G$, one can calculate the global optimum of an individual element of $\gamma$ extremely quickly if the other elements in $\gamma$ are held constant.  Thus, to solve \eqref{eq:dual_L1TV} we cycle through making updates to the elements of $\gamma$ while checking the duality gap for convergence.  For the values of the $\nu$ parameters typically used in our experiments, this strategy converges after a relatively small number of cycles through the $\gamma$ variables, and due to the fact that the updates to the $\gamma$ variables themselves are very easy to calculate this strategy provides a very efficient means of solving the proximal operator in \eqref{eq:L1TV_prox}.

\section{Supplementary Proofs}
\label{supplementary}

\subsection{Proof of Proposition \ref{prop:omega_prop}}
\label{sec:omega_prop_proof}

\begin{proof}
For brevity of notation, we will notate the optimization problem in \eqref{eq:omega_mat_def} as
\begin{equation}
	\Omega_{\theta} (X) \equiv \inf_{U,V:UV^T=X} \sum_{i=1}^r \theta(U_i,V_i),
\end{equation}
where recall that $r$ is variable although it is not explicitly notated.

\begin{enumerate}
\item By the definition of $\Omega_\theta$ and the fact that $\theta$ is always non-negative, we always have $\Omega_{\theta}(X) \geq 0 \ \ \forall X$.  Trivially, $\Omega_{\theta}(0) = 0$ since we can always take $(U,V) = (0,0)$ to achieve the infimum.  For $X \neq 0$, note that $\sum_{i=1}^r \theta(U_i,V_i) > 0$ for any $(U,V) \ \ \textnormal{s.t.} \ \ UV^T = X$ and $r$ finite.  Property 5 shows that the infimum can be achieved with $r$ finite, completing the result.

\item The result is easily seen from the positive homogeneity of $\theta$,
\begin{align}
	\Omega_{\theta}(\alpha X) = &\inf_{U,V:UV^T=\alpha X} \sum_{i=1}^r \theta(U_i,V_i) =  \\
	&\inf_{U,V:(\alpha^{-1/2} U)(\alpha^{-1/2} V)^T = X} \sum_{i=1}^r \theta(U_i,V_i) = \nonumber\\
	&\inf_{\bar U, \bar V: \bar U \bar V^T=X} \alpha \sum_{i=1}^r \theta(\bar U_i, \bar V_i) = \alpha \Omega_{\theta}(X), \nonumber
\end{align}
where the equality between the middle and final lines is simply due to the change of variables $(\bar U,\bar V) = (\alpha^{-1/2} U,\alpha^{-1/2} V)$ and the fact that $\theta$ is positively homogeneous with degree 2.

\item If either $\Omega_{\theta}(X) = \infty$ or $\Omega_{\theta}(Z)=\infty$ then the inequality is trivially satisfied.  Considering any $(X,Z)$ pair such that $\Omega_{\theta}$ is finite for both $X$ and $Z$, for any $\epsilon > 0$ let $UV^T=X$ be an $\epsilon$ optimal factorization of $X$.  Specifically, $\sum_{i=1}^{r_x} U_i V_i^T= X$ and $\sum_{i=1}^{r_x} \theta(U_i,V_i) \leq \Omega_{\theta}(X) + \epsilon$.  Similarly, let $\bar U \bar V^T$ be an $\epsilon$ optimal factorization of $Z$; $\sum_{i=1}^{r_z} \bar U_i \bar V_i^T =Z$ and $\sum_{i=1}^{r_z} \theta(\bar U_i, \bar V_i) \leq \Omega_{\theta}(Z) + \epsilon$.  Taking the horizontal concatenation $([U \ \bar U],[V \ \bar V])$ gives $[U \ \bar U][V \ \bar V]^T = X+Z$, and $\Omega_{\theta}(X+Z) \leq \sum_{i=1}^{r_x} \theta(U_i,V_i) + \sum_{j=1}^{r_z} \theta(\bar U_j,\bar V_j) \leq \Omega_{\theta}(X) + \Omega_{\theta}(Z) + 2\epsilon$.  Letting $\epsilon$ tend to 0 completes the result.

\item Convexity is given by the combination of properties 2 and 3.

\item Let $\Gamma \subset \Re^{DN}$ be defined as a subset of rank-1 matrices
\begin{equation}
\Gamma = \{ X : \exists uv^T, \ uv^T=X, \ \theta(u,v) \leq 1 \}.
\end{equation}
Note that because of property 3 in Definition \ref{def:rank1_meas}, for any non-zero $X \in \Gamma$ there exists $\mu \in [1,\infty)$ such that $\mu X$ is on the boundary of $\Gamma$, so $\Gamma$ and its convex hull are compact sets. 

Further, note that $\Gamma$ contains the origin by definition, so as a result, we can define $\sigma_{\Gamma}$ to be a gauge function on the convex hull of $\Gamma$, 
\begin{equation}
\sigma_{\Gamma}(X) = \inf_{\mu} \{\mu : \mu \geq 0, \ X \in \mu \ \textnormal{conv} (\Gamma) \}.
\end{equation}
Since the infimum w.r.t. $\mu$ is linear and constrained to a compact set, it must be achieved.  Therefore, there must exist $\mu_{opt} \geq 0$, $\{ \beta \in \Re^{DN} : \beta_i \geq 0 \ \forall i, \ \sum_{i=1}^{DN} \beta_i = 1 \}$, and $\{(\bar U,\bar V) : (\bar U_i, \bar V_i) \in \Gamma \ \forall i\in [1,DN]\}$ such that $X = \mu_{opt} \sum_{i=1}^{DN} \beta_i \bar U_i \bar V_i^T$ and $\sigma_{\Gamma}(X) = \mu_{opt}$.

Combined with positive homogeneity, this gives that $\sigma_{\Gamma}$ can be defined identically to $\Omega_{\theta}$, but with the additional constraint $r \leq DN$,  
\begin{equation}
\label{eq:sigma_gamma_equiv}
\sigma_{\Gamma}(X) \equiv \inf_{r \in [1,DN]} \inf_{\substack{U \in \Re^{D \times r} \\ V \in \Re^{N \times r} \\ UV^T=X}} \sum_{i=1}^r \theta(U_i,V_i).
\end{equation}
This is seen by noting that we can take $(U_i, V_i) = ((\mu_{opt} \beta_i)^{1/2} \bar U_i, (\mu_{opt} \beta_i)^{1/2} \bar V_i)$ to give
\begin{equation}
\begin{split}
\mu_{opt}  = \sigma_{\Gamma}(X) &\leq \sum_{i=1}^{DN} \theta(U_i,V_i) = \mu_{opt} \sum_{i=1}^{DN} \beta_i \theta(\bar U_i,\bar V_i)  \\
&\leq \mu_{opt} \sum_{i=1}^{DN} \beta_i = \mu_{opt},
\end{split}
\end{equation}
and shows that a factorization of size $r \leq \card(X)$ which achieves the infimum $\mu_{opt} = \sigma_{\Gamma}(X)$ must exist.  Clearly from \eqref{eq:sigma_gamma_equiv} $\sigma_{\Gamma}$ is very similar to $\Omega_{\theta}$.  To see that the two functions are, in fact, the same function, we use the one-to-one correspondence between convex functions and their Fenchel duals.  In particular, in the proof of \ref{prop:omega_subgrad}) we derive the Fenchel dual of $\Omega_{\theta}$ and using an identical series of arguments one can derive an identical Fenchel dual for $\sigma_{\Gamma}$.  The result is then completed as the one-to-one correspondence between convex functions and their duals gives that $\sigma_{\Gamma}(X) = \Omega_{\theta}(X)$. 

\item Note that from properties 1-3 we have established all of the requirements for a norm, except for invariance w.r.t. negative scaling, i.e., we must show that $\Omega_{\theta}(-X) = \Omega_{\theta}(X)$.  To see that this will be true, consider the case where $\theta(u,v) = \theta(-u,v)$.  This gives, 
\begin{equation}
	\begin{split}
	\Omega_{\theta}(-X) &= \inf_{U,V:UV^T = -X} \sum_{i=1}^r \theta(U_i,V_i) = \\
	&\inf_{U,V:(-U)V^T=X} \sum_{i=1}^r \theta(U_i,V_i) = \\
	&\inf_{U,V: \bar U V^T=X} \sum_{i=1}^r \theta(\bar U_i,V_i) = \Omega_{\theta}(X),
	\end{split}
\end{equation}
where the last equality comes from the change of variables $\bar U = -U$ and the fact that $\theta(u,v) = \theta(-u,v)$.  An identical series of arguments shows the result for the case when $\theta(u,v) = \theta(u,-v)$.

\item Because $\Omega_{\theta}$ is a convex function, we have from Fenchel duality that $W \in \partial \Omega_\theta(X)$ if and only if $\left<W,X\right> = \Omega_\theta(X) + \Omega_\theta^*(W)$ where $\Omega_\theta^*$ denotes the Fenchel dual of $\Omega_\theta$.  To derive the Fenchel dual, recall the definition $\Omega_\theta^*(W) \equiv \sup_Z \left<W,Z\right> - \Omega_\theta(Z)$, giving the equivalent problem
\begin{align}
	\label{eq:fenchel_sum}
	\Omega_\theta^*(W) &= \sup_{r \in \Nplus} \sup_{ \substack{U \in \Re^{D \times r} \\ V \in \Re^{N \times r}} } \left<W,UV^T\right> - \sum_{i=1}^r \theta(U_i,V_i) \nonumber \\ 
	&= \sup_{r \in \Nplus} \sup_{ \substack{U \in \Re^{D \times r} \\ V \in \Re^{N \times r}}} \sum_{i=1}^r \left( U_i^T W V_i^T - \theta(U_i,V_i) \right)
\end{align}
Note that if there exists $(u,v)$ such that $u^T W v > \theta(u,v)$, then $\Omega^*_\theta(W) = \infty$ since \eqref{eq:fenchel_sum} grows unbounded by taking $(\alpha u, \alpha v)$ as $\alpha \rightarrow \infty$.  Conversely, if $u^T W v \leq \theta(u,v)$ for all $(u,v)$ then all of the terms in the summation in \eqref{eq:fenchel_sum} will be non-positive, so the superemum is achieved by taking $(U,V) = (0,0)$.  As a result we have,
\begin{equation}
	\label{eq:omega_fenchel}
	\Omega_\theta^*(W) = \left\{ \begin{array}{cc} 0 & u^T W v \leq \theta(u,v) \ \ \forall(u,v) \\ \infty & \textnormal{otherwise}. \end{array} \right.
\end{equation}
Combining this result with the characterization of the subgradient from Fenchel duality above completes the result.

\item First, we show that $W \in \partial \Omega_\theta(X)$.  By contradiction, assume $W \notin \partial \Omega_\theta(X)$.  From the fact that $u^T W v \leq \theta(u,v)$ $\forall (u,v)$ and \eqref{eq:omega_fenchel} we have that $\Omega_\theta^*(W) = 0$.  Further, we have $\sum_{i=1}^r \theta(U_i,V_i) = \sum_{i=1}^r U_i^T W V_i = \left<W, X\right> < \Omega_\theta(X) + \Omega_\theta^*(W) = \Omega_\theta(X)$, where the strict inequality comes from Fenchel duality and the assumption that $W \notin \partial \Omega_\theta(X)$.  However, because $UV^T=X$ is a factorization of $X$ this violates the definition of $\Omega_\theta$, producing the contradiction.  To see that $UV^T$ is an optimal factorization of $X$, we again proceed by contradiction and assume that $UV^T$ is not an optimal factorization.  This gives $\Omega_\theta(X) < \sum_{i=1}^r \theta(U_i,V_i) = \sum_{i=1}^r U_i^T W V_i = \left< W,X \right> = \Omega_\theta(X) + \Omega^*_\theta(W) = \Omega_\theta(X)$, producing the contradiction.
\end{enumerate}
\end{proof}

\subsection{Proof of Proposition \ref{prop:1st_order}}
\label{sec:1st_order_proof}
\begin{proof}
Note that condition \ref{cond:Q} is trivially satisfied, as this is simply the first-order optimality requirement w.r.t. $Q$.  Thus, we are left with showing that condition \ref{cond:UV1} is also satisfied.  Let $\theta_p (u,v) = \sigma_u(u) \sigma_v(v)$ and $\theta_s(u,v) = \tfrac{1}{2} (\sigma_u(u)^2 + \sigma_v(v)^2)$.  Note that the following are easily shown from basic properties of subgradients of gauge functions
	\begin{equation}
		\label{eq:gauge_subgrad}
		\begin{split}
		&\left<u, \partial_u \theta_p (u,v) \right> = \left<v, \partial_v \theta_p (u,v) \right> = \theta_p(u,v) \\
		&\left<u, \partial_u \theta_s (u,v) \right> = \sigma_u(u)^2 \\
		&\left<v, \partial_v \theta_s (u,v) \right> = \sigma_v(v)^2.
		\end{split}
	\end{equation}
	The first-order optimality conditions w.r.t. $U_i$ and $V_i$ are
	\begin{align}
		0 &\in \nabla_X \ell (Y,\tilde{U}\tilde{V}^T,\tilde{Q}) \tilde{V}_i + \lambda \partial_u \theta(\tilde{U}_i,\tilde{V}_i) \\
		0 &\in \nabla_X \ell (Y,\tilde{U}\tilde{V}^T,\tilde{Q})^T \tilde{U}_i + \lambda \partial_v \theta(\tilde{U}_i,\tilde{V}_i).
	\end{align}
	Left multiplying the two above inclusions by $\tilde{U}_i^T$ and $\tilde{V}_i^T$, respectively gives
	\begin{align}
		0 &\in \tilde{U}_i^T \nabla_X \ell  (Y,\tilde{U}\tilde{V}^T,\tilde{Q}) \tilde{V}_i + \lambda \left< \tilde{U}_i, \partial_u \theta(\tilde{U}_i,\tilde{V}_i) \right>\\
		0 &\in \tilde{V}_i^T \nabla_X \ell (Y,\tilde{U}\tilde{V}^T,\tilde{Q})^T \tilde{U}_i + \lambda \left< \tilde{V}_i, \partial_v \theta(\tilde{U}_i,\tilde{V}_i) \right>.
	\end{align}
	Since this is true for all $(\tilde{U}_i,\tilde{V}_i)$ pairs, substituting \eqref{eq:gauge_subgrad} and rearranging terms  shows that condition \ref{cond:UV1} of Corollary \ref{cor:global_min_conds} is satisfied for both $\theta_s$ and $\theta_p$, completing the result. 
\end{proof}

\subsection{Proof of Proposition \ref{prop:approx_bound}}
\label{sec:approx_bound_proof}
\begin{proof} Let $\tilde{X} = \tilde{U}\tilde{V}^T$.  From the (strong) convexity of $\ell$, we have that, for any $W \in \partial_Q \ell(Y,\tilde{X},\tilde{Q})$,
	\begin{align*}
		\ell(  Y,   \hat{X},\hat{Q}) 
		& \geq  \ell(Y,\tilde{X},\tilde{Q}) + \tfrac{m_X}{2} \|\tilde{X} - \hat{X}\|_F^2 + \tfrac{m_Q}{2} \|\tilde{Q} - \hat{Q}\|_F^2 \\
		& + \left<\nabla_X \ell(Y,\tilde{X},\tilde{Q}),\hat{X}-\tilde{X} \right> + \big\langle W,\hat{Q}-\tilde{Q} \big \rangle.
	\end{align*}
	From condition \ref{cond:Q} of Corollary \ref{cor:global_min_conds} we can take 
	$W \!=\! 0$, and from condition \ref{cond:UV1} we have $\big\langle-\nabla_X \ell(Y,\tilde{X},\tilde{Q}),\tilde{X} \big\rangle = \lambda \sum_i \theta(\tilde{U}_i,\tilde{V}_i)$. Applying these facts and rearranging terms gives 
	\begin{equation}
		\label{eq:polar_bound}
		\begin{split}
		\ell(Y,\tilde{X},\tilde{Q}) 
		&- \ell(Y,\hat{X},\hat{Q}) + \lambda \sum_i \theta(\tilde{U}_i,\tilde{V}_i) \\
		&\leq \lambda \left< \tfrac{-1}{\lambda} \nabla_X \ell(Y,\tilde{X},\tilde{Q}),\hat{X} \right> \\
		&-\tfrac{m_X}{2}\|\tilde{X}-\hat{X}\|_F^2 - \tfrac{m_Q}{2} \| \tilde{Q}-\hat{Q} \|_F^2.
		\end{split}
	\end{equation}
	Recall that from polar duality we also have $\forall (X,Z)$, $\left<X,Z\right> \leq \Omega_{\theta}(X) \Omega_{\theta}^{\circ}(Z)$, which implies 
	\begin{equation*}
		\left< \tfrac{-1}{\lambda} \nabla_X \ell(Y,\tilde{X},\tilde{Q}),\hat{X} \right> \leq \Omega_{\theta}(\hat{X}) \Omega_{\theta}^{\circ}(\tfrac{-1}{\lambda} \nabla_X \ell(Y,\tilde{X},\tilde{Q})).
	\end{equation*}
	Substituting this into \eqref{eq:polar_bound} we obtain
	\begin{align*}
		f(\tilde U, \tilde V, \tilde Q) - \ell(Y,\hat{X},\hat{Q})
		&\leq \lambda \Omega_{\theta}(\hat{X}) \Omega_{\theta}^{\circ}(\tfrac{-1}{\lambda} \nabla_X \ell(Y,\tilde{X},\tilde{Q})) \\
		&-\tfrac{m_X}{2}\|\tilde{X}-\hat{X}\|_F^2 - \tfrac{m_Q}{2} \| \tilde{Q}-\hat{Q} \|_F^2.
	\end{align*}
	Subtracting $\lambda \Omega_{\theta}(\hat{X})$ from both sides of the above inequality completes the result.
\end{proof}

\subsection{Proof of Theorem \ref{thm:non_inc_path}}
\label{sec:non_inc_path_proof}
\begin{proof} Clearly if $(U_{init},V_{init},Q_{init})$ is not a local minimum, then we can follow a decreasing path until we reach a local minimum.  Having arrived at a local minimum, $(\tilde{U},\tilde{V},\tilde{Q})$, if $(\tilde{U}_i,\tilde{V}_i)=(0,0)$ for any $i \in [r]$ then from Theorem \ref{thm:mat_fact_0_min} we must be at a global minimum.  Similarly, if for any $i_0 \in [r]$ we have $\tilde{U}_{i_0}\tilde{V}_{i_0}^T = 0$ then we can scale the columns $(\alpha \tilde{U}_{i_0},\alpha \tilde{V}{i_0})$ as $\alpha$ goes from $1 \rightarrow 0$ without increasing the objective function.  Once $\alpha=0$ we will then have an all zero column in the factorization, so from Theorem \ref{thm:mat_fact_0_min} we are either at a global minimum or a local descent direction must exist from that point.  We are thus left to show that the Algorithm continues to decrease the objective from any local minima such that $\tilde U_i,\tilde{V}_i^T \neq 0$ for all $i \in [r]$.

We first note that appending an all-zero column to $(U,V)$ does not change the objective function, and once the all-zero column has been appended we are again either at a global minimum from Theorem \ref{thm:mat_fact_0_min} or a local descent direction must exist.  As a result, the only part left to show is that the step of scaling the factors by $(1+\beta)^{1/2}$ does not increase the objective function and that $r$ will never increase beyond $\max \{ DN+1,r_{init} \}$.

To see this, consider the case where a non-zero vector, $\beta \in \Re^r / 0$, exists such that $\sum_{i=1}^r \beta_i \tilde U_i \tilde V_i^T = 0$.  Without loss of generality, assume $\beta$ is scaled so that the minimum value in $\beta$ is equal to $-1$ and let $U_{1\pm\epsilon \beta} = [(1\pm\epsilon \beta_1)^{1/2}\tilde U_1, \ldots, (1\pm\epsilon \beta_r)^{1/2} \tilde U_r]$ and $V_{1\pm\epsilon \beta} = [(1\pm\epsilon \beta_1)^{1/2}\tilde V_1, \ldots, (1\pm\epsilon \beta_r)^{1/2} \tilde V_r]$ for some $\epsilon > 0$.  From this, because we are at a local minimum, then similar to the proof of Theorem \ref{thm:mat_fact_0_min} for some $\delta>0$ we have $\forall \epsilon \in (0, \delta)$ that 
\begin{equation}
	\label{eq:beta_search}
	\begin{split}
	\ell(Y,& U_{1 \pm \epsilon \beta} V_{1 \pm \epsilon \beta}^T, \tilde Q)\! + \! \lambda \sum_{i=1}^r\theta((1 \pm \epsilon \beta_i)^{1/2} \tilde U_i,(1 \pm \epsilon \beta_i)^{1/2} \tilde V_i) \\
	&= \ell(Y, \tilde U \tilde V^T,\tilde Q) + \lambda \sum_{i=1}^r (1 \pm \epsilon \beta_i) \theta(\tilde U_i,\tilde V_i) \\
	& \geq \ell(Y,\tilde U \tilde V^T,\tilde Q) + \lambda \sum_{i=1}^r \theta (\tilde U_i,\tilde V_i),
	\end{split}
\end{equation}
where the equality is due to the construction of $\beta$ and $(U_{1 \pm \epsilon \beta},V_{1 \pm \epsilon \beta})$ along with the positive homogeneity of $\theta$.  Rearranging terms gives
\begin{align}
	0 \leq \lambda \epsilon \sum_{i=1}^r \beta_i \theta(\tilde U_i, \tilde V_i) \leq 0 \\
	\label{eq:0_theta_sum}
	\implies \sum_{i=1}^r \beta_i \theta(\tilde U_i,\tilde V_i) = 0.
\end{align}

From this result, note that by using the same expansion as in \eqref{eq:beta_search}, \eqref{eq:0_theta_sum} gives that the value of $f(U_{1+\epsilon \beta},V_{1+\epsilon \beta},\tilde {Q})$ is unchanged as $\epsilon$ goes from $0 \rightarrow 1$, and by construction, $(U_{1+\beta},V_{1+\beta})$ contains an all-zero column. Thus, $(U_{1+\beta},V_{1+\beta},\tilde{Q})$ is either a global minimum due to Theorem \ref{thm:mat_fact_0_min} or a local descent direction must exist.  The result is completed that noting that a non-zero $\beta$ vector is guaranteed to always exists whenever $r > DN$.
\end{proof}

\subsection{Proof of Proposition \ref{prop:l2prox}}
\label{sec:l2prox_proof}
\begin{proof}
Note that we can equivalently solve the proximal operator by introducing another variable subject to an equality constraint,
\begin{equation}
\prox_{\psi}(y) = \argmin_{x,z : x=z } \frac{1}{2} \|y-x\|_2^2+\lambda \sigma_C(z) + \lambda_2 \| x \|_2.
\end{equation}
This gives the Lagrangian
\begin{equation}
L(x,z,\gamma) = \frac{1}{2}\|y-x\|_2^2+ \lambda \sigma_C( z ) + \lambda_2 \| x \|_2 + \left<\gamma,x-z\right>.
\end{equation}
Minimizing the Lagrangian w.r.t. $z$, we obtain the negative Fenchel dual of $\lambda \sigma_C$, which is an indicator function on the polar set
\begin{equation}
\begin{split}
\min_z & \lambda \sigma_C(z) - \left<\gamma, z \right> = -(\lambda \sigma_C(\gamma))^* \\ & = 
\left\{\begin{array}{cc} 0 & \sigma_C^{\circ}(\gamma) \leq \lambda \\ -\infty & \textnormal{otherwise.} \end{array} \right.
\end{split}
\end{equation}
Minimizing the Lagrangian w.r.t. $x$, we obtain
\begin{align}
&\min_x \frac{1}{2}\|y-x\|_2^2+\lambda_2 \| x \|_2 + \left<\gamma, x \right> = \label{prox_lagrang:1} \\
&\min_x \frac{1}{2}\|y-\gamma-x\|_2^2+\lambda_2 \| x \|_2 + \left<\gamma, y\right>-\frac{1}{2}\|\gamma\|_2^2 = \label{prox_lagrang:2}\\
&\left\{ \begin{array}{cc} \frac{1}{2} \| y \|_2^2 &  \| y-\gamma \|_2 \leq \lambda_2 \\
	\frac{1}{2} \| y \|_2^2 -\frac{1}{2}\left( \| y-\gamma \|_2-\lambda_2 \right)^2& \text{else} \end{array}
	\right. \label{prox_lagrang:3}
\end{align}
where the minimum value for $x$ is achieved at $x = \prox_{\lambda_2 \| \cdot \|_2}(y-\gamma)$.  The relation between 
\eqref{prox_lagrang:1} and \eqref{prox_lagrang:2} is easily seen by expanding the quadratic terms, while the relation
between \eqref{prox_lagrang:2} and \eqref{prox_lagrang:3} is given by the fact that \eqref{prox_lagrang:2} is the standard 
proximal operator for the $l_2$ norm plus terms that do not depend on $x$.  Plugging the solution of the proximal operator
of the $l_2$ norm (noting that the $l_2$ norm is self dual) into \eqref{prox_lagrang:2} gives \eqref{prox_lagrang:3}.  The dual of the original problem thus becomes maximizing \eqref{prox_lagrang:3}
w.r.t. $\gamma$ subject to $\sigma_C^{\circ}(\gamma)  \leq \lambda$.
We note that \eqref{prox_lagrang:3} is monotonically non-decreasing as $\| y-\gamma \|_2$ decreases, so the dual
problem is equivalent to minimizing $\| y-\gamma \|_2$ (or equivalently $\|y - \gamma \|_2^2$) subject to $\sigma^{\circ}_C(\gamma) \leq \lambda$.  Combining these results with the primal-dual relation 
$x=\prox_{\lambda_2 \| \cdot \|_2}(y-\gamma)$, we have
\begin{equation}
	\prox_{\psi}(y) = \prox_{\lambda_2 \|\cdot\|_2} (y-\gamma_{opt}),
\end{equation}
where $\gamma_{opt}$ is the solution to the optimization problem
\begin{equation}
	\gamma_{opt} = \argmin_{\gamma} \|y-\gamma\|_2^2 \st \sigma^{\circ}_C(\gamma) \leq \lambda.
\end{equation}
Recall that from the Moreau identity, the proximal operator of the $\lambda \sigma_C$ gauge is given by
\begin{equation}
	\prox_{\lambda \sigma_C} (y) = y - \argmin_{\gamma} \|y-\gamma\|_2^2 \st \sigma_C^{\circ}(\gamma) \leq \lambda,
\end{equation}
which completes the result, as the above equation implies $\prox_{\lambda \sigma_C}(y) = y-\gamma_{opt}$.
\end{proof}

\onecolumn
\newpage
\section{Hyperspectral Reconstruction Results}
\label{sec:hyperspec_imgs}
\begin{figure*}[htb]
\centering
\includegraphics[width=\textwidth]{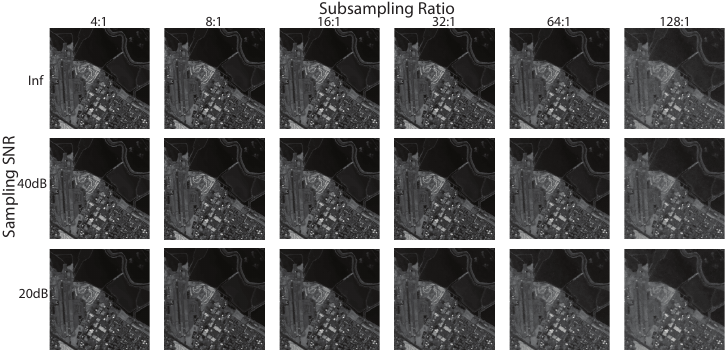}\\
\caption[Hyperspectral compressed recovery results.] {Hyperspectral compressed recovery results. Example reconstructions from a single spectral band ($i=50$) under different subsampling
ratios and sampling noise levels. Compare with \citet[Fig. 2]{golbabaee12}.}
\label{hyperspect_examples}
\end{figure*} 

\end{document}